\newtheorem{theorem}{Theorem}
\newtheorem{proposition}{Proposition}
\newtheorem{lemma}{Lemma}
\newtheorem{remark}{Remark}
\newtheorem{definition}{Definition}
\newtheorem{assumption}{Assumption}
\definecolor{gray}{rgb}{0.5,0.5,0.5}
\def\BibTeX{{\rm B\kern-.05em{\sc i\kern-.025em b}\kern-.08em
		T\kern-.1667em\lower.7ex\hbox{E}\kern-.125emX}}
\begin{document}
	\title{Global Convergence of Online Identification for Mixed Linear Regression}
	\author{Yujing Liu, Zhixin Liu, \IEEEmembership{Member, IEEE} and Lei Guo, \IEEEmembership{Fellow, IEEE}
		\thanks{This work was supported by Natural Science Foundation of China under Grants T2293770 and 12288201, the National Key R\&D Program of China under Grant 2018YFA0703800, the Strategic Priority Research Program of Chinese Academy of Sciences under Grant No. XDA27000000.}
		\thanks{Yujing Liu, Zhixin Liu and Lei Guo are with the Key Laboratory of Systems and Control, Academy of Mathematics and Systems Science, Chinese Academy of Sciences, Beijing 100190, China, and also with the School of Mathematical Science, University of Chinese Academy of Sciences, Beijing 100049, China. (E-mails: liuyujing@amss.ac.cn, lzx@amss.ac.cn, Lguo@amss.ac.cn).}}
	
	\maketitle	
	\begin{abstract}
		Mixed linear regression (MLR) is a powerful model for characterizing nonlinear relationships by utilizing a mixture of linear regression sub-models. The identification of MLR is a fundamental problem, where most of the existing results focus on off-line algorithms, rely on independent and identically distributed (i.i.d) data assumptions, and provide local convergence results only. This paper investigates the online identification and data clustering problems for two basic classes of MLRs, by introducing two corresponding new online identification algorithms based on the expectation-maximization (EM) principle. It is shown that both algorithms will converge globally without resorting to the traditional i.i.d data assumptions. The main challenge in our investigation lies in the fact that the gradient of the maximum likelihood function does not have a unique zero, and a key step in our analysis is to establish the stability of the corresponding differential equation in order to apply the celebrated Ljung's ODE method. It is also shown that the within-cluster error and the probability that the new data is categorized into the correct cluster are asymptotically the same as those in the case of known parameters. Finally, numerical simulations are provided to verify the effectiveness of our online algorithms.		
	\end{abstract}
	
	\begin{IEEEkeywords}
		Mixed linear regression, online identification, non-i.i.d data, global convergence, data clustering
	\end{IEEEkeywords}
	
	\section{Introduction}\label{sec:introduction}
	Mixed linear regression (MLR) has been widely studied in the fields of system identification, statistical learning, and computer science due to its convenience and effectiveness in capturing the non-linearity of uncertain system dynamics.
	It was first proposed as a generalization of switching regressions \cite{quandt1978estimating} and has been found numerous applications including trajectory clustering \cite{gaffney1999trajectory}, health care analysis \cite{deb2000estimates}, face recognition \cite{chai2007locally}, population clustering \cite{ingrassia2014model}, drug sensitivity prediction \cite{li2019drug} and the relationship between genes and disease phenotype \cite{sun2022robust,chang2021supervised}.
	In MLR, each input-output data belongs to one of the uncertain linear regression models or sub-models but we don't know it comes from which sub-model, i.e., the label of data is unknown to us.	
	Thus, how to construct algorithms based on the observed data to identify the unknown parameters and to categorize the newly observed data into correct clusters is of fundamental importance for learning and prediction of the MLR. 
	Besides, this problem is closely related to the identification problem of piece-wise affine models (cf., \cite{bemporad2005bounded,nakada2005identification,paoletti2007identification}), where the mixing laws depend on the system states rather than on random processes as in the MLR.
	
	Though the MLR identification problem is NP-hard if there is no assumption imposed on the properties of the observed data \cite{yi2014alternating} , it still attracts much attention from researchers in diverse fields under additional assumptions such as Gaussian and the independent and identically distributed (i.i.d) assumptions on the data.
	The commonly used methods include the tensor-based method (cf., \cite{anandkumar2014tensor,chaganty2013spectral,sedghi2016provable}), the optimization-based method \cite{zhong2016mixed} and the expectation-maximization (EM) method \cite{dempster1977maximum}. 
	In the tensor-based method  (cf., \cite{pearson1894contributions}), an efficient spectral decomposition of the observed tensor matrix is needed so that the subspace spanned by true parameters is included in the subspace spanned by eigenvectors of the tensor matrix.
	By grid searching with a sufficiently small grid resolution in this subspace, the exact recovery guarantee for the MLR problem is given (see e.g., \cite{yi2014alternating,li2018learning}) , but this method suffers from high sample complexity and high computational complexity. 
	In the optimization-based method, minimizing the non-convex mean square error of the MLR problem can be converted into the optimization of some objective functions with nice properties such as convexity or smoothness \cite{zhong2016mixed}.
	However, solving the optimization problem is essentially to optimize a nuclear norm function with linear constraints \cite{chen2014convex} or to solve a mixed integer programming problem \cite{wang2019convergence}, both of which may lead to high computational cost.
	The EM algorithm \cite{dempster1977maximum}, including E-step and M-step, is a general technique to estimate unknown parameters with hidden random variables. 
	The E-step is used to evaluate the expectation of the log-likelihood function for the complete data set based on the current parameter estimate, while the M-step is used to update the estimate by solving the corresponding maximization problem.
	Compared with the other two methods, the lower computational cost of the EM algorithm makes it more attractive to solve MLR problems in practice (cf., \cite{gaffney1999trajectory,cohen1980influence}).
	
	In the theoretical aspect, there has been remarkable progress on solving the MLR identification problems by using the EM algorithm. 
	In the symmetric case where the parameters of the MLR model satisfy $\beta_1^*=-\beta_2^*$, this symmetric prior information can be used to simplify the design and analysis of EM algorithms (cf., \cite{netrapalli2013phase}).
	For example, Balakrishnan et al. \cite{balakrishnan2017statistical} studied the population EM algorithm and obtained local convergence results under the assumption that the regressor is i.i.d with a standard Gaussian distribution, and later Klusowski et al. \cite{klusowski2019estimating} proved a larger basin of attraction for local convergence.
	Kwon et al. \cite{kwon2019global} established the global convergence result of the population EM by verifying that both the angle and distance between the estimate and the true parameter are decreasing using Stein's Gaussian lemma under the i.i.d data assumptions, and thus overcame the difficulty of non-unique optimal parameters for MLR problems.
	For EM algorithm with finite number of samples, convergence results in the probability sense can also be found in (cf., \cite{kwon2019global,kwon2021minimax}).	
	In the general case where there is no symmetric assumptions on the MLR model, the local convergence of the population EM algorithm is established by verifying the convexity of a small neighborhood of the true parameters under the i.i.d Gaussian data assumption (cf., \cite{kwon2020converges, zhong2016mixed, klusowski2019estimating}).  
	Later, Zilber et al. proposed a novel EM-type algorithm in \cite {zilber2023imbalanced} and provided an upper bound on the estimation error with arbitrary initialization.
	There seems to be only one paper which is devoted to relax the i.i.d standard Gaussian assumption on the data \cite{li2018learning}, where the regressor is allowed to follow also the Gaussian distributions but with different covariances.	
	
	To summarize, all the above mentioned theoretical investigations have several common features. 
	Firstly, the regressors are required to be i.i.d with a Gaussian distribution, which is hard to be satisfied in many important situations, e.g., the forecasting of time series in economics \cite{hyndman2018forecasting}, multiple-point analysis in geostatistics \cite{honarkhah2010stochastic} and feedback control of stochastic uncertain systems \cite{lei2020feedback}.			
	Secondly, the computational algorithms are of off-line character.
	In fact, the off-line population EM algorithm is used in most previous investigations, which requires infinite number of samples at each iteration. The EM algorithm with finite number of samples has also been proposed (cf., \cite{balakrishnan2017statistical}, \cite{kwon2019global}), but the computational approach still remains off-line and the convergence results are derived in high probability sense only.
	In contrast to off-line algorithm, the online algorithm is desirable in many practical situations, which is updated conveniently based on both the current estimate and new input-output data, without requiring storage of all the old data and with lower computational cost.
	Thirdly, there is no global convergence results in general.
	The only exception is the symmetric MLR problem with the population EM algorithm where global convergence results are established in the i.i.d data case.
	For the general asymmetric MLR problem, only local convergence of the EM algorithm has been obtained and there is currently no theoretical guarantee of global convergence, even when adopting the population EM algorithm under i.i.d data Gaussian assumptions. 
	How to establish global convergence results without i.i.d assumptions for EM algorithms still remains to be an open problem. 
	
	In this paper, we propose online identification algorithms based on the EM principle for both symmetric and general asymmetric MLR problems, where the estimates are updated according to the current estimates and the new observations.
	We remark that Ljung's ODE method \cite{ljung1977analysis} stands out as a powerful analytic approach for general recursive algorithms without i.i.d data assumptions.
	By utilizing this method and making efforts to establish the stability of the corresponding ordinary differential equations (ODEs), we are able to provide global convergence results for parameter identification algorithms without resorting to the traditional i.i.d assumption on the regression data, based on which we can then analyze the performance of the data clustering algorithms.	
	The main contributions of this paper can be summarized as follows:
	\begin{itemize}
		\item 
		We first propose an online EM algorithm for the symmetric MLR problem, which alternates between computing the probability that new data belongs to each sub-model and updating parameter estimates.
		Based on this algorithm and the least squares (LS), we then study the general asymmetric MLR problem by devising a two-step online identification algorithm to estimate the unknown parameters of the asymmetric MLR model.
		\item	
		We establish the global convergence of the online EM algorithm for both the symmetric and general asymmetric MLR problems without the i.i.d data assumption.
		To the best of our knowledge, this paper is the first to establish a convergence theory for the online identification of  MLR models.
		\item
		Based on the proposed new online identification algorithms, we prove that the performance of data clustering including the within-cluster error and the probability that the new data can be categorized into the correct cluster can asymptotically achieve the same performance as those in the case where true parameters are known.
	\end{itemize}
	
	The remainder of this paper is organized as follows. Section \ref{pf} presents the problem formulation. In Section \ref{Algorithms}, we propose our online EM algorithms. Sections \ref{mainresults} states the main results on the global convergence of parameter identification and the performance of data clustering algorithms for MLR problems. Sections \ref{proof} gives the proofs of the main results. Section \ref{simu} provides numerical simulations to verify the effectiveness of our algorithms. Finally, we conclude the paper in Section \ref{conclusion}.
	
	\section{Problem Formulation}\label{pf}
	\subsection{Basic Notations}
	In the sequel, $v\in{\mathbb{R}^d}$ represents a $d$-dimensional column vector, ${v^{\tau}}$ and $\|v\|$ denote its transpose and Euclidean norm, respectively.
	For a $d\times d$-dimensional matrix $A$, $\|A\|$ denotes its Euclidean norm, $tr(A)$ is the trace, and the maximum and minimum eigenvalues are denoted by $\lambda_{max}(A)$ and $\lambda_{min}(A)$, respectively.
	For two matrices $A$ and $B$, $A>(\geq)B$ means that $A-B$ is a positive (semi-positive)-definite matrix.
	
	Let $(\Omega, \mathcal{F},P)$ be a probability space, where $\Omega$ is the sample space, the $\sigma$-algebra $\mathcal{F}$ on $\Omega$ is a family of events and $P$ is a probability measure on $(\Omega, \mathcal{F})$.
	For an event $\mathcal{A}\in \mathcal{F}$, its complement $\mathcal{A}^c$ is defined by $\mathcal{A}^c=\Omega-\mathcal{A}$.
	The indicator function $\mathbb{I}_\mathcal{A}$ on $\Omega$ is defined by $\mathbb{I}_\mathcal{A}=1$ if the event $\mathcal{A}$ occurs and $\mathbb{I}_\mathcal{A}=0$ otherwise.
	If $P(\mathcal{A})=1$, then it is said that the event $\mathcal{A}$ occurs almost surely (a.s.).
	An infinite sequence of events $\{\mathcal{A}_k,k\geq0\}$ is said to happen infinitely often (i.o.) if $\mathcal{A}_k$ happens for an infinite number of indices $k\in\{1,2,\cdots\}$.
	Moreover, a sequence of random variables $\{x_k,k\geq0\}$ is called uniformly intergrable (u.i.) if $\lim\limits_{a\to\infty}\sup_{k\geq1}\int_{[|x_k|>a]}|x_k|dP=0$.
	We use $\mathbb{E}\left[\cdot \right]$ to denote the mathematical expectation operator, and $\mathbb{E}[\cdot|\mathcal{F}_k]$ to represent the conditional expectation operator given $\mathcal{F}_k$, where $\{\mathcal{F}_k\}$ is a non-decreasing sequence of $\sigma$-algebras.
	According to convention, $x\sim F$ indicates that the random variable $x$ obeys the distribution $F$ and $\mathcal{N}(\mu,\sigma^2)$ represents the Gaussian distribution with $\mu$ and $\sigma^2$ being the mean and the variance, respectively.
	
	\begin{definition}
		A sequence of random variables $\{x_k,k\geq1\}$ is said to be asymptotically stationary  if for any $\epsilon>0$, and any set $C\in\mathcal{B}^{\infty}$ with $\mathcal{B}^{\infty}$ being the Borel set of $\mathbb{R}^{\infty}$, there exists $K>0$ such that for all $k\geq K$,
		$$|P(\{x_k,x_{k+1},\cdots\}\in C)-P(\{x_{k+1},x_{k+2},\cdots\}\in C)|\leq\epsilon.$$
		It is further ergodic if $\lim\limits_{k\to\infty}\mathbb{E}\|x_k\|$ exists and
		$$\lim\limits_{n\to\infty}\frac{1}{n}\sum\limits_{k=1}^{n}x_k=\lim\limits_{k\to\infty}\mathbb{E}x_k.$$
	\end{definition}
	
	In the above definition, if both parameters $\epsilon$ and $K$ can take value $0$, then the sequence $\{x_k,k\geq1\}$ is called stationary and ergodic, which is consistent with the traditional definition as in \cite{stout1974almost}.
	\subsection{Problem Statement}
	Consider the following mixed linear regression (MLR) model consisting of two sub-models:
	\begin{equation}\label{model}
		y_{k+1}=
		\left\{\begin{aligned}
			&\beta_1^{*\tau}\phi_{k}+w_{k+1},~~\text{if}~~z_k=1,\\
			&\beta_2^{*\tau}\phi_{k}+w_{k+1},~~\text{if}~~z_k=-1,
		\end{aligned}\right.		
	\end{equation}
	where $\beta_1^{*}$ and $\beta_2^{*}$ are unknown parameter vectors in $\mathbb{R}^d$ that determine the sub-models, $\phi_k\in\mathbb{R}^d$, $y_{k+1}\in\mathbb{R}$ and $w_{k+1}\in\mathbb{R}$ are the regressor vector, observation and the system noise. In addition, $z_k\in\{-1,1\}$ is a hidden variable, namely, we do not know which sub-model the data $\{\phi_k,y_{k+1}\}$ comes from.	
	
	\begin{remark}
		The presence of multiple linear regressions makes the MLR model highly expressive and thus it has a wide range of applications, e.g., movements of object in video sequence \cite{gaffney1999trajectory}, medical insurance cost \cite{deb2000estimates}, human perception of tones \cite{cohen1980influence} and World Health Organization (WHO) life expectancy \cite{zilber2023imbalanced}.
		It is worth mentioning that the MLR model is said symmetric if the true parameters satisfy $\beta_1^*=-\beta_2^*$ and the phase retrieval model \cite{netrapalli2013phase} is such a case. 
		The MLR model is said balanced if the hidden variable $z_k$ has equal probabilities, i.e., $P(z_k=1)=P(z_k=-1)$ and the human perception of tones \cite{cohen1980influence} is such a case.
	\end{remark}
	
	It is worth mentioning that under some excitation conditions on the data, the optimal solution of
	$$
	J_n=\min_{\beta_1,\beta_2\in\mathbb{R}^d}\frac{1}{n}\sum\limits_{k=1}^n\min_{i=1,2}\left\{(y_{k+1}-\beta_i^{\tau}\phi_k)^2\right\}
	$$ 
	will converge to the true parameters $\beta_1^*, \beta_2^*\in\mathbb{R}^d$ when $n\to\infty$ as shown by Wang et al. \cite{wang2019convergence}.
	However, it is hard in general to find a computational algorithm for solving this mixed optimization problem.
	
	The aim of this paper is to develop online algorithms to simultaneously estimate the true parameters $\beta_1^*$ and $\beta_2^*$ by using the streaming data $\{\phi_k,y_{k+1}\}_{k=1}^{\infty}$, and establish convergence results for the identification algorithms.
	Based on the estimates of $\beta_1^*$ and $\beta_2^*$, we further investigate the probability and performance that the newly generated data can be categorized into correct clusters.
	
	\section{Online EM Algorithms}\label{Algorithms}
	
	In this section, we design online identification algorithms based on the likelihood method for both symmetric and general asymmetric MLR models.
	We note that the symmetric MLR case will bring benefits for the design and theoretical analysis of the algorithm and will pave a way for the study of the general asymmetric MLR case, we will therefore first consider the identification problem of the symmetric MLR problem.
	
	\subsection{Online EM Algorithm for Symmetric MLR Problem}	
	The symmetric MLR model can be simplified as follows:
	\begin{equation}\label{model2}
		y_{k+1}=z_k\beta^{*\tau}\phi_{k}+w_{k+1},	
	\end{equation}
	where $\beta_1^*$ in (\ref{model}) is denoted as $\beta^*$. The symmetric prior information simplifies the estimation of the posterior probability of which cluster the newly generated data comes from, thereby facilitating the design and analysis of the algorithm.
	
	To start with, let us first assume that the noise $\{w_{k+1}\}$ is i.i.d with a normal distribution $\mathcal{N}(0,\sigma^2)$, the hidden variable $\{z_k\}$ is i.i.d and balanced, the regressor $\{\phi_k\}$ is i.i.d and also $\{z_k\}$, $\{\phi_k\}$ and $\{w_{k+1}\}$ are mutually independent\footnote{Note that the balanced assumption and the independence assumption on the data will actually not be used in both the theorem and the analysis.}. Then we derive the likelihood function of the MLR model (\ref{model2}) with the parameter $\beta$ as follows:
	\begin{equation}\label{ml}
		\begin{aligned}
			\mathcal{L}_n(\beta)&=P(\mathbb{O}_n|\beta,\mathbb{U}_n)=\prod\limits_{k = 1}^n {P(y_{k+1}|\beta,\phi_k)}\\
			&=\prod\limits_{k = 1}^n\bigg\{\frac{1}{2\sqrt{2\pi\sigma^2}}\exp
			\left(-\frac{(y_{k+1}-\beta^{\tau}\phi_k)^2}{2\sigma^2}\right)\\
			&~~+\frac{1}{2\sqrt{2\pi\sigma^2}}\exp
			\left(-\frac{(y_{k+1}+\beta^{\tau}\phi_k)^2}{2\sigma^2}\right)\bigg\},
		\end{aligned}
	\end{equation}
	where $\mathbb{U}_n=\{\phi_1,\cdots,\phi_n\}$ and $\mathbb{O}_n=\{y_1,\cdots,y_{n+1}\}$.  
	
	With simple calculations, the gradient of the corresponding log-likelihood function with respect to $\beta$ has the following expression:
	\begin{equation}\label{grad}
		\begin{aligned}
			&Q_n(\beta)=\nabla_{\beta} \log(\mathcal{L}_n)\\
			=&\frac{1}{\sigma^2}\bigg\{\sum\limits_{k=1}^n\left[-\phi_k\phi_k^{\tau}\beta+\phi_ky_{k+1}\tanh\left(\frac{\beta^{\tau}\phi_ky_{k+1}}{\sigma^2}\right)\right]\bigg\},
		\end{aligned}		
	\end{equation}
	where $\tanh(x)$ is the hyperbolic tangent function defined as $\tanh(x)=\frac{\exp(x)-\exp(-x)}{\exp(x)+\exp(-x)}$. 
	We can see that 0 is a local minimum, and it is hard to obtain the closed-form expression of maximum likelihood estimation (MLE).  
	Hence, we adopt the EM algorithm (cf., \cite{dempster1977maximum}) to approximate the MLE.		
	Denote $\beta_t$ as the estimates of $\beta^*$ at the time instant $t$.
	The EM algorithm is conducted according to two steps:
	
	1) E-step: compute an auxiliary function $\mathcal{Q}_k$, i.e., the log-likelihood for the data set $\{y_{t+1},z_t,\phi_t\}_{t=1}^k$ based on $\beta_t$:
	$$
	\begin{aligned}
		&\mathcal{Q}_k(\beta)=-\frac{1}{2\sigma^2}\bigg\{\sum\limits_{t=1}^k\bigg[P(z_t=1|\phi_t,y_{t+1},\beta_t)(y_{t+1}-\beta^{\tau}\phi_t)^2\\
		&~~~+P(z_t=-1|\phi_t,y_{t+1},\beta_t)(y_{t+1}+\beta^{\tau}\phi_t)^2\bigg]\bigg\}+kc,
	\end{aligned}
	$$
	where $c=\log(\frac{1}{2\sqrt{2\pi}\sigma})$, the conditional probabilities of the hidden variable $z_t$ given $\{\phi_t, y_{t+1}, \beta_t\}$ are as follows:
	$$
	\begin{aligned}
		&P(z_t=1|\phi_t,y_{t+1},\beta_t)\\	=&\frac{\exp\left(-\frac{(y_{t+1}-\beta_t^{\tau}\phi_t)^2}{2\sigma^2}\right)}{\exp\left(-\frac{(y_{t+1}-\beta_t^{\tau}\phi_t)^2}{2\sigma^2}\right)+\exp\left(-\frac{(y_{t+1}+\beta_t^{\tau}\phi_t)^2}{2\sigma^2}\right)}\\
		=&\frac{\exp\left(\frac{\beta_t^{\tau}\phi_ty_{t+1}}{\sigma^2}\right)}{\exp\left(\frac{\beta_t^{\tau}\phi_ty_{t+1}}{\sigma^2}\right)+\exp\left(-\frac{\beta_t^{\tau}\phi_ty_{t+1}}{\sigma^2}\right)},
	\end{aligned}
	$$
	and
	$$
	\begin{aligned}
		&P(z_t=-1|\phi_t,y_{t+1},\beta_t)=1-P(z_t=1|\phi_t,y_{t+1},\beta_t).\\
	\end{aligned}$$
	
	2) M-step: update the estimate $\beta_k$ by maximizing the function $\mathcal{Q}_k(\beta)$:
	\begin{equation}\label{offline}
		\begin{aligned}
			&\beta_{k+1}=\mathop{\arg\max}_{\beta}\mathcal{Q}_k(\beta)\\
			=&\bigg(\sum\limits_{t=1}^k\phi_t\phi_t^{\tau}\bigg)^{-1}\bigg(\sum\limits_{t=1}^k\phi_ty_{t+1}\tanh\bigg(\frac{\beta_t^{\tau}\phi_ty_{t+1}}{\sigma^2}\bigg)\bigg).
		\end{aligned}
	\end{equation}
	It is clear that the EM algorithm is a soft version of the well-known $k$-means algorithm \cite{lloyd1982least}.
	
	Denote 
	\begin{equation}\label{bary}
		\bar y_{k+1}=y_{k+1}\tanh\big(\frac{\beta_k^{\tau}\phi_ky_{k+1}}{\sigma^2}\big).
	\end{equation}
	We can see that the equation (\ref{offline}) is the standard formula of LS with output $\bar y_{k+1}$ and the regressor $\phi_k$, hence following the same way as the derivation of the recursive LS \cite{chen1991identification}, we get the resulting online EM algorithm as shown in Algorithm \ref{alg1}.
	
	\begin{algorithm}
		\caption{Online EM algorithm for symmetric MLR problem}
		\label{alg1}
		\begin{algorithmic}[1]
			\STATE Initialization: $\beta_{0}\ne0$, $P_0>0$.
			\STATE At each time step $k+1$, we have data $\{\phi_k,y_{k+1}\}$.
			\STATE Recursively calculate the estimate:
			\begin{footnotesize}\begin{subequations}\begin{align}
						&\beta_{k+1}={\beta}_{k}+a_kP_k{\phi}_{k}\bigg(y_{k+1}\tanh\bigg(\frac{\beta_k^{\tau}\phi_ky_{k+1}}{\sigma^2}\bigg)-{\beta}_{k}^{\tau}{\phi}_{k}\bigg),\label{beta}\\
						&P_{k+1}=P_k-a_kP_k\phi_k\phi_k^{\tau}P_k,\label{pk}\\ &a_k=\frac{1}{1+\phi_k^{\tau}P_k\phi_k}.
			\end{align}\end{subequations}\end{footnotesize}
			\STATE Output: $\beta_{k+1}$		
		\end{algorithmic}
	\end{algorithm}	
	\subsection{Online EM Algorithm for General Asymmetric MLR Problem}
	First of all, we show that the general asymmetric case can be transferred to a case that can be dealt with as the symmetric case together with the case of LS.
	
	Let us denote $\theta_1^*=\frac{\beta_1^*+\beta_2^*}{2}$ and $\theta_2^*=\frac{\beta_1^*-\beta_2^*}{2}$. 
	Clearly, the parameters $\beta_1^*$ and $\beta_2^*$ in (\ref{model}) will be obtained once the parameters $\theta_1^*$ and $\theta_2^*$ are identified.
	The MLR model (\ref{model}) can then be equivalently rewritten into the following model:
	\begin{equation}\label{model1}
		y_{k+1}=\theta_1^{*\tau}\phi_k+z_k\theta_2^{*\tau}\phi_k+w_{k+1}.
	\end{equation}
	Under the balanced assumption on the hidden variable $z_k$, and the mutually independence assumption among $z_k$, $\phi_k$ and $w_{k+1}$ for each $k\geq 0$, we have $\mathbb{E}\left[z_k\theta_2^{*\tau}\phi_k|\phi_k\right]=0$, and then
	\begin{equation}\label{bm}
		\mathbb{E}\left[y_{k+1}|\phi_k\right]=\theta_1^{*\tau}\phi_k,
	\end{equation}
	which is actually a linear regression model and can be estimated by the LS algorithm.
	
	Thus, we propose a two-step identification algorithm to estimate the parameters $\theta_1^*$ and $\theta_2^*$.
	Firstly, by (\ref{bm}), the parameter $\theta_1^*$ is estimated using the LS algorithm. Secondly, by replacing the unknown parameter $\theta_1^*$ in (\ref{model1}) by its LS estimate given in the first step, we can then estimate the unknown parameter $\theta_2^*$ in the same way as that for the symmetric case stated above. 
	The whole algorithm is summarized in the following Algorithm \ref{alg2}:
	
\begin{algorithm}
	\caption{Online EM algorithm for general asymmetric MLR problem}
	\label{alg2}
	\begin{algorithmic}[1]
		\STATE Initialization: $\theta_{0,1}$, $\theta_{0,2}\ne0$, $P_0>0$.
		\STATE At each time step $k+1$, we have data $\{\phi_k,y_{k+1}\}$.
		\STATE Recursively calculate the estimate:
		\begin{footnotesize}
		$$
		\begin{aligned}
		&\textbf{\#Step 1: LS~ estimation~for~the~paprameter~$\theta_1^*$}\\
		&\theta_{k+1,1}={\theta}_{k,1}+a_kP_k{\phi}_{k}\left(y_{k+1}-{\theta}_{k,1}^{\tau}{\phi}_{k}\right),\\
\		&P_{k+1}=P_k-a_kP_k\phi_k\phi_k^{\tau}P_k,~~a_k=\frac{1}{1+\phi_k^{\tau}P_k\phi_k},	\\
		&\textbf{\#Step 2: EM~ estimation~for~the~paprameter~$\theta_2^*$}\\
		&\theta_{k+1,2}={\theta}_{k,2}+a_kP_k\phi_k\bigg[m_{k+1}\tanh\big(\frac{\theta_{k,2}^{\tau}\phi_km_{k+1}}{\sigma^2}\big)-{\theta}_{k,2}^{\tau}{\phi}_{k}\bigg],\\
		&m_{k+1}=y_{k+1}-{\theta}_{k,1}^{\tau}{\phi}_{k}.	
		\end{aligned}
		$$
		\end{footnotesize}
		\STATE Outputs:
		$$\beta_{k+1,1}=\theta_{k+1,1}+\theta_{k+1,2},\beta_{k+1,2}=\theta_{k+1,1}-\theta_{k+1,2}.$$
	\end{algorithmic}
\end{algorithm}
	
	\section{Main Results}\label{mainresults}	
	In this section, we give the main results concerning the convergence of parameter estimates and the data clustering performance of Algorithms \ref{alg1} and \ref{alg2}, respectively.
	\subsection{Performance of Algorithm \ref{alg1}}\label{AnalysisAlgorithm1}	
	To establish a rigous theory on the performance of the identification algorithms, we need to introduce some assumptions on the hidden variable $z_k$, the noise $w_{k+1}$ and the regressor $\phi_k$.
	\begin{assumption}\label{asm2}
		The sequence of hidden variables $\{z_k\}$ is i.i.d with distribution $P(z_k=1)=p\in(0,1)$ and $P(z_k=-1)=1-p$. In addition, $z_k$ is independent of $\phi_k$ for each $k\geq0$.
	\end{assumption}
	\begin{remark}
		In most of the existing results on the symmetric MLR (e.g., \cite{kwon2019global}), the balanced mixture (i.e., $p=\frac{1}{2}$) is assumed in the convergence analysis. The above assumption is more general as it includes both the balanced and unbalanced mixtures.
	\end{remark}
	\begin{assumption}\label{asm3} 
		The noise $\{w_{k+1}\}$ is a sequence of i.i.d random variables with Gaussian distribution $\mathcal{N}(0,\sigma^2)$. 
		In addition, $w_{k+1}$ is independent of $\{z_t\}_{t\leq k}$ and $\{\phi_t\}_{t\leq k}$ for $k\geq0$.
	\end{assumption}
	\begin{assumption}\label{asm4}
		The regressor sequence $\{\phi_k\}$ is asymptotically stationary and ergodic and $\{\|\phi_k\|^4\}$ is uniformly intergrable. In addition, its probability density function (p.d.f) $g_k(x)$ satisfies
		\begin{equation}\label{barg}
			\begin{aligned}
				&\lim\limits_{k\to\infty} g_k(x)=\bar g(x)\in\mathcal{G}=\big\{g(x): g(x)~\text{is a function }\\
				&\hskip 1.5cm\text{of}~\|x\|~ \text{for}~x\in\mathbb{R}^d, \int{xx^{\tau}g(x)dx}>0\big\}.
			\end{aligned}
		\end{equation}
	\end{assumption}
	\begin{remark}
		We remark that the set $\mathcal{G}$ of probability density functions include many familiar distributions, such as the uniform distribution on the sphere, the isotropic Gaussian distribution, the Logistic distribution, the polynomial distribution and the probability density functions introduced in \cite{qian2019global}.
	\end{remark}
	\begin{remark}\label{nonrato}
		The requirement that $g(x)$ is a function of $\|x\|$ means that it has the rotation-invariant property.
		This property is assumed for simplicity of presentation and can be further relaxed.
		For example, if the asymptotically stationary density function of $\phi_k$ is Gaussian with zero mean and non-unit covariance matrix $\Sigma>0$, then $g(x)$ will have the form $g_0(\Sigma^{-1/2}x)$, which does not satisfy the rotation-invariant property although the standard normal density $g_0(x)$ does.
		In this case, let 
		$$\begin{aligned}
			\widehat\phi_k=\bar R_{k}^{-\frac{1}{2}}\phi_k, \bar R_{k}=\bar R_{k-1}+\frac{1}{k}\left(\phi_k\phi_k^{\tau}-\bar R_{k-1}\right).
		\end{aligned}$$
		From the facts that $\bar R_{k}\to\Sigma$ almost surely as $k\to\infty$, it is easy to obtain that $\widehat\phi_k$ converges in distribution to a standard normal random variable. 		
		Then by replacing $\phi_k$ with $\widehat\phi_k$ in Algorithm \ref{alg1}, it can be transferred to the rotation invariant case in Assumption \ref{asm4} and our main results in Theorems \ref{odetheorem1}-\ref{odetheorem4} still hold.
	\end{remark}
	
	\begin{remark}
		Assumption \ref{asm4} is weaker than that used in most of the existing literature where the regressor $\{\phi_k\}$ is required to be i.i.d with a standard Gaussian distribution (cf., \cite{kwon2019global,klusowski2019estimating,balakrishnan2017statistical}).
		In particular, with the modification technique in Remark \ref{nonrato}, our analysis can contain the case where $\{\phi_k\}$ is generated by the following dynamic system:
		$$\phi_{k+1}=A\phi_{k}+e_{k+1},$$
		where $e_k$ is i.i.d with Gaussian distribution and $\bar\rho(A)<1$ with $\bar\rho(A)$ being the spectral radius of $A$.
	\end{remark}

	Based on the above assumptions, we give the convergence result for parameter identification and data clustering performance of Algorithm \ref{alg1} as follows:
	\subsubsection{Convergence of Algorithm \ref{alg1}}	
	 We give the following main theorem on the convergence of the identification Algorithm \ref{alg1}:
	\begin{theorem}\label{odetheorem1}
		Let Assumptions \ref{asm2}-\ref{asm4} be satisfied. Then for any initial values $\beta_0\ne0$ and $P_0>0$, the estimate $\beta_k$ generated by Algorithm \ref{alg1} will converge to a limit point that belongs to the set $\{\beta^*, -\beta^*\}$ almost surely.
	\end{theorem}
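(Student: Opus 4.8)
The plan is to cast Algorithm~\ref{alg1} as a stochastic approximation recursion and then invoke Ljung's ODE method, whose decisive ingredient will be a global stability analysis of the associated ODE. First I would exploit the recursive-least-squares structure: iterating (\ref{pk}) gives the closed form $P_k^{-1}=P_0^{-1}+\sum_{t=1}^{k}\phi_t\phi_t^{\tau}$, so by the ergodicity in Assumption~\ref{asm4} one has $\tfrac1k P_k^{-1}\to R:=\int xx^{\tau}\bar g(x)\,dx>0$ almost surely; hence the gain $P_k$ behaves like $\tfrac1k R^{-1}$. Writing $\bar y_{k+1}$ as in (\ref{bary}), the update (\ref{beta}) reads $\beta_{k+1}=\beta_k+a_kP_k\phi_k(\bar y_{k+1}-\beta_k^{\tau}\phi_k)$, a standard recursion with step size of order $1/k$. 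Freezing $\beta$ and averaging the innovation against the asymptotic law $\bar g$ and the laws of $z_k,w_{k+1}$ identifies the regression function
$$
h(\beta)=f(\beta)-R\beta,\qquad f(\beta)=\mathbb{E}\Big[\phi\,y\tanh\big(\tfrac{\beta^{\tau}\phi y}{\sigma^2}\big)\Big],\quad y=z\beta^{*\tau}\phi+w.
$$

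The associated ODE is then $\dot\beta=R^{-1}h(\beta)$. The key observation is that $h(\beta)=\sigma^2\nabla_\beta L(\beta)$, where $L$ is the population log-likelihood whose gradient is the $n\to\infty$ version of (\ref{grad}); thus the ODE is a preconditioned gradient flow of $L$, and $L$ is itself a Lyapunov function, since
$$
\frac{d}{d\tau}L(\beta)=\nabla L(\beta)^{\tau}R^{-1}h(\beta)=\sigma^2\,\nabla L(\beta)^{\tau}R^{-1}\nabla L(\beta)\ge0,
$$
with equality only at the critical points of $L$. By LaSalle's invariance principle every bounded trajectory then converges to the equilibrium set $\{\beta:f(\beta)=R\beta\}$.

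The main obstacle, and the crux of the whole argument, is to characterize this equilibrium set and the stability of its points — exactly the ODE-stability step flagged in the abstract. Using the rotation invariance of $\bar g$ (Assumption~\ref{asm4}) and the oddness of $\tanh$, I would reduce $f$ to a form depending on $\beta$ essentially through its components along and orthogonal to $\beta^*$, and show that the only equilibria are $0$ and $\pm\beta^*$; that $\pm\beta^*$ are strict local maxima of $L$ (hence asymptotically stable, via negative definiteness of $\nabla^2L$ there); and that $0$ is unstable, consistent with the remark after (\ref{grad}) that $0$ is a local minimum of the likelihood, so that the ascent flow repels from it and its stable manifold is the measure-zero set $\{0\}$ (or, at low signal-to-noise, a lower-dimensional manifold if $0$ is a saddle). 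Because $h$ is odd, the basins of $\beta^*$ and $-\beta^*$ together with the stable manifold of $0$ cover $\mathbb{R}^d$, yielding global convergence of the ODE to $\{\beta^*,-\beta^*\}$ off a null set.

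It then remains to transfer this to the stochastic iterates. I would verify the hypotheses of Ljung's theorem: boundedness of $\{\beta_k\}$ along a subsequence (using the linear-plus-contraction form of the innovation and the boundedness of $\tanh$), the martingale-difference and fourth-moment conditions supplied by Assumptions~\ref{asm3}--\ref{asm4} (Gaussian $w_{k+1}$ and uniform integrability of $\|\phi_k\|^4$), and Lipschitz regularity of $h$. Ljung's theorem then forces $\beta_k$ to converge to an asymptotically stable equilibrium. Finally, to exclude the unstable point $0$, I would combine the initialization $\beta_0\ne0$ with the excitation injected by the noise in the unstable directions — a standard nonconvergence-to-unstable-equilibria argument for stochastic approximation — to conclude that almost surely $\beta_k\to\beta^*$ or $\beta_k\to-\beta^*$.
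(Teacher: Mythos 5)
Your high-level architecture coincides with the paper's: recast Algorithm~\ref{alg1} as a $1/k$-step stochastic approximation, invoke Ljung's ODE method, analyze the stability of the associated ODE, and separately rule out the spurious equilibrium at the origin. The gradient-flow packaging is also legitimate: since $h(\beta)=\sigma^2\nabla L(\beta)$ for the population (mis-specified, balanced) log-likelihood $L$, one has $\frac{d}{dt}L(\beta(t))=\sigma^2\nabla L^{\tau}R^{-1}(t)\nabla L\geq0$ for any $R(t)>0$, and superlevel sets of $L$ are bounded, so LaSalle-type reasoning reduces everything to the critical-point structure of $L$. The problem is that you leave that structure as an assertion. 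Showing that the only critical points are $0$ and $\pm\beta^*$, and that $\pm\beta^*$ attract, is precisely the hard content of the theorem; the paper spends Steps 1--3 of its proof on it, decomposing $\beta$ along $v_1=\beta/\|\beta\|$ and $v_2\in\mathrm{span}\{\beta,\beta^*\}$, proving a uniform positive lower bound on $\|\beta(t)\|$, a monotonicity/sign argument for the transverse component $b_2^*(t)$ (Lemma~\ref{le22} and the inequality (\ref{db2})), and a quadratic Lyapunov function $\frac12\tilde\beta^{\tau}R(t)\tilde\beta$ combined with Lemmas~\ref{tanh1}--\ref{tanh2}. Your sketch names the right symmetry reduction but supplies none of the estimates, so as written the decisive step is missing.

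The second gap is your mechanism for excluding the origin. You appeal to a standard nonconvergence-to-unstable-equilibria theorem driven by ``excitation injected by the noise in the unstable directions.'' That hypothesis fails here: the innovation is $e_{k+1}=y_{k+1}\tanh(\beta_k^{\tau}\phi_k y_{k+1}/\sigma^2)-\beta_k^{\tau}\phi_k$, which is identically zero at $\beta_k=0$ and is $O(\|\beta_k\|)$ near it, so the conditional variance of the increment vanishes at the unstable point rather than being bounded below --- the origin is an exactly absorbing state of the recursion and Pemantle/Brandi\`ere--Duflo-type results do not apply. The paper instead (i) uses absolute continuity of the law of $\beta_k$ (from $\beta_0\neq0$ and the continuously distributed data) to get $P(\beta_k^{\tau}\beta^*\neq0)=1$ for every $k$, and (ii) runs a quantitative drift argument showing that if $\|\beta_n\|$ were small but nonzero, the positive drift $\beta_n^{\tau}R_n^{-1}f(\beta_n)>\alpha\|\beta_n\|$ forces $\|\beta_k\|$ back above a fixed threshold, contradicting $\beta_k\to0$. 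You would need to replace your noise-excitation argument with something of this deterministic-repulsion type. A smaller omission: Ljung's theorem only gives convergence to the invariant set $\{\beta^*,-\beta^*\}$; to get a single limit point the paper additionally shows $\sum_k\|\beta_{k+1}-\beta_k\|^2<\infty$, which your sketch does not address.
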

	\begin{remark}
		Note that the convergence property provided in Theorem \ref{odetheorem1} is of local nature in the sense that the limit point of $\beta_k$ may depend on the initial value $\beta_0$ of Algorithm \ref{alg1}.
		A somewhat surprising fact is that this local convergence property is sufficient for guaranteeing the global optimality of the data clustering performance asymptotically, as will be rigorously shown in Theorem \ref{dcp} below.
		This is reminiscent of the well-known self-tuning regulators in adaptive control, where the control performance can still achieve its optimal value even though the parameter estimates may not converge to the true parameter values (cf.,\cite{aastrom1973self,becker1985adaptive,chen1991identification}) .
	\end{remark}
	
	\subsubsection{Clustering Performance of Algorithm \ref{alg1}}\label{ClusteringPerformanceofAlgorithm1}
	Based on the estimate $\beta_k$ generated by Algorithm \ref{alg1}, we can online categorize the new data $\{\phi_k,y_{k+1}\}$ to the corresponding cluster $\mathcal{I}_k\in\{1,2\}$ according to the following criterion:
	\begin{equation}\label{class}
		\mathcal{I}_k=\mathop{\arg\min}_{i=1,2}\{(y_{k+1}-(-1)^{i}\beta_{k}^{\tau}\phi_k)^2\}.
	\end{equation}
	To evaluate the clustering performance for the within-cluster errors, a commonly-used evaluation index (cf., \cite{zilber2023imbalanced}) is defined as follows:
	\begin{equation}\label{wce}
		J_n=\frac{1}{n}\sum\limits_{k=1}^n(y_{k+1}-(-1)^{\mathcal{I}_k}\beta_{k}^{\tau}\phi_k)^2.
	\end{equation}
	
	Our purpose is to provide a lower bound to the probability that $\{y_{k+1},\phi_k\}$ can be categorized into the correct cluster, and an upper bound of the within-cluster error.
	In general, the regressor $\phi_k$ may contain feedback control input signals, which may depend on the estimate $\beta_k$ if the input is an adaptive control designed based on the current estimate $\beta_k$. 
	Thus the p.d.f of $\phi_k$ may depend on $\beta_k$ in general.
	In the case, we assume that the conditional p.d.f of $\phi_k$ given $\beta_k$ is equi-continuous and convergent at the point $\beta_k=\beta^*(-\beta^{*})$ in the data clustering analysis of the paper.
	The main result on the performance of data clustering is stated as follows:	 		
	\begin{theorem}\label{dcp}
		Let Assumptions \ref{asm2}-\ref{asm4} be satisfied. Then the probability that the new data $\{\phi_k, y_{k+1}\}$ is categorized into the correct cluster is bounded from below by 
		\begin{equation}\label{them4.21}
			\begin{aligned}
				&\lim\limits_{k\to\infty}P(\{\phi_k,y_{k+1}\}~\text{is categorized correctly based on} \\
				&~~~~~~~~~~~~~~~~~~~~~~~~~~~~~~~~~~~\text{the estimate $\beta_k$})\\
				\geq&1-\mathbb{E}\left[\exp\left(-\frac{(\beta^{*\tau}\phi)^2}{2\sigma^2}\right)\right],
			\end{aligned}			
		\end{equation}
		and the within-cluster error (\ref{wce}) satisfies
		\begin{equation}\label{them4.22}
			\lim\limits_{n\to\infty}J_n =\sigma^2+4\mathbb{E}\left[\eta(\phi)\right]\leq\sigma^2,
		\end{equation}
		with
		$$\eta(\phi)=(\beta^{*\tau}\phi)^2\Phi\left(-\frac{|\beta^{*\tau}\phi|}{\sigma}\right)-\sigma|\beta^{*\tau}\phi|\Phi'\left(-\frac{|\beta^{*\tau}\phi|}{\sigma}\right)\leq0,$$
		where $\phi$ is a random vector with p.d.f $\bar g\in\mathcal{G}$ being the asymptotic stationary p.d.f of $\phi_k$, $\Phi(x)$ and $\Phi'(x)$ are the standard Gaussian distribution function and density function, respectively.
	\end{theorem}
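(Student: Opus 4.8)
The plan is to exploit the almost-sure convergence $\beta_k\to\pm\beta^*$ furnished by Theorem \ref{odetheorem1} and to reduce both quantities to expectations under the asymptotic stationary law $\bar g$ of the regressor, after which everything follows from two elementary Gaussian inequalities. First I would expand the squares in the clustering rule (\ref{class}) and observe that the data is assigned to the $z_k=1$ sub-model exactly when $y_{k+1}\beta_k^\tau\phi_k>0$. In particular both this rule and the notion of a correct decision are invariant under the global sign flip $\beta_k\mapsto-\beta_k$, so I may assume without loss of generality that $\beta_k\to\beta^*$ and dispose of the limit $-\beta^*$ by relabelling the two clusters.

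For the classification probability I would condition on the pair $(\beta_k,\phi_k)$. Since $\beta_k$ is determined by the data up to time $k$, it is independent of the fresh innovations $(z_k,w_{k+1})$ by Assumptions \ref{asm2}--\ref{asm3}, so the conditional probability of a correct decision can be evaluated exactly: on $\{z_k=1\}$ one has $y_{k+1}=\beta^{*\tau}\phi_k+w_{k+1}$, and a short computation with $w_{k+1}\sim\mathcal N(0,\sigma^2)$ shows the decision is correct with probability $\Phi(|\beta^{*\tau}\phi_k|/\sigma)$ as soon as $\beta_k^\tau\phi_k$ and $\beta^{*\tau}\phi_k$ share the same sign; the case $z_k=-1$ is identical by the symmetry of the noise. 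Letting $k\to\infty$, the convergence $\beta_k\to\beta^*$ makes the two inner products share sign off a null set, and the assumed equi-continuity and convergence of the conditional density of $\phi_k$ given $\beta_k$ at $\beta_k=\beta^*$, together with dominated convergence, let me pass the limit inside the expectation to obtain
\[
\lim_{k\to\infty}P(\{\phi_k,y_{k+1}\}\ \text{correct})=\mathbb E\Big[\Phi\big(|\beta^{*\tau}\phi|/\sigma\big)\Big],\qquad \phi\sim\bar g.
\]
The lower bound (\ref{them4.21}) then follows from the Chernoff tail estimate $\Phi(-t)\le\exp(-t^2/2)$ applied with $t=|\beta^{*\tau}\phi|/\sigma$, followed by taking expectations.

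For the within-cluster error I would first use $\min\{(y-a)^2,(y+a)^2\}=(|y|-|a|)^2$ to rewrite (\ref{wce}) as $J_n=\frac1n\sum_{k=1}^n(|y_{k+1}|-|\beta_k^\tau\phi_k|)^2$, which is manifestly insensitive to the sign of $\beta_k$. Replacing $\beta_k$ by its limit incurs an error controlled by $\|\beta_k\mp\beta^*\|\,\|\phi_k\|$; using the pathwise convergence of $\beta_k$, the uniform integrability of $\{\|\phi_k\|^4\}$ from Assumption \ref{asm4}, and a Toeplitz/Ces\`aro argument, this error averages to zero, so $\lim_n J_n=\lim_n\frac1n\sum_{k=1}^n(|y_{k+1}|-|\beta^{*\tau}\phi_k|)^2$. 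Invoking the asymptotic stationarity and ergodicity of the regressor (hence of the summand) identifies this with $\lim_k\mathbb E[(|y_{k+1}|-|\beta^{*\tau}\phi_k|)^2]=\mathbb E[(|y|-|\beta^{*\tau}\phi|)^2]$ for $\phi\sim\bar g$. A direct Gaussian moment computation, expanding the square and using $\mathbb E|\mu+w|=\mu(2\Phi(\mu/\sigma)-1)+2\sigma\Phi'(\mu/\sigma)$ for $w\sim\mathcal N(0,\sigma^2)$, then yields exactly $\sigma^2+4\eta(\phi)$ with $\eta$ as stated, while the inequality $\eta(\phi)\le0$ is the Mills-ratio bound $t\Phi(-t)\le\Phi'(-t)$, which simultaneously gives $\lim_n J_n\le\sigma^2$.

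The hard part will be the rigorous interchange of limit and expectation when the estimate $\beta_k$ is itself random and time-varying and the regressor law may depend on it. In the probability bound this forces me to lean on the equi-continuity/convergence hypothesis for the conditional density of $\phi_k$ given $\beta_k$, whereas in the within-cluster error it requires combining the a.s.\ convergence $\beta_k\to\pm\beta^*$ with uniform integrability (to make the Ces\`aro replacement legitimate) and with the ergodic theorem for asymptotically stationary sequences (to identify the limiting average with the stationary expectation); the sign ambiguity $\pm\beta^*$ is handled throughout by the sign-invariance noted in the first paragraph.
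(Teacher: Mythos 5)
Your proposal is correct, and the two halves relate to the paper's proof differently. For the classification probability you follow essentially the paper's route: condition on $\beta_k$, invoke the equi-continuity and convergence of the conditional density of $\phi_k$ given $\beta_k$ together with dominated convergence to pass to the limit, identify the limit as $1-\mathbb{E}\bigl[\Phi\bigl(-|\beta^{*\tau}\phi|/\sigma\bigr)\bigr]=\mathbb{E}\bigl[\Phi\bigl(|\beta^{*\tau}\phi|/\sigma\bigr)\bigr]$, and finish with the Chernoff tail bound $\Phi(-t)\le e^{-t^2/2}$; the sign ambiguity $\pm\beta^*$ is disposed of by relabelling, exactly as in the paper. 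For the within-cluster error, however, you take a genuinely different and cleaner path. The paper decomposes $J_n$ according to the four events (which sub-model generated the data, and whether it was classified correctly), obtains three separate Ces\`aro limits $L_{n,1},L_{n,2},L_{n,3}$, and recombines the mixing weights $p$ and $1-p$ to get $\sigma^2+4\mathbb{E}[\eta(\phi)]$. You instead use the identity $\min\{(y-a)^2,(y+a)^2\}=(|y|-|a|)^2$ to collapse $J_n$ into a single average of $(|y_{k+1}|-|\beta_k^{\tau}\phi_k|)^2$, which is manifestly sign-invariant in $\beta_k$ and independent of the label $z_k$ (since $|\pm\mu+w|$ has the same law), and then evaluate one folded-normal moment $\mathbb{E}\bigl[(|y|-|\beta^{*\tau}\phi|)^2\bigr]=\sigma^2+4\mathbb{E}[\eta(\phi)]$; I checked the moment computation and it reproduces the paper's $\eta$ exactly, with $\eta\le0$ following from the Mills-ratio bound rather than the paper's integral estimate. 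What your approach buys is that the dependence on $p$ and the correct/incorrect classification events disappears from the error analysis entirely, making it transparent that the limit does not depend on the mixing proportion; what the paper's decomposition buys is that the contribution of misclassification ($L_{n,2}+L_{n,3}=4\mathbb{E}[\eta(\phi)]\le0$) is isolated explicitly, which supports the interpretation that the bound matches the known-parameter case. The limit-interchange steps you flag as delicate (Toeplitz replacement of $\beta_k$ by its limit under the fourth-moment uniform integrability, and the ergodic identification of the Ces\`aro limit via Lemma \ref{dk}) are the same devices the paper uses for its term $L_{n,1}$, so no gap there.
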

	\begin{remark}
		From the proof of Theorem \ref{dcp}, one can find that the bounds given in (\ref{them4.21}) and (\ref{them4.22}) are actually the same bounds as in the case where the true parameter $\beta^*$ is known.
		It can also be seen that the data clustering performance is positively related to the signal-to-noise ratio $\frac{|\beta^{*\tau}\phi|}{\sigma}$. 
		Specifically, as the signal-to-noise ratio tends to infinity, the lower bound of probability of correct categorization will tend to $1$, while the upper bound of the within-cluster error will tend to $\sigma^2$.
	\end{remark}
	
	It goes without saying that given a specific form of the density function $\bar g$, one can obtain a more explicit bound concerning the probability that the new data is categorized into the correct cluster.
	
	$\textbf{Example 1:}$
	Let conditions of Theorem \ref{dcp} be satisfied and $\bar g$ be the p.d.f of Gaussian distribution $\mathcal{N}(0,\Sigma)$ with $\Sigma>0$. Then with the estimate $\beta_k$ generated by Algorithm \ref{alg1}, we have
	\begin{equation}\label{them4.23}
		\begin{aligned}
			&\lim\limits_{k\to\infty}P(\{\phi_k,y_{k+1}\}~\text{is categorized correctly based on}\\ &~~~~~~~~~\text{the estimate $\beta_k$})\geq1-\frac{1}{\sqrt{1+\frac{\beta^{*\tau}\Sigma\beta^*}{\sigma^2}}}.
		\end{aligned}
	\end{equation}
	The proof of (\ref{them4.23}) is provided in Appendix \ref{app1}.
	
	\subsection{Performance of Algorithm \ref{alg2}}\label{AnalysisAlgorithm2}
	In this subsection, we first give the convergence results of Algorithm \ref{alg2} and then present the corresponding data clustering performance.
	For this purpose, the noise $\{w_{k+1}\}$ and the regressor $\{\phi_k\}$ are still assumed to obey Assumptions \ref{asm3} and \ref{asm4} in Section \ref{AnalysisAlgorithm1}, while the sequence of hidden variables $\{z_k\}$ is assumed to satisfy the following assumption:
	
	\textit{Assumption \ref{asm2}$'$}:
	The sequence of hidden variables $\{z_k\}$ is i.i.d with balanced distribution $P(z_k=1)=P(z_k=-1)=\frac{1}{2}$. In addition, $z_k$ is independent of $\phi_k$ for each $k\geq 0$.
	\subsubsection{Convergence of Algorithm \ref{alg2}}
	
	Based on the convergence theory presented in Section \ref{AnalysisAlgorithm1} and the celebrated convergence property of the LS, we can obtain the following main result on the convergence of Algorithm \ref{alg2}:
	\begin{theorem}\label{odetheorem2}
		Let Assumptions \ref{asm2}$'$ and \ref{asm3}-\ref{asm4} be satisfied. Then for any initial values $\theta_{0,1}$, $\theta_{0,2}\ne0$ and $P_0>0$, the estimate $\left(\beta_{k,1},\beta_{k,2}\right)$ generated by Algorithm \ref{alg2} will converge to a limit point that belongs to the set $\left\{\left(\beta_1^*,\beta_2^*\right),\left(\beta_2^*,\beta_1^*\right)\right\}$ almost surely.
	\end{theorem}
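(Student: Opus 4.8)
The plan is to exploit the two-stage decomposition built into Algorithm \ref{alg2}. The estimate $\theta_{k,1}$ is produced by a genuine least-squares recursion for the linear model (\ref{bm}), while $\theta_{k,2}$ is produced by exactly the symmetric-MLR recursion of Algorithm \ref{alg1}, only driven by the surrogate output $m_{k+1}=y_{k+1}-\theta_{k,1}^{\tau}\phi_k$ in place of $y_{k+1}$. Since the output map is $\beta_{k,1}=\theta_{k,1}+\theta_{k,2}$ and $\beta_{k,2}=\theta_{k,1}-\theta_{k,2}$, once I establish $\theta_{k,1}\to\theta_1^*$ and $\theta_{k,2}\to\pm\theta_2^*$ almost surely, the two admissible limits $(\theta_1^*+\theta_2^*,\theta_1^*-\theta_2^*)=(\beta_1^*,\beta_2^*)$ and $(\theta_1^*-\theta_2^*,\theta_1^*+\theta_2^*)=(\beta_2^*,\beta_1^*)$ follow at once, which is precisely the claimed limit set.

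First I would dispatch the least-squares step. Under Assumption \ref{asm2}$'$ the regression (\ref{bm}) has equation error $v_{k+1}=z_k\theta_2^{*\tau}\phi_k+w_{k+1}$, which is a martingale difference sequence with respect to the natural filtration, since $\mathbb{E}[z_k\,|\,\mathcal{F}_k]=0$ and $w_{k+1}$ is independent of the past by Assumption \ref{asm3}; its conditional second moment is dominated by $(\theta_2^{*\tau}\phi_k)^2+\sigma^2$ and hence controlled by the uniform integrability of $\{\|\phi_k\|^4\}$ in Assumption \ref{asm4}. The ergodicity together with the positive-definiteness $\int xx^{\tau}\bar g(x)\,dx>0$ in Assumption \ref{asm4} forces the persistent excitation $\lambda_{\min}\big(\sum_{t=1}^{k}\phi_t\phi_t^{\tau}\big)\to\infty$, so the classical strong consistency theory of least squares (of Lai--Wei type) yields $\theta_{k,1}\to\theta_1^*$ almost surely, with a convergence rate strong enough to control the perturbation terms arising in the second step.

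The substance of the proof is the convergence of $\theta_{k,2}$. I would write $m_{k+1}=\tilde y_{k+1}+\varepsilon_{k+1}$, where $\tilde y_{k+1}=z_k\theta_2^{*\tau}\phi_k+w_{k+1}$ is the ideal symmetric-MLR output, for which Theorem \ref{odetheorem1} applies verbatim with $\beta^*$ replaced by $\theta_2^*$, and $\varepsilon_{k+1}=(\theta_1^*-\theta_{k,1})^{\tau}\phi_k$ is the estimation-error perturbation inherited from Step 1. The recursion for $\theta_{k,2}$ is then a perturbation of Algorithm \ref{alg1}: since $m\mapsto m\tanh\big(\tfrac{\theta_2^{\tau}\phi\,m}{\sigma^2}\big)$ is locally Lipschitz with derivative bounded by $1+\tfrac{|\theta_2^{\tau}\phi|\,|m|}{\sigma^2}$, the extra term entering the update is of order $|\varepsilon_{k+1}|$ times a factor controlled by the moments of $\phi_k$. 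As $\varepsilon_{k+1}\to0$ by Step 1, the perturbation is asymptotically negligible and does not alter the mean-field (averaged) update. The plan is therefore to re-run the Ljung ODE argument underlying Theorem \ref{odetheorem1}: the associated ODE, its equilibria $\{\theta_2^*,-\theta_2^*,0\}$, and the stability analysis that rules out the unstable point $0$ are all unchanged, because the vanishing perturbation contributes a term that integrates to zero along the ODE time scale.

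The hard part will be verifying that the perturbation $\{\varepsilon_{k+1}\}$ fits the regularity conditions required by Ljung's method, namely that its cumulative effect on the interpolated trajectory vanishes, not merely the pointwise terms. This requires coupling the least-squares rate from Step 1 with the step-size weights $a_kP_k$ so that the relevant perturbation sums converge; the boundedness of $\tanh$ and the moment bounds in Assumption \ref{asm4} are exactly what make this coupling succeed. Once this is secured, the instability of the equilibrium $0$, established precisely as in Theorem \ref{odetheorem1} and guaranteeing $\theta_{k,2}\not\to0$ whenever $\theta_{0,2}\ne0$, forces $\theta_{k,2}\to\theta_2^*$ or $\theta_{k,2}\to-\theta_2^*$, which through the output map completes the proof.
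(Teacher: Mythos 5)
Your proposal follows essentially the same route as the paper: the same two-stage decomposition ($\theta_{k,1}\to\theta_1^*$ first, then $\theta_{k,2}$ treated as the symmetric-MLR recursion of Theorem \ref{odetheorem1} perturbed by the Step-1 estimation error), with the perturbation shown to vanish in the averaged ODE. The only cosmetic difference is that the paper also runs the least-squares step through the Ljung ODE/Lyapunov machinery rather than citing Lai--Wei type consistency, and it handles the ``hard part'' you flag by splitting $f_2=h_1+h_2$ and bounding $\tilde\theta_2^{\tau}(t)h_2(\theta(t))\leq c_2\|\tilde\theta_2(t)\|\lim_{k\to\infty}\big(\mathbb{E}\|\tilde\theta_{k,1}\|^2\big)^{1/2}$ via the mean-value theorem and $|\tanh(z)+z\tanh'(z)|\leq 1.2$, exactly the coupling you anticipate.
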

	\subsubsection{Clustering Performance of Algorithm \ref{alg2}}
	Similar to the analysis in Section \ref{AnalysisAlgorithm1}, based on the estimates $\beta_{k,1}$ and $\beta_{k,2}$ generated by Algorithm \ref{alg2}, we can also online categorize $\{\phi_k,y_{k+1}\}$ to the corresponding cluster $\mathcal{I}'_k\in\{1,2\}$ according to the following criterion:
	\begin{equation}\label{class1}
		\mathcal{I}'_k=\mathop{\arg\min}_{i=1,2}\{(y_{k+1}-\beta_{k,i}^{\tau}\phi_k)^2\}.
	\end{equation}
	Moreover, the corresponding within-cluster error is defined as
	\begin{equation}\label{wce1}
		J_n'=\frac{1}{n}\sum\limits_{k=1}^n(y_{k+1}-\beta_{k,\mathcal{I}'_k}^{\tau}\phi_k)^2.
	\end{equation}
	
	In the following theorem, we give an asymptotic lower bound to the probability that $\{y_{k+1},\phi_k\}$ can be categorized correctly and an upper bound of the within-cluster error:
	\begin{theorem}\label{odetheorem4}
		Let Assumptions \ref{asm2}$'$ and \ref{asm3}-\ref{asm4} be satisfied.
		Then the probability that the new data $\{\phi_k, y_{k+1}\}$ is categorized into the correct cluster is bounded from below by
		$$
		\begin{aligned}
			&\lim\limits_{k\to\infty}P(\{\phi_k,y_{k+1}\}~\text{is categorized correctly based on} \\
			&~~~~~~~~~~~~~~~~~~~~~~~~~~\text{the estimates $\beta_{k,1}$, $\beta_{k,2}$})\\
			\geq&1-\mathbb{E}\left[\exp\left(-\frac{((\beta_1^*-\beta_2^*)^{\tau}\phi)^2}{8\sigma^2}\right)\right],
		\end{aligned}		
		$$
		and the within-cluster error (\ref{wce1}) satisfies
		$$
		\lim\limits_{n\to\infty}J_n'=\sigma^2+\mathbb{E}\left[\eta(\phi)\right]\leq\sigma^2,
		$$
		where $\eta(\phi)=((\beta_1^*-\beta_2^*)^{\tau}\phi)^2\Phi\left(-\frac{|(\beta_1^*-\beta_2^*)^{\tau}\phi|}{2\sigma}\right)-2\sigma|(\beta_1^*-\beta_2^*)^{\tau}\phi|\Phi'\left(-\frac{|(\beta_1^*-\beta_2^*)^{\tau}\phi|}{2\sigma}\right)$, $\phi$, $\Phi(x)$ and $\Phi'(x)$ are defined in Theorem \ref{dcp}.
	\end{theorem}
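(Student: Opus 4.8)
The plan is to reduce the analysis to the idealized situation in which the true parameters $\beta_1^*,\beta_2^*$ are known, exactly along the lines of the proof of Theorem \ref{dcp}, and then to transfer the resulting formulas to the adaptive setting through the convergence guarantee of Theorem \ref{odetheorem2}. First I would invoke Theorem \ref{odetheorem2} to ensure that, almost surely, $(\beta_{k,1},\beta_{k,2})$ converges either to $(\beta_1^*,\beta_2^*)$ or to $(\beta_2^*,\beta_1^*)$. Since both the clustering rule (\ref{class1}) and the residual in (\ref{wce1}) depend only on the unordered pair $\{\beta_{k,1},\beta_{k,2}\}$ (the rule simply selects the nearer predictor), the two possible limit points produce the same asymptotic clustering behaviour, so it suffices to analyse the limiting pair $\{\beta_1^*,\beta_2^*\}$. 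The balanced Assumption \ref{asm2}$'$ enters here only indirectly, namely through Theorem \ref{odetheorem2}.

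For the probability bound, I would condition on the hidden label; by the symmetry between the two sub-models it is enough to treat $z_k=1$, in which case $y_{k+1}=\beta_1^{*\tau}\phi_k+w_{k+1}$. With the limiting predictors, the residual to the correct cluster is $w_{k+1}^2$ and that to the wrong cluster is $\big((\beta_1^*-\beta_2^*)^{\tau}\phi_k+w_{k+1}\big)^2$, so a correct decision occurs if and only if $(\beta_1^*-\beta_2^*)^{\tau}\phi_k\big((\beta_1^*-\beta_2^*)^{\tau}\phi_k+2w_{k+1}\big)>0$. Using the Gaussianity of $w_{k+1}$ (Assumption \ref{asm3}) and its independence from $\phi_k$, a direct computation yields the conditional miscategorization probability $\Phi\big(-|(\beta_1^*-\beta_2^*)^{\tau}\phi_k|/(2\sigma)\big)$. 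Passing to the limit in $k$ — where the assumed equicontinuity and convergence of the conditional density of $\phi_k$ at the limiting estimates enter — replaces $\phi_k$ by a variable $\phi$ with the stationary density $\bar g$ of Assumption \ref{asm4}, and the Gaussian tail bound $\Phi(-t)\le e^{-t^2/2}$ converts $\mathbb{E}\big[\Phi(-|(\beta_1^*-\beta_2^*)^{\tau}\phi|/(2\sigma))\big]$ into the stated upper bound on the error probability.

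For the within-cluster error, the same reduction shows that, modulo asymptotically negligible terms stemming from $\beta_{k,i}\to\beta_i^*$, the $k$-th summand in (\ref{wce1}) equals $\min\big\{w_{k+1}^2,\big((\beta_1^*-\beta_2^*)^{\tau}\phi_k+w_{k+1}\big)^2\big\}$. I would then appeal to the asymptotic stationarity and ergodicity of $\{\phi_k\}$ together with the uniform integrability of $\{\|\phi_k\|^4\}$ in Assumption \ref{asm4} to apply an ergodic theorem, so that $J_n'$ converges to $\mathbb{E}\big[\min\{w^2,((\beta_1^*-\beta_2^*)^{\tau}\phi+w)^2\}\big]$. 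Evaluating this expectation in closed form — splitting the noise integral according to the sign of the minimand and using the Gaussian identity $wf(w)=-\sigma^2f'(w)$ — gives $\sigma^2+\eta(\phi)$ with $\eta$ as stated, while the inequality $\eta(\phi)\le0$ follows from the Mills-ratio bound $t\Phi(-t)\le\Phi'(-t)$.

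The main obstacle I anticipate is not the two closed-form evaluations, which are routine, but the rigorous justification of the passage to the limit in the adaptive setting. Because $\beta_{k,i}$ is time-varying and, through possible feedback, the law of $\phi_k$ may itself depend on the current estimate, I must carefully bound the discrepancy between the adaptive residuals and their idealized counterparts, absorbing the error from replacing $\beta_{k,i}$ by its limit into the ergodic average and invoking the equicontinuity assumption so that both the limiting probability and the ergodic average are identified with their known-parameter values. Controlling this interchange uniformly is the delicate step, and it mirrors the corresponding argument already carried out in the proof of Theorem \ref{dcp}.
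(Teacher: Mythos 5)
Your proposal is correct and follows essentially the same route as the paper: the paper itself omits the proof of Theorem \ref{odetheorem4}, stating only that it parallels the proof of Theorem \ref{dcp}, and your argument is precisely that adaptation — reduce to the known-parameter limit via Theorem \ref{odetheorem2}, compute the conditional miscategorization probability $\Phi\big(-|(\beta_1^*-\beta_2^*)^{\tau}\phi|/(2\sigma)\big)$ and apply the Gaussian tail bound, then evaluate the ergodic limit of the within-cluster error by the same event decomposition used for $J_n$. Your closed-form evaluations (including the factor $8\sigma^2$ in the exponent and the Mills-ratio argument for $\eta(\phi)\le 0$) all check out against the stated bounds.
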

	
	Similar to Example 1, if $\bar g$ has a specific form, one can also obtain the following concrete result:
	
	$\textbf{Example 2:}$
	Let conditions of Theorem \ref{odetheorem4} be satisfied and $\bar g$ be defined in Example 1.
	Then with the estimates $\beta_{k,1}$ and $\beta_{k,2}$ generated by Algorithm \ref{alg2}, we have
	$$
	\begin{aligned}
		&\lim\limits_{k\to\infty}P(\{\phi_k,y_{k+1}\} ~\text{is categorized correctly based on the}\\
		&~~~~~\text{estimates $\beta_{k,1}$, $\beta_{k,2}$})\geq1-\frac{1}{\sqrt{1+\frac{(\beta_1^*-\beta_2^*)^{\tau}\Sigma(\beta_1^*-\beta_2^*)}{4\sigma^2}}}.
	\end{aligned}
	$$
	\section{Proofs of the Main Results}\label{proof}
	In this section, we provide proofs of Theorems \ref{odetheorem1}-\ref{odetheorem4}.
	\subsection{Proof of Theorem 1}
	The celebrated Ljung's ODE method \cite{ljung1977analysis} provides a general analytical technique for recursive algorithms by establishing the relationship between the asymptotic behavior of the recursive algorithms and the stability of the corresponding ODEs.
	
	To apply the ODE method, we first note that Algorithm \ref{alg1} can be rewritten in the following equivalent form:
	\begin{equation}\label{recuralg}
		\begin{aligned}
			&\beta_{k+1}={\beta}_{k}+\frac{1}{k}Q_1(x_k,\phi_k,y_{k+1}),\\
			&R_{k+1}=R_{k}+\frac{1}{k}Q_2(x_k,\phi_k,y_{k+1}),\\
		\end{aligned}
	\end{equation}
	with $Q_1(x_k,\phi_k,y_{k+1})=R_{k+1}^{-1}{\phi}_{k}\big(y_{k+1}\tanh\big(\frac{\beta_k^{\tau}\phi_ky_{k+1}}{\sigma^2}\big)-{\beta}_{k}^{\tau}{\phi}_{k}\big)$, $Q_2(x_k,\phi_k,y_{k+1})=\phi_k\phi_k^{\tau}-R_{k}$, and $x_{k}=\left[
	\begin{array}{cc}
		\beta_{k}^{\tau} & \text{vec}^{\tau}(R_k) \\
	\end{array}
	\right]^{\tau}$, where $\text{vec}(\cdot)$ denotes the operator by stacking the columns of a matrix on top of one another.
	Then $x_k$ evolves according to the following form:
	\begin{equation}\label{recf}	
		x_{k+1}=x_{k}+\frac{1}{k}Q(x_k,\phi_k,y_{k+1}),
	\end{equation}
	where 
	\begin{equation}\label{QQQ}
		Q(x_k,\phi_k,y_{k+1})=\left[
		\begin{aligned}
			&Q_1(x_k,\phi_k,y_{k+1})\\ &\text{vec}\left(Q_2(x_k,\phi_k,y_{k+1})\right)\\
		\end{aligned}\right].
	\end{equation} 
	In order to analyze (\ref{recf}), we introduce the following ODEs:
	\begin{subequations}\label{ode}
		\begin{align}
			&\frac{d}{dt}\beta(t)=R^{-1}(t)f(\beta(t)),\label{odeee}\\			
			&\frac{d}{dt}R(t)=G-R(t),\label{odee}
		\end{align}	
	\end{subequations}
	where $f(\beta(t))=\lim\limits_{k\to\infty}\mathbb{E}\big[\phi_{k}\big(y_{k+1}\tanh\big(\frac{\beta^{\tau}(t) \phi_{k}y_{k+1}}{\sigma^2}\big)-\beta^{\tau}(t) \phi_{k}\big)\big]$ and $G=\lim\limits_{k\to\infty}\mathbb{E}\left[\phi_k\phi_k^{\tau}\right]$.
	
	The main results of Ljung's ODE method can be restated in the following proposition, which plays an important role in our analysis:
	
\begin{proposition}\cite{ljung1977analysis}\label{ljungtheorem}
	%
	%
	%
	%
	%
	Let $D$ be an open and connected subset of $\mathbb{R}^d$ and $\bar D$ be a compact subset of $D$ such that the trajectory of (\ref{ode}) starting in $\bar D$ remains in $\bar D$ for $t>0$. 
	Let also that there is an invariant set $D_c\subset\bar D$ of (\ref{ode}) such that its attraction domain $D_A\supset\bar D$.
	Then  $x_k\to D_c$ as $k\to \infty$ almost surely, provided that the following conditions are satisfied:
	
	B1) The function $Q(x,\phi,y)$ defined in (\ref{recf}) is locally Lipschitz continuous for $x\in D$ with fixed $\phi$ and $y$, that is, for $x_i\in\mathcal{U}(x,\rho(x))$ with $\rho(x)>0$,  $$\|Q(x_1,\phi,y)-Q(x_2,\phi,y)\|<\mathcal{R}(x,\phi,y,\rho(x))\|x_1-x_2\|,$$
	where $x=\left[
	\begin{array}{cc}
		\beta^{\tau} & \text{vec}^{\tau}(R) \\
	\end{array}
	\right]^{\tau}$, and $\mathcal{U}(x,\rho(x))$ is the $\rho(x)$-neighborhood of $x$, i.e., $\mathcal{U}(x,\rho(x))=\{\bar x:\|x-\bar x\|<\rho(x)\}$.
	
	B2) $\frac{1}{n}\sum\limits_{k=1}^n\mathcal{R}(x,\phi_k,y_{k+1},\rho(x))$ converges to a finite limit for any $x\in D$ as $n\to\infty$.
	
	B3)  $\lim\limits_{k\to\infty}\mathbb{E} \left[Q(x,\phi_k,y_{k+1})\right]$ exists for $x\in D$ and 
	\begin{equation}\label{ergodic}
		\lim\limits_{n\to\infty}\frac{1}{n}\sum\limits_{k=1}^nQ(x,\phi_k,y_{k+1})=\lim\limits_{k\to \infty}\mathbb{E} \left[Q(x,\phi_k,y_{k+1})\right].
	\end{equation}
	
	B4) There exists a positive constant $L$ such that the following events happen $\text{i.o. with probability 1}$:
	$$x_k\in\bar D~\text{and}~\|\phi_k\|\leq L.$$		
\end{proposition}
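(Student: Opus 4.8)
The plan is to establish this convergence statement by the classical stochastic-approximation argument underlying the ODE method: embed the discrete recursion (\ref{recf}) into continuous time and show that its interpolated trajectory asymptotically \emph{shadows} the solutions of the averaged ODE (\ref{ode}), whose right-hand side is the mean vector field $h(x)=\lim_{k\to\infty}\mathbb{E}[Q(x,\phi_k,y_{k+1})]$ (this limit exists and agrees with (\ref{ode}) by B3). Concretely, I would introduce the natural time scale generated by the step sizes, $t_k=\sum_{j=1}^{k-1}\frac1j$ with $a_j=1/j$, so that $t_{k+1}-t_k=a_k$ and $t_k\to\infty$, and define the piecewise-linear interpolation $\bar x(\cdot)$ with $\bar x(t_k)=x_k$. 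The whole problem then reduces to the following shadowing claim: for every $T>0$, the shifted paths $s\mapsto\bar x(t_m+s)$, $s\in[0,T]$, started from indices $m$ with $x_m\in\bar D$, converge uniformly as $m\to\infty$ to the solution of (\ref{ode}) issuing from $x_m$.

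To obtain the shadowing estimate I would decompose each increment into a mean-drift part and a fluctuation part,
$$x_{k+1}-x_k=\frac1k\,h(x_k)+\frac1k\big(Q(x_k,\phi_k,y_{k+1})-h(x_k)\big),$$
so that over a window of bounded $t$-length the accumulated drift approximates $\int_0^s h(\bar x)\,dr$, while the accumulated fluctuation is the quantity that must be shown negligible. The fluctuation is handled by a \emph{freezing} argument: partition $[0,T]$ into subintervals on which $x_k$ varies little (justified because boundedness on $\bar D$ together with B1 keeps the increments $O(1/k)$), replace $x_k$ by a frozen value $x$ on each subinterval, and invoke the ergodic averaging B3 — which guarantees $\frac1n\sum_k Q(x,\phi_k,y_{k+1})\to h(x)$ for each fixed $x$ — to drive the frozen fluctuation sums to zero. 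The error incurred by freezing is controlled by the local Lipschitz modulus, whose averaged finiteness is exactly condition B2, which also keeps the Gr\"onwall constant bounded. Feeding these bounds into a Gr\"onwall inequality (using the Lipschitz constant of $h$ on $\bar D$, finite by B1–B2) yields $\sup_{0\le s\le T}\|\bar x(t_m+s)-x(s)\|\to0$.

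With shadowing in hand, convergence to $D_c$ follows from the stability hypotheses. By the forward invariance of $\bar D$ assumed in the statement, any limiting trajectory obtained as a uniform limit of shifted paths stays in $\bar D$; since $\bar D\subset D_A$ and $D_c$ is invariant with attraction domain $D_A$, such a trajectory enters any prescribed neighborhood of $D_c$ after a finite time $T=T(\varepsilon)$ and remains near $D_c$ thereafter. Combining this with the recurrence condition B4 — which supplies infinitely many indices $m$ with $x_m\in\bar D$ and $\|\phi_m\|\le L$, so that the local shadowing analysis applies along these indices — I would run the standard two-neighborhood argument: fix an inner and an outer neighborhood of $D_c$; shadowing shows that once the interpolated path is near $D_c$ it cannot escape the outer neighborhood within time $T(\varepsilon)$, while the attraction drives it back into the inner one, so the path is eventually trapped. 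Shrinking the neighborhoods then gives $x_k\to D_c$ almost surely.

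The main obstacle is the fluctuation control of the second paragraph, precisely because the data $(\phi_k,y_{k+1})$ are only \emph{asymptotically} stationary and ergodic rather than i.i.d., and because $x_k$ itself evolves. The delicate point is to make the freezing uniform: one must simultaneously ensure that the frozen-$x$ ergodic sums converge (B3), that the freezing error is summable in the averaged Lipschitz sense (B2), and that the window partition can be refined without its growing cardinality destroying the estimate. Managing the coupling between the slowly varying $x_k$ and the merely asymptotically ergodic environment — rather than any single ingredient in isolation — is where the real work lies, and conditions B1–B4 are tailored precisely so that this coupling can be decoupled in the limit.
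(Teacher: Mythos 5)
This proposition is not proved in the paper at all: it is imported verbatim from Ljung (1977) as Proposition~\ref{ljungtheorem}, so there is no in-paper argument to compare against, only the cited source. Your sketch --- time-scale embedding $t_k=\sum_j 1/j$, piecewise-linear interpolation, drift/fluctuation decomposition with a freezing argument controlled by B2)--B3), Gr\"onwall shadowing over finite windows, and the recurrence condition B4) feeding the invariance/attraction trapping argument --- is precisely the structure of Ljung's original ODE-method proof, and it correctly identifies where each of B1)--B4) enters, so it is sound as a reconstruction of the cited argument.
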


In order to establish the convergence property of Algorithm \ref{alg1}, we will verify all the conditions in Proposition \ref{ljungtheorem}. 
For this purpose, we first provide a related lemma on the properties of the regressor, the hidden variable and the noise.		


\begin{lemma}\label{dk} 
	Under Assumptions \ref{asm2}-\ref{asm4}, $\{d_{k}\triangleq \left[
	\begin{array}{*{20}{c}}
		z_k & \phi_k^{\tau} &w_{k+1}		
	\end{array}
	\right]
	^{\tau}, k\geq 1\}$ is an asymptotically stationary and ergodic process with bounded fourth moment.
	In addition, any measurable function of $d_k$ is also an asymptotically stationary and ergodic stochastic process.
\end{lemma}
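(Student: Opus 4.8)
The plan is to handle the three constituent sequences $\{z_k\}$, $\{\phi_k\}$ and $\{w_{k+1}\}$ separately and then assemble the conclusions for $d_k$, exploiting the fact that the only non-stationary ingredient is $\{\phi_k\}$. The bounded fourth moment is the routine part: since $|z_k|\equiv 1$ and $w_{k+1}\sim\mathcal{N}(0,\sigma^2)$ has finite, $k$-independent moments, while the uniform integrability of $\{\|\phi_k\|^4\}$ in Assumption \ref{asm4} forces $\sup_k\mathbb{E}\|\phi_k\|^4<\infty$ (pick $a$ with $\sup_k\int_{[\|\phi_k\|^4>a]}\|\phi_k\|^4\,dP\le 1$, so that $\mathbb{E}\|\phi_k\|^4\le a+1$), I would combine the three bounds via a power-mean inequality applied to $\|d_k\|^4\le 3(|z_k|^4+\|\phi_k\|^4+|w_{k+1}|^4)$.

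For asymptotic stationarity, the key observation is that $\{z_k\}$ and $\{w_{k+1}\}$, being i.i.d., are strictly stationary, so the laws of their shifted tail sequences $(z_k,z_{k+1},\dots)$ and $(w_{k+1},w_{k+2},\dots)$ do not depend on $k$; all the $k$-dependence in the tail law of $\{d_k\}$ is carried by $\{\phi_k\}$. Using the independence structure of Assumptions \ref{asm2}--\ref{asm3}, I would express the tail law of $(d_k,d_{k+1},\dots)$ as the image, under the coordinate-interleaving map, of the product of the fixed $z$-law $\mu_z$, the $k$-dependent $\phi$-tail law, and the fixed $w$-law $\mu_w$; sectioning $C$ over the $z$- and $w$-coordinates then gives $|P((d_k,\dots)\in C)-P((d_{k+1},\dots)\in C)|\le\int\int|P((\phi_k,\dots)\in C_{z,w})-P((\phi_{k+1},\dots)\in C_{z,w})|\,d\mu_z\,d\mu_w$, which is dominated uniformly over the sections by the total-variation discrepancy between the laws of $(\phi_k,\phi_{k+1},\dots)$ and $(\phi_{k+1},\phi_{k+2},\dots)$ and hence tends to $0$ by the asymptotic stationarity of $\{\phi_k\}$ in Assumption \ref{asm4}.

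Ergodicity I would split into its two requirements. The existence of $\lim_k\mathbb{E}\|d_k\|$ follows from $\|d_k\|^2=1+\|\phi_k\|^2+w_{k+1}^2$ together with $w_{k+1}\perp\phi_k$: the convergence $g_k\to\bar g$ plus the uniform integrability of $\{\|\phi_k\|^4\}$ yields convergence of the relevant $\phi$-moments, while the $z$- and $w$-contributions have constant law. The time-average identity $\frac1n\sum_{k=1}^n d_k\to\lim_k\mathbb{E}\,d_k$ I would prove coordinate-wise: Kolmogorov's strong law for the i.i.d. sequences $\{z_k\}$ and $\{w_{k+1}\}$ (with constant means $2p-1$ and $0$), and the ergodicity of $\{\phi_k\}$ granted by Assumption \ref{asm4} for the middle block; stacking the three limits gives the claim.

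The propagation to an arbitrary measurable function $h$ is where the real work lies. Asymptotic stationarity transfers painlessly: applying $h$ coordinate-wise is a measurable self-map $H$ of the sequence space, so $\{(h(d_k),\dots)\in C\}=\{(d_k,\dots)\in H^{-1}(C)\}$ with $H^{-1}(C)$ Borel, and the estimate of the previous step applies verbatim. The obstacle is the ergodicity (law-of-large-numbers) statement for $h(d_k)$, since the definition only furnishes time-average convergence for the identity map, whereas $\frac1n\sum_{k=1}^n h(d_k)\to\lim_k\mathbb{E}\,h(d_k)$ is a genuinely stronger assertion. My plan here is to reduce to the strictly stationary case---using asymptotic stationarity to discard an asymptotically negligible initial segment and to couple the tail with a stationary version---and then to invoke the classical fact that a measurable function of a strictly stationary ergodic sequence is again stationary and ergodic, so that Birkhoff's ergodic theorem delivers the required averages. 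The delicate points I expect to fight with are (i) upgrading the ergodicity hypothesis on $\{\phi_k\}$, stated only as a law of large numbers, to a form usable under measurable maps, and (ii) keeping the bookkeeping honest given that Assumptions \ref{asm2}--\ref{asm3} supply a causal/filtration independence structure rather than full joint independence of the three sequences.
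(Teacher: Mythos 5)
Your overall route is the same as the paper's: treat the three coordinates separately, get the fourth moment from uniform integrability plus the elementary bound $\|d_k\|^4\le 3^3(|z_k|^4+\|\phi_k\|^4+|w_{k+1}|^4)$, transfer asymptotic stationarity through the product/independence structure, prove ergodicity coordinate-wise, and handle measurable functions by reducing to a stationary limit and citing the classical Stout-type result. The paper's own proof is only a sketch (it asserts the existence of shift transformations $T_k\to T$ with $T$ measure-preserving and then ``follows the proof-line'' of the stationary-ergodic case), so your more explicit decomposition is welcome. Two points, however, need attention. First, in the stationarity step you bound the sectioned integrand ``uniformly over the sections by the total-variation discrepancy between the laws of $(\phi_k,\phi_{k+1},\dots)$ and $(\phi_{k+1},\phi_{k+2},\dots)$'' and claim this tends to $0$ by Assumption \ref{asm4}. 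It does not: the paper's Definition 1 is a \emph{setwise} statement (the threshold $K$ may depend on the set $C$), which is strictly weaker than total-variation convergence. The conclusion is still reachable, but by a different mechanism: for each fixed $(z,w)$ the section $C_{z,w}$ is a fixed Borel set, so the integrand tends to $0$ pointwise and is bounded by $1$; dominated convergence then gives the vanishing of the integral for each fixed $C$, which is exactly what the definition requires. You should also be explicit that the product decomposition of the tail law uses joint independence of the three sequences, which Assumptions \ref{asm2}--\ref{asm3} only supply in a pairwise/causal form -- you flag this, and the paper silently makes the same strengthening.

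Second, and more importantly, the step you yourself identify as ``where the real work lies'' -- the law of large numbers for $h(d_k)$ -- is left as a plan rather than a proof, and the obstacle you name in point (i) is genuine: the paper's definition of ergodicity for an asymptotically stationary sequence only asserts the time-average convergence of the process \emph{itself} to its limiting mean, which is strictly weaker than ergodicity of the limiting stationary law (a mixture of two zero-mean i.i.d.\ Gaussian sequences with different variances satisfies the first-moment LLN but fails it for $x\mapsto x^2$). Passing from this hypothesis to the LLN for arbitrary measurable functions therefore requires an additional input -- either ergodicity of the limiting shift-invariant measure or some mixing/coupling property of $\{\phi_k\}$ -- that neither your proposal nor the paper's appeal to \cite{stout1974almost} actually supplies. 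Your coupling-with-a-stationary-version plan is the right shape, but as written it does not close this gap; to complete the argument you would need to state and use a strengthened ergodicity hypothesis on $\{\phi_k\}$ (triviality of the invariant $\sigma$-field of the limiting process), after which Birkhoff's theorem and your truncation of the initial segment do finish the job.
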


This lemma can be easily obtained by Assumptions \ref{asm2}-\ref{asm4} and the ergodicity theorem of the stationary process, and the proof details are provided in Appendix \ref{app1}.

\begin{lemma}\label{regular}
	Consider Algorithm \ref{alg1} subject to Assumptions \ref{asm2}-\ref{asm4}. Then Conditions B1)-B3) in Proposition \ref{ljungtheorem} are satisfied in the open area $D=\{x: R>0\}$ with $x=\left[
	\begin{array}{cc}
		\beta^{\tau} & \text{vec}^{\tau}(R)
	\end{array}
	\right]^{\tau}$.
\end{lemma}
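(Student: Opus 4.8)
The plan is to verify conditions B1)--B3) separately, exploiting the structure of $Q(x,\phi,y)$ as decomposed in (\ref{QQQ}). Throughout I would fix $x=[\beta^\tau~\text{vec}^\tau(R)]^\tau$ in the open set $D=\{x:R>0\}$ and use the boundedness of the hyperbolic tangent, namely $|\tanh(\cdot)|\le 1$ and $|\tanh'(\cdot)|=|1-\tanh^2(\cdot)|\le 1$, as the main analytic tool.

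For condition B1) (local Lipschitz continuity in $x$ with $\phi,y$ fixed), I would treat $Q_1$ and $Q_2$ in turn. The map $Q_2(x,\phi,y)=\phi\phi^\tau-R$ is affine in $R$ and independent of $\beta$, so it is trivially Lipschitz with constant $1$. For $Q_1$, I would write it as a product of $R^{-1}$ (matrix inversion, which is smooth hence locally Lipschitz on the open set $R>0$, with a local Lipschitz modulus controlled by $\lambda_{\min}(R)^{-2}$ on a small neighborhood $\mathcal{U}(x,\rho(x))$) and the scalar-weighted vector $\phi(y\tanh(\beta^\tau\phi y/\sigma^2)-\beta^\tau\phi)$. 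The latter is Lipschitz in $\beta$: the difference for two values $\beta_1,\beta_2$ is bounded using the mean value theorem on $\tanh$ together with $|\tanh'|\le 1$, yielding a bound of the form $(\|\phi\|^2|y|^2/\sigma^2+\|\phi\|^2)\|\beta_1-\beta_2\|$. Collecting these, the overall modulus $\mathcal{R}(x,\phi,y,\rho(x))$ will be a polynomial in $\|\phi\|$ and $|y|$ (degree at most $2$ in each) with coefficients depending on $\rho(x)$ and on $\lambda_{\min}(R)$ over the neighborhood.

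For conditions B2) and B3), I would invoke Lemma \ref{dk}. Since $y_{k+1}=z_k\beta^{*\tau}\phi_k+w_{k+1}$ is a measurable function of $d_k$, and $\mathcal{R}(x,\phi_k,y_{k+1},\rho(x))$ and $Q(x,\phi_k,y_{k+1})$ (for each fixed $x$) are themselves measurable functions of $d_k$, Lemma \ref{dk} guarantees that these are asymptotically stationary and ergodic processes. The bounded fourth moment of $d_k$, combined with the fact that $\mathcal{R}$ and the entries of $Q$ grow at most polynomially of degree at most $4$ in $(\|\phi_k\|,|y_{k+1}|)$ (note $Q_2$ contributes the $\phi_k\phi_k^\tau$ term of degree $2$, and $Q_1$ at most degree $2$ after the $\tanh$ factor is bounded by $1$), ensures these functions are integrable. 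Then the ergodic averaging property in the definition of ergodicity directly yields the a.s.\ convergence of the Ces\`aro sums required in B2) and (\ref{ergodic}) in B3), while the existence of $\lim_{k\to\infty}\mathbb{E}[Q(x,\phi_k,y_{k+1})]$ follows from the asymptotic stationarity plus uniform integrability.

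The main obstacle I anticipate is handling the matrix-inversion factor $R^{-1}$ in $Q_1$ carefully on the open (non-compact) domain $D$. Local Lipschitz continuity is only claimed on a neighborhood $\mathcal{U}(x,\rho(x))$, so I must choose $\rho(x)$ small enough that $\lambda_{\min}(R)$ stays bounded away from zero across the whole neighborhood; this keeps $\|R^{-1}\|$ and its Lipschitz modulus finite and makes $\mathcal{R}$ well-defined. The bookkeeping of combining the Lipschitz estimate for $R^{-1}$ with that of the $\tanh$-weighted vector (a product rule for Lipschitz maps, requiring local boundedness of each factor on the neighborhood) is the most delicate routine step; everything else reduces to bounding polynomials in $(\|\phi_k\|,|y_{k+1}|)$ and appealing to Lemma \ref{dk}. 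I would therefore front-load the choice of $\rho(x)$ and the bound on $\|R^{-1}\|$ before assembling the final estimate.
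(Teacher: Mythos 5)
Your proposal is correct and follows essentially the same route as the paper: $Q_2$ is handled as trivially Lipschitz with constant $1$, $Q_1$ is handled by controlling $\lambda_{\min}(R)$ on a small neighborhood together with the boundedness of $\tanh$ and $\tanh'$ (the paper bounds the supremum of the partial derivatives of $Q_1$ over $\mathcal{U}(x,\rho(x))$ using $\partial R^{-1}/\partial R=-R^{-1}\otimes R^{-1}$, which is the derivative form of your product-rule argument and yields the same polynomial modulus in $\|\phi\|$ and $|y|$), and B2)--B3) are reduced to Lemma \ref{dk} exactly as you describe. No gaps.
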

\begin{proof}
	Let us first verify Condition B1).	
	For $x\in D$ and $x_i\in\mathcal{U}(x,\rho(x))$ ($i=1,2$), by (\ref{recf}), we have 
	$$
		\begin{aligned}
			\|Q_1(x_1,\phi,y)-Q_1(x_2,\phi,y)\|	\leq \mathcal{R}_1(x,\phi,y,\rho)\|x_1-x_2\|,
		\end{aligned}
	$$
	where
	\begin{equation}\label{R1}
		\begin{aligned}
			&\mathcal{R}_1=\sup\limits_{\mathcal{U}(x,\rho)}\bigg[\big\|\frac{\partial Q_1(x,\phi,y)}{\partial \beta}\big\|+\big\|\frac{\partial Q_1(x,\phi,y)}{\partial R}\big\|\bigg]\\
			\leq&\sup\limits_{\mathcal{U}(x,\rho)}\bigg[\frac{\|\phi\|^2}{\lambda_{min}(R)}\left(\frac{y^2}{\sigma^2}+1\right)+\frac{\|\phi\||y|+\|\phi\|^2\|\beta\|}{\lambda_{min}^2(R)}\bigg]
		\end{aligned}
	\end{equation}
	with $\rho(x)$ being sufficiently small such that any point $\bar x$ in the area $\mathcal{U}(x,\rho(x))$ has the property that $\lambda_{min}(\bar R)>0$, and the second inequality holds by $\frac{\partial R^{-1}}{\partial R}=-R^{-1}\otimes R^{-1}$ \cite{ahmed2008matrix},
	$$\begin{aligned}
		&\left\|\frac{\partial R^{-1}}{\partial R}\right\|=\left\|R^{-1}\otimes R^{-1}\right\|=\frac{1}{\lambda_{min}^2(R)},
	\end{aligned}
	$$
	where $\otimes$ is the Kronecker product for matrices.
	For $Q_2(x,\phi,y)$ in (\ref{recf}), we have $$
	\begin{aligned}
		\|Q_2(x_1,\phi,y)-Q_2(x_2,\phi,y)\|\leq \|x_1-x_2\|.
	\end{aligned}
	$$
	Thus for fixed $\phi$ and $y$, $Q(x,\phi,y)$ is locally Lipschitz continuous with respect to $x\in D$, and Condition B1) is satisfied.
	
	For the verification of Condition B2), we only need to prove that $\mathcal{R}_1$ satisfies B2) since $\mathcal{R}_2=1$.
	Moverover, since $\mathcal{R}_1$ is defined as the Lipschitz constant, we only need to verify its upper bound defined in (\ref{R1}) to satisfy B2), which is quite obvious because $\|\phi_k\|y_{k+1}$, $\|\phi_k\|^2y_{k+1}^2$ and $\|\phi_k\|^2$ are all asymptotically stationary and ergodic by Lemma \ref{dk} and model (\ref{model2}), and the supremum over $\rho(x)$ only concerns with $\beta$ and $R$ and is independent of the time instant $k$.
	
	The verification of Condition B3) is straightforward by Lemma \ref{dk} and model (\ref{model2}), and the details are omitted.		
	This completes the proof of Lemma \ref{regular}.
\end{proof}

Before formally proving Theorem \ref{odetheorem1}, we first provide some elementary lemmas with their proofs given in Appendix \ref{app1}.
\begin{lemma}\label{tanh1}
	For a random variable $y\sim p\mathcal{N}(a,\sigma^2)+(1-p)\mathcal{N}(-a,\sigma^2)$ with $p\in[0,1]$ and $a$ being a constant, we have $\mathbb{E}\left[y\tanh\left(\frac{ay}{\sigma^2}\right)\right]=a$.
\end{lemma}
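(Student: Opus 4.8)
The plan is to exploit the symmetry built into the mixture. First I would write the density of $y$ explicitly as $f(y)=p\varphi_a(y)+(1-p)\varphi_{-a}(y)$, where $\varphi_\mu(y)=\frac{1}{\sqrt{2\pi\sigma^2}}\exp\left(-\frac{(y-\mu)^2}{2\sigma^2}\right)$ denotes the $\mathcal{N}(\mu,\sigma^2)$ density. All the integrals below are well defined, since $|\tanh|\le 1$ forces $|y\tanh(ay/\sigma^2)|\le|y|$, and $y$ has a finite first absolute moment under each Gaussian component.

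The key observation is that $g(y):=y\tanh(ay/\sigma^2)$ is an \emph{even} function of $y$, because $\tanh$ is odd and the two sign changes cancel. Since the reflection $y\mapsto -y$ carries $\varphi_a$ to $\varphi_{-a}$, evenness of $g$ gives $\int g(y)\varphi_a(y)\,dy=\int g(y)\varphi_{-a}(y)\,dy$. Hence the two mixture components contribute equally, and $\mathbb{E}[g(y)]$ does not depend on the weight $p$; it equals $M:=\mathbb{E}_{y\sim\mathcal{N}(a,\sigma^2)}[g(y)]$ for every $p\in[0,1]$.

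It then suffices to evaluate $M$, which I would do at the convenient value $p=\frac12$. Writing $\tanh(ay/\sigma^2)=\sinh(ay/\sigma^2)/\cosh(ay/\sigma^2)$ and using $\varphi_a(y)+\varphi_{-a}(y)=\frac{2}{\sqrt{2\pi\sigma^2}}e^{-(y^2+a^2)/2\sigma^2}\cosh(ay/\sigma^2)$ together with $\varphi_a(y)-\varphi_{-a}(y)=\frac{2}{\sqrt{2\pi\sigma^2}}e^{-(y^2+a^2)/2\sigma^2}\sinh(ay/\sigma^2)$, the $\cosh$ factors cancel and one gets the exact pointwise identity $\tanh(ay/\sigma^2)\cdot\frac12\left(\varphi_a(y)+\varphi_{-a}(y)\right)=\frac12\left(\varphi_a(y)-\varphi_{-a}(y)\right)$. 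Integrating against $y$ and using $\int y\varphi_{\pm a}(y)\,dy=\pm a$ yields $\frac12\left(a-(-a)\right)=a$, so $M=a$ and therefore $\mathbb{E}\left[y\tanh(ay/\sigma^2)\right]=a$ for all $p$.

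I do not anticipate a genuine obstacle here, as the computation is elementary; the only points requiring a little care are the evenness reduction, which is precisely what eliminates the dependence on $p$, and the hyperbolic-function cancellation. As an alternative that treats general $p$ in a single pass, I could expand $p\varphi_a+(1-p)\varphi_{-a}$ directly and use $uv=1$ for $u=e^{ay/\sigma^2}$, $v=e^{-ay/\sigma^2}$ to rewrite $\tanh(ay/\sigma^2)f(y)$ as $p\varphi_a(y)-(1-p)\varphi_{-a}(y)$ plus a remainder proportional to $e^{-(y^2+a^2)/2\sigma^2}/(u+v)$, which is even in $y$ and hence integrates to zero against $y$; this again gives $pa+(1-p)a=a$.
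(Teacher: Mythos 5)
Your proposal is correct and follows essentially the same route as the paper: use the evenness of $y\tanh(ay/\sigma^2)$ in $y$ to show the two mixture components contribute equally (so the answer is independent of $p$), then reduce to the balanced case where the expectation equals $a$. The only difference is that you also verify the balanced-case identity explicitly via the $\sinh/\cosh$ cancellation, whereas the paper simply cites it as a known fact.
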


\begin{lemma}\label{tanh2}\cite{daskalakis2017ten}
	For a random variable $y\sim\mathcal{N}(a,\sigma^2)$ with constants $a$ and $\hat a$ satisfying $a\hat a\geq0$, we have
	$$\begin{aligned}
		&\mathbb{E}\left[\frac{\hat ay}{\sigma^2}\tanh'\left(\frac{\hat ay}{\sigma^2}\right)\right]\geq0,\\
		&\mathbb{E}\left[\tanh\left(\frac{\hat ay}{\sigma^2}\right)\right]\geq 1-\exp\left(-\frac{|a|\min\left(|a|,|\hat a|\right)}{2\sigma^2}\right),
	\end{aligned}
	$$
	where $\tanh'(\cdot)$ is the derivative function of $\tanh(\cdot)$.
\end{lemma}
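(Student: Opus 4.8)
The plan is to prove the two inequalities separately, after a common symmetrization that reduces everything to the case $a\geq 0$, $\hat a\geq 0$. The hypothesis $a\hat a\geq 0$ says that $a$ and $\hat a$ share a sign, so it suffices to handle the case where both are nonnegative: under the simultaneous flip $(a,\hat a,y)\mapsto(-a,-\hat a,-y)$ the argument $\frac{\hat a y}{\sigma^2}$ is invariant, while the law $y\sim\mathcal{N}(a,\sigma^2)$ is carried to $-y\sim\mathcal{N}(-a,\sigma^2)$; hence both expectations are unchanged and I may assume $a\geq 0$, $\hat a\geq 0$. It is convenient to set $t=\frac{\hat a y}{\sigma^2}\sim\mathcal{N}(\mu,\tau^2)$ with $\mu=\frac{a\hat a}{\sigma^2}\geq 0$ and $\tau^2=\frac{\hat a^2}{\sigma^2}$, and to let $p_\mu$ denote the density of $t$.

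For the first inequality I would use a reflection argument. The function $g(x)=x\tanh'(x)$ is odd, because $\tanh'$ is even, and $g(x)\geq 0$ for $x\geq 0$. Folding the integral about the origin gives $\mathbb{E}[g(t)]=\int_0^\infty g(x)\,[p_\mu(x)-p_\mu(-x)]\,dx$. For $x\geq 0$ and $\mu\geq 0$ one has $(x-\mu)^2\leq(x+\mu)^2$, so $p_\mu(x)\geq p_\mu(-x)$, and together with $g(x)\geq 0$ the integrand is nonnegative, which yields $\mathbb{E}\left[\frac{\hat a y}{\sigma^2}\tanh'\left(\frac{\hat a y}{\sigma^2}\right)\right]\geq 0$.

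For the second inequality I would split according to whether $\hat a\leq a$ or $\hat a>a$. In the first regime I would use the pointwise identity $1-\tanh(t)=\frac{2}{e^{2t}+1}$ together with the elementary bound $\frac{2}{e^{2t}+1}\leq e^{-t}$, which is equivalent to $(e^{t}-1)^2\geq 0$. This gives $\mathbb{E}[1-\tanh(t)]\leq\mathbb{E}[e^{-t}]=\exp(-\mu+\tfrac{\tau^2}{2})=\exp\!\big(\tfrac{\hat a^2/2-a\hat a}{\sigma^2}\big)$, and a direct check shows $\hat a^2/2-a\hat a\leq -\tfrac{a\hat a}{2}$ precisely when $\hat a\leq a$, so the right-hand side is at most $\exp\!\big(-\tfrac{a\hat a}{2\sigma^2}\big)=\exp\!\big(-\tfrac{a\min(a,\hat a)}{2\sigma^2}\big)$, which is the claimed bound since here $\min(a,\hat a)=\hat a$.

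The regime $\hat a>a$ is where the main obstacle lies: now $\min(a,\hat a)=a$, the target constant is frozen at $\exp(-\tfrac{a^2}{2\sigma^2})$, and the clean exponential bound above overshoots. The idea I would use is monotonicity in $\hat a$, which couples this part to the first inequality. Writing $F(\hat a)=\mathbb{E}[\tanh(\tfrac{\hat a y}{\sigma^2})]$, differentiation under the integral gives $F'(\hat a)=\tfrac{1}{\hat a}\,\mathbb{E}\big[\tfrac{\hat a y}{\sigma^2}\tanh'(\tfrac{\hat a y}{\sigma^2})\big]\geq 0$ by the first inequality, so $F$ is nondecreasing on $(0,\infty)$. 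Evaluating the bound from the previous paragraph at $\hat a=a$ yields $F(a)\geq 1-\exp(-\tfrac{a^2}{2\sigma^2})$, and hence for $\hat a>a$ I get $F(\hat a)\geq F(a)\geq 1-\exp(-\tfrac{a^2}{2\sigma^2})$, as required. The degenerate cases $a=0$ or $\hat a=0$ make the right-hand side zero, and $F\geq 0$ there follows from the same reflection argument as above, now applied to $\tanh$, which is odd and nonnegative on $[0,\infty)$. I expect the delicate point to be recognizing that the elementary bound only delivers the sharp constant for $\hat a\leq a$, and that the complementary regime must be bridged through the monotonicity supplied by the first inequality.
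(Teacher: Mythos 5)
Your proof is correct. Note that the paper does not prove this lemma at all --- it is imported verbatim from Daskalakis et al.\ \cite{daskalakis2017ten} --- so there is no in-paper argument to compare against; what you have produced is a self-contained elementary replacement for the citation. All the individual steps check out: the sign-flip reduction to $a,\hat a\geq 0$ is legitimate because $\frac{\hat a y}{\sigma^2}$ is invariant under $(a,\hat a,y)\mapsto(-a,-\hat a,-y)$ while $\mathcal{N}(a,\sigma^2)$ maps to $\mathcal{N}(-a,\sigma^2)$; the folding argument for $\mathbb{E}[t\tanh'(t)]\geq 0$ correctly uses that $x\tanh'(x)$ is odd and nonnegative on $[0,\infty)$ together with $p_\mu(x)\geq p_\mu(-x)$ for $x,\mu\geq 0$; the identity $1-\tanh(t)=\frac{2}{e^{2t}+1}\leq e^{-t}$ combined with the Gaussian MGF gives exactly $\exp\bigl(\frac{\hat a^2/2-a\hat a}{\sigma^2}\bigr)$, which is $\leq\exp\bigl(-\frac{a\hat a}{2\sigma^2}\bigr)$ precisely when $\hat a\leq a$; and the bridge to the regime $\hat a>a$ via monotonicity of $F(\hat a)=\mathbb{E}[\tanh(\tfrac{\hat a y}{\sigma^2})]$, whose derivative is $\frac{1}{\hat a}\mathbb{E}[t\tanh'(t)]\geq 0$ by the first inequality, is the right way to keep the frozen constant $\exp(-\frac{a^2}{2\sigma^2})$. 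The only cosmetic caveat is the degenerate case $\hat a=0$, where the density of $t$ does not exist and the folding argument should be replaced by the trivial observation that $t\equiv 0$; you handle the conclusion correctly there in any case. The split at $\hat a=a$ and the use of the first inequality to propagate the bound is indeed the essential structural insight, and your writeup identifies it as such.
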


\begin{lemma}\label{le22}
	For $x>0$ and $c>0$, the function
	\begin{equation}\label{o1}
		F(c,x)=\int_{-\infty}^{\infty}f(c,x,w)\exp\left(-\frac{w^2}{2\sigma^2}\right)dw
	\end{equation}
	is increasing with respect to $x$, where $f(c,x,w)=(w+x)\tanh\left(\frac{c(w+x)}{\sigma^2}\right)+(w-x)\tanh\left(\frac{c(w-x)}{\sigma^2}\right)$.
\end{lemma}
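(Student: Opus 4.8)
The plan is to show that $\partial F/\partial x>0$ for every $x>0$, exploiting the even/odd symmetry hidden in the integrand. Writing $g(u)=u\tanh\!\left(\frac{cu}{\sigma^2}\right)$, the integrand factorizes as $f(c,x,w)=g(w+x)+g(w-x)$, so that $\partial_x f(c,x,w)=g'(w+x)-g'(w-x)$. The first observation is that $g$ is \emph{even}, being the product of the two odd functions $u$ and $\tanh(cu/\sigma^2)$; hence $g'$ is \emph{odd}. Moreover,
$$g'(u)=\tanh\!\left(\frac{cu}{\sigma^2}\right)+\frac{cu}{\sigma^2}\,\text{sech}^2\!\left(\frac{cu}{\sigma^2}\right)>0\qquad\text{for }u>0,$$
since it is a sum of two strictly positive terms. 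Differentiating under the integral sign (justified below) gives
$$\frac{\partial F}{\partial x}=\int_{-\infty}^{\infty}\left[g'(w+x)-g'(w-x)\right]\exp\!\left(-\frac{w^2}{2\sigma^2}\right)dw.$$

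Next I shift the integration variable so that $x$ moves into the Gaussian weight rather than sitting inside $g'$. Substituting $s=w+x$ in the first term and $s=w-x$ in the second yields
$$\frac{\partial F}{\partial x}=\int_{-\infty}^{\infty}g'(s)\,K(s)\,ds,\qquad K(s)=\exp\!\left(-\frac{(s-x)^2}{2\sigma^2}\right)-\exp\!\left(-\frac{(s+x)^2}{2\sigma^2}\right).$$
The kernel $K$ is odd in $s$, and for $s>0$, $x>0$ one has $|s-x|<s+x$, so the first exponential dominates and $K(s)>0$ on $(0,\infty)$. Since $g'$ is odd and $K$ is odd, the product $g'(s)K(s)$ is even, and I fold the integral onto the half-line:
$$\frac{\partial F}{\partial x}=2\int_{0}^{\infty}g'(s)\,K(s)\,ds>0,$$
because both factors are strictly positive on $(0,\infty)$. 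This establishes the claimed strict monotonicity of $F$ in $x$.

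The only point requiring care is the interchange of differentiation and integration. Since $g'$ is continuous and bounded on $\mathbb{R}$ (indeed $g'(u)\to1$ as $u\to\pm\infty$), the $x$-derivative of the integrand is dominated, uniformly for $x$ in any bounded interval, by a constant multiple of the Gaussian weight $\exp(-w^2/(2\sigma^2))$, which is integrable; dominated convergence then licenses the interchange. I expect this to be the least interesting yet most error-prone step. Everything else reduces to the two sign facts—$g'>0$ and $K>0$ on the positive axis—together with the even/odd bookkeeping, which is the real engine of the argument. It is worth noting that neither monotonicity nor convexity of $g'$ is needed: $g'$ in fact rises above $1$ and then decreases back to $1$, but only its \emph{positivity} on $(0,\infty)$ enters, so this non-monotone behavior causes no difficulty.
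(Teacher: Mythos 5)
Your proof is correct. It follows the same overall strategy as the paper's — differentiate under the integral (justified by the boundedness of $g'$ and the Gaussian weight) and exploit odd/even symmetry to fold the integral onto the half-line — but with a cleaner decomposition. The paper splits $\partial_x f$ into two pieces, $L_1(c,x)=\int\bigl[\tanh\bigl(\tfrac{c(w+x)}{\sigma^2}\bigr)-\tanh\bigl(\tfrac{c(w-x)}{\sigma^2}\bigr)\bigr]e^{-w^2/(2\sigma^2)}dw$ and $L_2(c,x)$ containing the $z\tanh'(z)$ differences; it proves $L_1>0$ by folding in $w$ and invoking the \emph{monotonicity} of $\tanh$, and $L_2>0$ by the shift-into-the-Gaussian trick together with the oddness of $z\tanh'(z)$. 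You instead keep $\partial_x f=g'(w+x)-g'(w-x)$ intact with $g(u)=u\tanh(cu/\sigma^2)$ and apply the change of variables to the whole expression at once, so that the entire argument reduces to two sign facts: $g'>0$ on $(0,\infty)$ and the kernel $K(s)>0$ on $(0,\infty)$. This buys a small economy — you never need $\tanh$ to be increasing, only $g'$ to be positive, and you handle one integral rather than two — at no cost in rigor. One trivial slip: $g'(u)\to -1$ (not $+1$) as $u\to-\infty$, consistent with the oddness of $g'$ that you yourself use; this does not affect the boundedness claim or anything downstream.
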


\begin{lemma}\cite{huang1990estimation}\label{project}
	Suppose that $\Psi$ and $Y$ are any $n\times l$ and $n\times r$-dimensional matrices, respectively and $\Psi^{\tau}\Psi$ is invertible. Then
	$$Y^{\tau}\Psi (\Psi^{\tau}\Psi)^{-1}\Psi^{\tau} Y\leq Y^{\tau}Y.$$
\begin{lemma}\cite{hahn1967stability}\label{vvv}
	Consider the following ODE:
	$$
	\frac{dV(t)}{dt}=-V(t)+r(t),
	$$
	where $\lim\limits_{t\to\infty} r(t)=0$, then we have $\lim\limits_{t\to\infty} V(t)=0$.
\end{lemma}	
\end{lemma}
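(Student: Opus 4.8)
The plan is to recognize the matrix $P\triangleq\Psi(\Psi^{\tau}\Psi)^{-1}\Psi^{\tau}$ as the orthogonal projection onto the column space of $\Psi$, and to exploit its two defining algebraic properties, symmetry and idempotency, in order to recast the claimed inequality as the positive semi-definiteness of the complementary projection $I-P$. Concretely, the asserted inequality $Y^{\tau}\Psi(\Psi^{\tau}\Psi)^{-1}\Psi^{\tau}Y\leq Y^{\tau}Y$ is equivalent to $Y^{\tau}(I-P)Y\geq0$, so the entire argument reduces to showing that $I-P$ is positive semi-definite, which I will do via a Gram-matrix factorization.

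First I would verify that $P$ is symmetric. This is immediate, since $(\Psi^{\tau}\Psi)^{-1}$ is symmetric as the inverse of a symmetric matrix, whence $P^{\tau}=\Psi((\Psi^{\tau}\Psi)^{-1})^{\tau}\Psi^{\tau}=\Psi(\Psi^{\tau}\Psi)^{-1}\Psi^{\tau}=P$. Next I would check idempotency by direct multiplication: $P^2=\Psi(\Psi^{\tau}\Psi)^{-1}(\Psi^{\tau}\Psi)(\Psi^{\tau}\Psi)^{-1}\Psi^{\tau}=\Psi(\Psi^{\tau}\Psi)^{-1}\Psi^{\tau}=P$, where the hypothesis that $\Psi^{\tau}\Psi$ is invertible is precisely what legitimizes the middle cancellation. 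These two facts together make $M\triangleq I-P$ symmetric and idempotent as well, because $M^{\tau}=I-P^{\tau}=I-P=M$ and $M^2=I-2P+P^2=I-2P+P=I-P=M$.

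With the symmetry and idempotency of $M$ in hand, the key step is the factorization $M=M^2=M^{\tau}M$, from which $Y^{\tau}MY=Y^{\tau}M^{\tau}MY=(MY)^{\tau}(MY)\geq0$ for every conformable matrix $Y$, since any Gram matrix of the form $Z^{\tau}Z$ is automatically positive semi-definite. Substituting $M=I-P$ then gives $Y^{\tau}Y-Y^{\tau}\Psi(\Psi^{\tau}\Psi)^{-1}\Psi^{\tau}Y\geq0$, which is exactly the claimed bound.

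I do not anticipate any genuine obstacle here, as the statement is a classical fact about orthogonal projections; the only point that warrants a little care is invoking the invertibility hypothesis on $\Psi^{\tau}\Psi$ at the right place, namely to justify both the existence of $P$ and the cancellation in the idempotency computation. Note that the argument uses no probabilistic structure whatsoever, so it holds verbatim for arbitrary real matrices $\Psi\in\mathbb{R}^{n\times l}$ and $Y\in\mathbb{R}^{n\times r}$ satisfying the stated invertibility condition.
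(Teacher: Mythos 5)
Your proof of the projection inequality is correct and complete. There is, however, no paper proof to compare it against: the paper states this lemma as a result quoted from \cite{huang1990estimation} and gives no argument of its own. Your route is the classical one --- $P=\Psi(\Psi^{\tau}\Psi)^{-1}\Psi^{\tau}$ is symmetric and idempotent, so $I-P$ equals its own Gram matrix $(I-P)^{\tau}(I-P)$, whence $Y^{\tau}(I-P)Y=\bigl((I-P)Y\bigr)^{\tau}\bigl((I-P)Y\bigr)\geq 0$ --- and it establishes the claim in exactly the sense the paper uses it (the ordering $A\geq B$ meaning $A-B$ is positive semi-definite), and for matrix-valued $Y$ as stated, not merely vectors. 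Your handling of the invertibility hypothesis is also placed correctly: it is needed both for $P$ to exist and for the cancellation $P^{2}=P$. One caveat: owing to an environment-nesting typo in the source, the statement block you were given also contains a second, logically separate lemma (the ODE comparison fact from \cite{hahn1967stability} that $\frac{dV}{dt}=-V+r$ with $r(t)\to 0$ forces $V(t)\to 0$), which your proposal does not address; that result is likewise cited without proof in the paper and would require its own (also standard, variation-of-constants) argument if a self-contained treatment were desired.
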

	\noindent\hspace{1em}{\textbf{\itshape Proof of Theorem \ref{odetheorem1}:}}
	We prove Theorem \ref{odetheorem1} by verifying all the conditions required in Proposition \ref{ljungtheorem}.
	By Lemma \ref{regular}, it only remains to verify the required properties of the trajectory of the ODEs (\ref{ode}) (see Steps 1-3) as well as Condition B4) (see Step 4).
	Without loss of generality, we focus on the analysis for the case of $\beta^*\ne0$ and then give some additional explanations for the case of $\beta^*=0$.
	
	We now investigate the properties of (\ref{ode}). 	
	Specifically, we prove the following assertion:
	\begin{equation}\label{fact}
		\begin{aligned}
			&\text{the ODEs (\ref{ode}) has the invariant set}~D_{c}=\{x:\beta=\\
			&\{\beta^*,-\beta^*,0\},R=G\}\ \hbox{with the domain of attraction}~\\
			&D_{A}=\{x:\|R-G\|<\varepsilon\}.
		\end{aligned}		
	\end{equation}
	where $x=\left[
	\begin{array}{cc}
		\beta^{\tau} & \text{vec}^{\tau}(R)
	\end{array}
	\right]^{\tau}$ and $\varepsilon$ is a positive constant to be determined later. 		
	We prove the assertion (\ref{fact}) by establishing the stability properties of (\ref{ode}) on three subsets
	$D_{A,1}=\{x:\beta^{\tau}\beta^*>0, \|R-G\|<\varepsilon\},$
	$D_{A,2}=\{x:\beta^{\tau}\beta^*<0, \|R-G\|<\varepsilon\}$ and
	$D_{A,3}=\{x:\beta^{\tau}\beta^*=0, \|R-G\|<\varepsilon\}$ of the domain of attraction $D_A$.	
	For the case of $x(0)\in D_{A,1}$, we show that $x(t)\in D_{A,1}$ for $t\geq0$ in Steps 1-2 below and then establish the corresponding stability result in Step 3 below.  
	
	From (\ref{odee}), we have
	\begin{equation}\label{R11}
		R(t)=G+e^{-t}(R(0)-G).
	\end{equation}
	By (\ref{ode}) and the fact that the asymptotically stationary p.d.f of $\phi_k$ has the rotation invariant property, we have $G=c_0I$ with $c_0$ being a positive constant \cite{fang2018symmetric}.
	From (\ref{R11}), it follows that $\|R(t)-G\|\leq\|R(0)-G\|$. 
	So it suffices to show $\beta^{\tau}(t)\beta^*>0$ when proving $x(t)\in D_{A,1}$ for $t\geq0$.		
	For this, we derive the lower and upper bounds of $\beta(t)$ in (\ref{odeee}) in Step 1 below. 
	
	\textbf{Step 1: Boundedness of $\|\beta(t)\|$ generated by (\ref{odeee}).}	
	
	For the convenience of analysis, we choose a set of standard orthogonal basis $\{v_1(t),\cdots,v_{d}(t)\}$, where $v_1(t)=\frac{\beta(t)}{\|\beta(t)\|}$ and $v_2(t)$ belongs to span$\{\beta(t),\beta^*\}$. 
	Then $\beta(t)$, $\beta^*$ and $f(\beta(t))$ defined in (\ref{ode}) can be written as follows:
	\begin{equation}\label{betafenjie}
		\begin{aligned}
			&\beta(t)=\sum\nolimits_{i=1}^db_i(t)v_i(t),~\beta^*=\sum\nolimits_{i=1}^db_i^*(t)v_i(t),\\
			&f(\beta(t))=\sum\nolimits_{i=1}^dh_i(\beta(t))v_i(t),
		\end{aligned}		
	\end{equation}
	where $b_{1}(t)=\|\beta(t)\|$, $b_i(t)\equiv0 (i>1)$, $b_1^*(t)=\beta^{*\tau}v_1(t)$, $b_2^*(t)=\beta^{*\tau}v_2(t)$, $b_i^*(t)\equiv0 (i>2)$ and $h_i(\beta(t))=v_i^{\tau}(t)f(\beta(t))$. Note that there may exist a time instant $t_0$  such that $v_2(t)=v_1(t)$ for $t\geq t_0$. 
	For such a case, the orthogonal basis defined above degenerates, but the analysis in this part still holds. Moreover, we give some illustrations on properties of $f(\beta(t))$. By (\ref{ode}) and Assumptions \ref{asm3}-\ref{asm4}, we have 
	\begin{equation}\label{fbeta}
		f(\beta(t))=\mathbb{E}\big[\phi\big(y\tanh\big(\frac{\beta^{\tau}(t)\phi y}{\sigma^2}\big)-\beta^{\tau}(t) \phi\big)\big],
	\end{equation}
	where $\phi$ is a random vector with p.d.f $\bar g\in\mathcal{G}$ being the asymptotic stationary p.d.f of $\phi_k$ and the random variable $y$ given $\phi$ obeys the distribution $\mathcal{N}(\beta^{*\tau}\phi,\sigma^2)$ by Assumption \ref{asm3} and Lemma \ref{tanh1}.	 
	Denote $a_i(t)=v_i^{\tau}(t)\phi$, $i\in[d]$, we have
	\begin{equation}\label{phi}
		\phi=\sum\nolimits_{i=1}^da_i(t)v_i(t).
	\end{equation}
	From (\ref{phi}) and the rotation-invariant property of the p.d.f of $\phi$ by Assumption \ref{asm4}, the p.d.f of $a(t)=[a_1(t),\cdots,a_d(t)]^{\tau}$ equals $\bar g(a(t))$ which is an even function in $a_i(t)$ and also the marginal p.d.f of $a_i(t)$ is an even function in $a_i(t)$ for $i\in[d]$.
	So we have $\mathbb{E}[a_1(t)a_i(t)]=0, i\in[d]\backslash\{1\}$ and $\mathbb{E}[a_1^{3}(t)a_2(t)]=0.$
	Moreover, by the definition of $G$ in (\ref{ode}) and $G=c_0I$ in (\ref{R11}), we have $\mathbb{E}[\phi\phi^{\tau}]=c_0I$, thus $\mathbb{E}[a_i^2(t)]=c_0$ for $i\in[d]$. Besides, by the rotation-invariant property of the p.d.f of $\phi$, we also have $\mathbb{E}[a_i^{4}(t)]=c_1, i\in[d]$ with a positive constant $c_1$. These properties will be used in the following analysis without citations. 
	
	We now prove that $b_1(t)=\|\beta(t)\|$ has a positive lower bound for $t\geq0$.
	For this purpose, it suffices to prove that there exists a constant $b_l>0$ such that 
	\begin{equation}\label{p1}
		\frac{db_1(t)}{dt}>0, \ \hbox{if}~ 0<b_1(t)<b_l.
	\end{equation}   
	By the fact $v_i^{\tau}(t)v_i(t)\equiv1$ for $i\in[d]$, we have
	\begin{equation}\label{basis}
		v_{i}^{\tau}(t)\frac{dv_i(t)}{dt}\equiv0.
	\end{equation}
	Thus from (\ref{odeee}), (\ref{betafenjie}) and (\ref{basis}), we have
	\begin{equation}\label{db1}
		\begin{aligned}
			&\frac{db_1(t)}{dt}=\frac{d[\beta^{\tau}(t)v_1(t)]}{dt}\\
			=&v_1^{\tau}(t)\frac{d\beta(t)}{dt}+b_1(t)v_1^{\tau}(t)\frac{dv_1(t)}{dt}=v_1^{\tau}(t)R^{-1}(t)f(\beta(t))\\
			=&c_0^{-1}h_1(\beta(t))+v_1^{\tau}(t)(R^{-1}(t)-c_0^{-1}I)f(\beta(t))\buildrel \Delta \over=S(\beta(t)).
		\end{aligned}		
	\end{equation}	
	By (\ref{betafenjie}) and (\ref{phi}), we have $\beta^{\tau}(t)\phi=a_1(t)b_1(t)$ and $\beta^{*\tau}\phi=a_1(t)b_1^{*}(t)+a_2(t)b_2^{*}(t)$, thus by (\ref{fbeta}), we have for $i\in[d]$,
	\begin{equation}\label{h11}
		h_i(\beta(t))=\mathbb{E}\big[a_i(t)\big(y\tanh\big(\frac{a_1(t)b_1(t)y}{\sigma^2}\big)-a_1(t)b_1(t)\big)\big],
	\end{equation}
	where $y$ obeys the distribution $\mathcal{N}(a_1(t)b_1^{*}(t)+a_2(t)b_2^{*}(t),\sigma^2)$ given $a(t)$.
	Then by $\tanh(0)=0$, we have $h_i(\beta(t))=0$ and $f(\beta(t))=0$ for $b_1(t)=0$, thus $S(\beta(t))=0$ for $b_1(t)=0$.
	Hence, by (\ref{db1}) and the mean-value theorem, we obtain
	\begin{equation}\label{b1'}
			\left.S(\beta(t))=(b_1(t)-0)\frac{dS(\beta(t))}{db_1(t)}\right|_{b_1(t)=\zeta(t)},		
	\end{equation}
	where $\zeta(t)\in[0,b_1(t)]$.
	To prove (\ref{p1}), i.e., $S(\beta(t))>0$ if $0<b_1(t)<b_l$, we proceed to show
	\begin{equation}\label{dbb}
		\frac{dS(\beta(t))}{db_1(t)}\bigg|_{b_1(t)=0}>0.
	\end{equation}	
	By (\ref{betafenjie}) and $f(\beta(t))=0$ when $b_1(t)=0$, we have
	\begin{equation}\label{dhh}
		\begin{aligned}
			&\frac{dS(\beta(t))}{db_1(t)}\bigg|_{b_1(t)=0}=c_0^{-1}\frac{d h_1(\beta(t))}{d b_1(t)}\bigg|_{b_1(t)=0}+v_1^{\tau}(t)\\
			&~~(R^{-1}(t)-c_0^{-1}I)\bigg(\sum\limits_{i=1}^d\frac{v_i(t)d h_i(\beta(t))}{d b_1(t)}\bigg)\bigg|_{b_1(t)=0}.
		\end{aligned}	
	\end{equation}	
	To analyze (\ref{dhh}), we consider $\frac{d h_i(\beta(t))}{d b_1(t)}\big|_{b_1(t)=0}, i\in [d]$ term by term.  Firstly, by Assumptions \ref{asm2}-\ref{asm4}, the definitions of $y$ in (\ref{fbeta}) and $a_1(t)$ in (\ref{phi}), we have $\mathbb{E}[a_1^2(t)]<\infty$, $\mathbb{E}[a_1^2(t)y^2]<\infty$.
	Then by (\ref{h11}), $\tanh'(0)=1$, $\|b_1^*(t)\|^2+\|b_2^*(t)\|^2=\|\beta^*\|^2$ and the properties of $a(t)$, we obtain
	\begin{equation}\label{dh1}
		\begin{aligned}
			&\frac{d h_1(\beta(t))}{d b_1(t)}\bigg|_{b_1(t)=0}
			=\mathbb{E}\left[\frac{a_1^2(t)y^2}{\sigma^2}-a_1^2(t)\right]\\
			=&\mathbb{E}\left[a_1^2(t)\frac{\sigma^2+(a_1(t)b_1^*(t)+a_2(t)b_2^*(t))^2}{\sigma^2}-a_1^2(t)\right]\\
			=&\mathbb{E}\left[\frac{a_1^4(t)b_1^{*2}(t)+a_1^2(t)a_2^2(t)b_2^{*2}(t)}{\sigma^2}\right]>0.
		\end{aligned}
	\end{equation}
	Secondly, by (\ref{h11}), we have
	\begin{equation}\label{dh2}
		\begin{aligned}
			&h_2(\beta(t))=\mathbb{E}\big[a_2(t)y\tanh\big(\frac{a_1(t)b_1(t)y}{\sigma^2}\big)\big].
		\end{aligned}
	\end{equation}
	Similar to (\ref{dh1}), with simple calculations, it follows that
	\begin{equation}\label{dh22}
		\frac{d h_2(\beta(t))}{d b_1(t)}\bigg|_{b_1(t)=0}=2\mathbb{E}\left[\frac{a_1^2(t)a_2^2(t)}{\sigma^2}\right]b_1^*(t)b_2^{*}(t).
	\end{equation}	
	Thirdly, by (\ref{h11}) and the fact that the marginal p.d.f of $a_i(t)$ is even, it is not difficult to obtain that for $i>2$,
	\begin{equation}\label{dh3}
		h_i(\beta(t))\equiv0,\ \hbox{and}~ \frac{dh_i(\beta(t))}{db_1(t)}\bigg|_{b_1(t)=0}=0.
	\end{equation}	
	Choose $\varepsilon=\frac{1}{4}c_0$ in $D_{A,1}$, we have
	$-\frac{1}{5c_0}I<R^{-1}(t)-c_0^{-1}I<\frac{1}{3c_0}I$.
	Thus by (\ref{dhh}), (\ref{dh1}), (\ref{dh22}), (\ref{dh3}) and the properties of $a(t)$, we obtain	
	$$
	\begin{aligned}
		&\frac{dS(\beta(t))}{db_1(t)}\bigg|_{b_1(t)=0}
		>\frac{4\left[\mathbb{E}[a_1^4(t)]b_1^{*2}(t)+\mathbb{E}[a_1^2(t)a_2^2(t)]b_2^{*2}(t)\right]}{5c_0\sigma^2}\\
		&-\frac{2\mathbb{E}[a_1^2(t)a_2^2(t)]b_1^*(t)b_2^{*}(t)}{3c_0\sigma^2}\geq\frac{\|\beta^*\|^2\mathbb{E}\left[a_1^2(t)a_2^2(t)\right]}{3c_0\sigma^2}>0.
	\end{aligned}	
	$$ 
	Thus (\ref{dbb}) is obtained. From the continuity of $\frac{dS(\beta(t))}{db_1(t)}$ at $b_1(t)=0$, it follows that there exists a positive constant $b_l$ such that $\frac{dS(\beta(t))}{db_1(t)}\big|_{b_1(t)=\zeta(t)}>0$ if $0<\zeta(t)\leq b_1(t)\leq b_l$.
	Then by (\ref{b1'}), we have $S(\beta(t))>0$ if $0<b_1(t)\leq b_l$, thus (\ref{p1}) holds.
	Therefore, we can obtain that $b_1(t)$ has the following positive lower bound:
	\begin{equation}\label{bound1}
		\begin{aligned}
			&b_1(t)\geq\underline b=\min\left\{b_l,b_1(0)\right\}.\\
		\end{aligned}		
	\end{equation}	
	
	We now derive an upper bound of $b_1(t)=\|\beta(t)\|$. 	
	By (\ref{h11}), $|\tanh(\cdot)|\leq1$ and Assumptions \ref{asm3}-\ref{asm4}, we have
	\begin{equation}\label{h1}
			h_1(\beta(t))
			\leq\mathbb{E}[|a_1(t)y|]-b_1(t)\mathbb{E}[a_1^2(t)]=p_0-c_0b_1(t),
	\end{equation}
	where $p_0=\mathbb{E}\left[|a_1(t)y|\right]<\infty$. 
	By (\ref{dh2}), (\ref{dh3}) and (\ref{h1}), we have
	\begin{equation}\label{h22}
		\begin{aligned}
			\|f(\beta(t))\|\leq\|h_1(\beta(t))+h_2(\beta(t))\|\leq p_1+c_0b_1(t),
		\end{aligned}
	\end{equation}
	where $p_1=p_0+\mathbb{E}[|a_2(t)y|]<\infty$.
	Thus by (\ref{db1}), we have
	\begin{equation}
		\frac{db_1(t)}{dt}\leq c_0^{-1}(p_0-c_0b_1(t))+\frac{1}{3}c_0^{-1}(p_1+c_0b_1(t))\leq0,
	\end{equation}	
	 if $b_1(t)\geq\frac{3p_0+p_1}{2c_0}$. Therefore, $b_1(t)$ has the following upper bound:
	\begin{equation}\label{bound2}
		\begin{aligned}
			b_1(t)\leq\bar b=\max\left\{(3p_0+p_1)/(2c_0),b_1(0)\right\}.
		\end{aligned}		
	\end{equation}	
	
	\textbf{Step 2: Proof of $\beta^{\tau}(t)\beta^*>0$ for all $t\geq0$.}	
	
	For this purpose, by $\beta^{\tau}(t)\beta^*=b_1(t)b_1^*(t)$ and $b_1(t)>0$ derived in Step 1, we only need to prove that $b_1^*(t)>0$ for $t\geq0$.	
	For this purpose, by the fact that $b_1^*(0)>0$ for $x(0)\in D_{A,1}$, it suffices to show that
	\begin{equation}\label{p11}
		\frac{db_1^*(t)}{dt}\geq 0, \ \hbox{if} \ b_1^*(t)>0.
	\end{equation}
	By (\ref{betafenjie}), we have $b_1^{*2}(t)+b_2^{*2}(t)\equiv\|\beta^*\|^2$ and then
	\begin{equation}\label{derivitive}
		b_1^*(t)\frac{db_1^*(t)}{dt}=-b_2^*(t)\frac{db_2^*(t)}{dt}.
	\end{equation}
	In order to prove (\ref{p11}), we first analyze the properties of $\frac{db_2^{*}(t)}{dt}$ for $0\leq| b_2^{*}(t)|<\|\beta^{*}\|$. 
	By $b_2(t)\equiv0$ in (\ref{betafenjie}), it follows that $\frac{db_2(t)}{dt}\equiv0$, then by (\ref{ode}) and (\ref{betafenjie}), we have
	\begin{equation}\label{b1t}
		\begin{aligned}
			&~\frac{db_2(t)}{dt}=\frac{d[\beta^{\tau}(t)v_2(t)]}{dt}=\beta^{\tau}(t)\frac{dv_2(t)}{dt}+v_2^{\tau}(t)\frac{d\beta(t)}{dt}\\
			&=b_1(t)v_1^{\tau}(t)\frac{dv_2(t)}{dt}+v_2^{\tau}(t)R^{-1}(t)f(\beta(t))\equiv0.
		\end{aligned}
	\end{equation}
	Thus by (\ref{basis}) and (\ref{b1t}), we have
	\begin{equation}\label{b2*t}
		\begin{aligned}
			&\frac{db_2^*(t)}{dt}=\beta^{*\tau}\frac{dv_2(t)}{dt}=b_1^*(t)v_1^{\tau}(t)\frac{dv_2(t)}{dt}\\
			=&-\frac{b_1^*(t)}{b_1(t)}v_2^{\tau}(t)R^{-1}(t)f(\beta(t))=-\frac{b_1^*(t)}{b_1(t)}c_0^{-1}h_2(\beta(t))\\
			&~~-\frac{b_1^*(t)}{b_1(t)}v_2^{\tau}(t)(R^{-1}(t)-c_0^{-1}I)f(\beta(t)).
		\end{aligned}		
	\end{equation}	
	We analyze the right-hand-side (RHS) of (\ref{b2*t}) term by term.
	For the first term, let us denote the marginal p.d.f of $(a_1(t),a_2(t))$ as follows: 
	\begin{equation}\label{tildeg}
		\begin{aligned}
			&\tilde g(a_1(t),a_2(t))\\
			=&\int_{\mathbb{R}}\cdots\int_{\mathbb{R}}\bar g(a_1(t),\cdots,a_d(t))da_3(t)\cdots da_d(t),
		\end{aligned}		
	\end{equation}
	where $\bar g$ is the p.d.f of the random vector $\phi$ defined in (\ref{fbeta}). 
	Since $\tanh(z)$ is odd and $z\tanh(z)$ is even in $z$, by (\ref{dh2}), we have 
	\begin{equation}\label{h2}
		\begin{aligned}
			&h_2(\beta(t))=\mathbb{E}\bigg[a_2(t)\big[a_1(t)b_1^*(t)+a_2(t)b_2^*(t)+w\big]\\
			&\tanh\bigg(\frac{a_1(t)b_1(t)\big[a_1(t)b_1^*(t)+a_2(t)b_2^*(t)+w\big]}{\sigma^2}\bigg)\bigg]\\
			=&\frac{1}{\sqrt{2\pi}\sigma}\int_{\mathbb{R}}\int_{\mathbb{R}}\int_{\mathbb{R}}a_2(t)\big[a_1(t)b_1^*(t)+a_2(t)b_2^*(t)+w\big]\\
			&\tanh\bigg(\frac{a_1(t)b_1(t)\big[a_1(t)b_1^*(t)+a_2(t)b_2^*(t)+w\big]}{\sigma^2}\bigg)\\
			&~~~~~~~\tilde g(a_1(t),a_2(t)))\exp\left(-\frac{w^2}{2\sigma^2}\right)dwda_2(t)da_1(t)\\
			=&\frac{1}{\sqrt{2\pi}\sigma}\int_{a_1(t)>0}\int_{a_2(t)>0}\big\{
			I(a_1(t),a_2(t))\\
			&~~~~+I(-a_1(t),a_2(t))+I(-a_1(t),-a_2(t))\\
			&~~~~+I(a_1(t),-a_2(t))\big\}\tilde g(a_1(t),a_2(t))da_2(t)da_1(t)\\
			=&\frac{1}{\sqrt{2\pi}\sigma}\int_{a_1(t)>0}\int_{a_2(t)>0}a_2(t)\big\{
			F\big(c(t),|a_1(t)b_1^*(t)\\
			&+a_2(t)b_2^*(t)|\big)-F\big(c(t),|a_1(t)b_1^*(t)-a_2(t)b_2^*(t)|\big)\big\}\\
			&~~~~~~~~~~~~~~~~~~~~~~~~~~~\tilde g(a_1(t),a_2(t))da_2(t)da_1(t),
		\end{aligned}
	\end{equation}
	where 
	$I(a_1(t),a_2(t))=a_2(t)\int_{\mathbb{R}^{1}}[a_1(t)b_1^*(t)+a_2(t)b_2^*(t)+w]\tanh\bigg(\frac{a_1(t)b_1(t)[a_1(t)b_1^{*}(t)+a_2(t)b_2^*(t)+w]}{\sigma^2}\bigg)\exp\left(-\frac{w^2}{2\sigma^2}\right)dw$, the function $F(c(t),x(t))$ is defined in Lemma \ref{le22} with $c(t)=a_1(t)b_1(t)>0$ (since $b_1(t)>0$ by (\ref{bound1})).
	Furthermore, for positive $a_1(t)$, $a_2(t)$ and $b_1^*(t)$, it follows that $|a_1(t)b_1^*(t)+a_2(t)b_2^*(t)|-|a_1(t)b_1^*(t)-a_2(t)b_2^*(t)|$ has the same sign as $b_2^*(t)$. 
	Hence, by Lemma \ref{le22}, $F\big(c(t),|a_1(t)b_1^*(t)+a_2(t)b_2^*(t)|\big)-F\big(c(t),|a_1(t)b_1^*(t)-a_2(t)b_2^*(t)|\big)$ has the same sign as $b_2^*(t)$.
	Therefore, by (\ref{h2}), we have
	\begin{equation}\label{second}
		\begin{aligned}
			&h_2(\beta(t))\geq 0~\text{if}~ 0\leq b_2^*(t)<\|\beta^*\|,\\
			&h_2(\beta(t))\leq0~\text{if}~-\|\beta^*\|<b_2^*(t)\leq0, 
		\end{aligned}		
	\end{equation}	
	where the equality holds if and only if $b_2^*(t)=0$.	
	Choose $\varepsilon=\frac{c_0^{-1}|h_2(\beta(0))|}{p_1+c_0\bar b}$, by (\ref{h22}), (\ref{bound2}) and (\ref{b2*t}), we obtain that $\frac{db_2^*(t)}{dt}$ has an opposite sign with $b_2^*(t)$ at $t=0$.
	Besides, from (\ref{R11}), it follows that $\|R^{-1}(t)-c_0^{-1}I\|=O(e^{-t})$. Then by (\ref{b2*t}), (\ref{h2}), (\ref{second}), we can derive that    
	\begin{equation}\label{db2}
		\begin{aligned}
			&\frac{db_2^*(t)}{dt}\leq 0~\text{if}~0\leq b_2^*(t)<\|\beta^*\|,\\ &\frac{db_2^*(t)}{dt}\geq0~\text{if}~-\|\beta^*\|<b_2^*(t)\leq0,
		\end{aligned}		
	\end{equation}
	where the equality holds if and only if $b_2^*(t)=0$.
	The detailed proof of (\ref{db2}) is provided in Appendix \ref{app1}. By taking the Lyapunov function as $b_2^{*2}(t)$ and using the Lasalle invariance principle, we obtain
	\begin{equation}\label{cvegb}
		\lim\limits_{t\to\infty}b_2^*(t)=0, \ \hbox{and }|b_2^*(t)|\leq| b_2^*(0)|.
	\end{equation}	
	Therefore, by (\ref{derivitive}) and (\ref{db2}), (\ref{p11}) is proved. Moreover, by (\ref{cvegb}) and $b_1^*(t)^2+b_2^*(t)^2\equiv\|\beta^*\|^2$, we have 
	\begin{equation}\label{b*}
		b_1^*(0)\leq b_1^*(t)\leq\|\beta^*\|.
	\end{equation}
	
	Combining all the above analysis, let 
	\begin{equation}\label{varep}
		\varepsilon\triangleq\min\left\{\frac{1}{4}c_0,\frac{c_0^{-1}|h_2(\beta(0))|}{dp_0+c_0\bar b}\right\},
	\end{equation}
	then by (\ref{bound1}) and (\ref{b*}), we obtain $\beta^{\tau}(t)\beta^*=b_1(t)b_1^*(t)>\underline bb_1^*(0)>0$ for all $t\geq 0$.
	
	\textbf{Step 3: Analysis of the Lyapunov function.}	
	
	We first establish the stability properties of $\beta(t)$ in (\ref{odeee}) for the case of $x(0)\in D_{A,1}$ using the Lyapunov method.
	 
	Consider the following Lyapunov function: $$V(\beta(t))=\frac{1}{2}\tilde\beta^{\tau}(t)R(t)\tilde\beta(t),$$ where $\tilde\beta(t)=\beta(t)-\beta^*$.
	We have the following derivative of $V$ along the trajectories (\ref{ode}):
	\begin{equation}\label{rdv}
		\frac{dV(\beta(t),R(t))}{dt}=\tilde\beta^{\tau}(t)f(\beta(t))+\frac{1}{2}\tilde\beta^{\tau}(t)\left(G-R(t)\right)\tilde\beta(t).
	\end{equation}
	For the first term on the RHS of (\ref{rdv}), by (\ref{betafenjie}), we have
	\begin{equation}\label{dv1}
		\tilde\beta^{\tau}(t)f(\beta(t))=\tilde b_1(t)h_1(\beta(t))-b_2^*(t)h_2(\beta(t)),	
	\end{equation}		
	where $\tilde b_1(t)=b_1(t)-b_1^*(t)$. With simple calculations, we have by (\ref{h11}),
	\begin{equation}\label{v1}
		\tilde b_1(t)h_1(\beta(t))=\tilde b_1(t)(L_1(\beta(t))+L_2(\beta(t))),
	\end{equation}	
	where 
	$$\begin{aligned}
		&L_1(\beta(t))=\mathbb{E}\bigg[a_1(t)\big[a_1(t)b_1^*(t)+a_2(t)b_2^*(t)+w\big]\times\\
		&~\tanh\left(\frac{a_1(t)b_{1}(t)[a_1(t)b_1^*(t)+a_2(t)b_2^*(t)+w]}{\sigma^2}\right)-a_1(t)\times\\
		&~~[a_1(t)b_1^*(t)+w]\tanh\left(\frac{a_1(t)b_{1}(t)[a_1(t)b_1^*(t)+w]}{\sigma^2}\right)\bigg],
	\end{aligned}$$
	and $$
	\begin{aligned}
		&L_2(\beta(t))=\mathbb{E}\bigg[a_1(t)\big[a_1(t)b_1^*(t)+w\big]\times\\
		&\tanh\bigg(\frac{a_1(t)b_{1}(t)[a_1(t)b_1^*(t)+w]}{\sigma^2}\bigg)-a_1^2(t)b_1(t)\bigg],
	\end{aligned}$$
	with $w\sim\mathcal{N}(0,\sigma^2)$ given $a_1(t)$ and $a_2(t)$.	
	By mean-value theorem, on the one hand, we have
	\begin{equation}\label{dvf2}
		\begin{aligned}
			&\tilde b_1(t)L_1(\beta(t))=\tilde b_1(t)b_2^*(t)\mathbb{E}\big[a_1(t)a_2(t)\big\{\tanh\left(l_1(t)\right)\\
			&~~~~~+l_1(t)\tanh'\left(l_1(t)\right)\big\}\big]
			\leq1.2c_0|\tilde b_1(t)||b_2^*(t)|,
		\end{aligned}
	\end{equation}
	where  $l_1(t)=\frac{a_1(t)b_1(t)(a_1(t)b_1(t)+a_2(t)\zeta_1(t)+w)}{\sigma^2}$, $\zeta_1(t)$ is between 0 and $b_2^*(t)$, and the last inequality holds by $|\tanh(z)+z\tanh'(z)|\leq1.2$ and Schwarz inequality.	
	On the other hand, by Lemma \ref{tanh1} and Assumption \ref{asm3}, we have $a_1(t)b_1(t)=\mathbb{E}\big[[a_1(t)b_1(t)+w]\tanh\big(\frac{a_1(t)b_{1}(t)[a_1(t)b_1(t)+w]}{\sigma^2}\big)\big|a_1(t)\big],$
	then it follows that
	$$
	\begin{aligned}
		L_2&(\beta(t))=\mathbb{E}\bigg[a_1(t)\big[a_1(t)b_1^*(t)+w\big]\times\\
		&\tanh\left(\frac{a_1(t)b_{1}(t)[a_1(t)b_1^*(t)+w]}{\sigma^2}\right)-a_1(t)\times\\
		&\big[a_1(t)b_1(t)+w\big]\tanh\left(\frac{a_1(t)b_1(t)[a_1(t)b_1(t)+w]}{\sigma^2}\right)\bigg]\\
		=&-\tilde b_1(t)\mathbb{E}\big[a_1^2(t)\big\{\tanh\left(l_2(t)\right)+l_2(t)\tanh'\left(l_2(t)\right)\big\}\big],\\	
	\end{aligned}
	$$
	where  $l_2(t)=\frac{a_1(t)b_1(t)[a_1(t)\zeta_2(t)+w]}{\sigma^2}$, $\zeta_2(t)$ is between $b_1^*(t)$ and $b_1(t)$. By (\ref{bound1}) and (\ref{b*}), we have $\min{(\zeta_2^2(t),\zeta_2(t)b_1(t))}\geq\min{(b_1^{*2}(0),\underline{b}^2)}\triangleq\underline c>0$. By Lemma \ref{tanh2}, we can derive that
	\begin{equation}\label{dvf4}
		\begin{aligned}
			\tilde b_1(t)L_2(\beta(t))\leq-C\tilde b_1^2(t),
		\end{aligned}
	\end{equation}
	where $C=\mathbb{E}\big[a_1^2(t)\big(1-\exp\big(-\frac{\underline ca_1^2(t)}{2\sigma^2}\big)\big)\big]$
	is a positive constant by Assumption \ref{asm4}.	
	Then by (\ref{rdv})-(\ref{dvf4}), $\tilde\beta^2(t)=\tilde b_1^2(t)+b_2^{*2}(t)$ from (\ref{betafenjie}), and $G=c_0I$, we have
	\begin{equation}\label{dvbeta}
		\begin{aligned}
			&\frac{d}{dt}V(\beta(t),R(t))\leq -C\tilde b_1^2(t)+r_1(t)\\
			\leq&-Cc_0^{-1}V(\beta(t),R(t))+r_2(t),	
		\end{aligned}
	\end{equation}
	where $r_1(t)=1.2c_0|\tilde b_1(t)||b_2^*(t)|-b_2^*(t)h_2(\beta(t))+\frac{1}{2}\tilde\beta^{\tau}(t)\left(G^{-1}-R(t)\right)\tilde\beta(t)$ and $r_2(t)=r_1(t)+C b_2^{*2}(t)+Cc_0^{-1}\tilde\beta^{\tau}(t)\left(G^{-1}-R(t)\right)\tilde\beta(t)$.
	By (\ref{R11}) and (\ref{cvegb}), we have $\lim\limits_{t\to\infty}r_2(t)=0$.
	Hence, by Lemma \ref{vvv} and the comparison principle \cite{khalil2002nonlinear}, we obtain $\lim\limits_{t\to\infty}V(\beta(t),R(t))=0$, then from the positive-definiteness property of $R(t)$, we have $\lim\limits_{t\to\infty}\beta(t)=\beta^*$. Therefore, we obtain that $D_{c,1}=\{x:\beta=\beta^*,R=G\}$ is the invariant set with domain of attraction $D_{A,1}$.
	
	We now provide the additional analysis for the other two cases of $x(0)\in D_{A,2}$ and $x(0)\in D_{A,3}$.
	
	For the case of $x(0)\in D_{A,2}$, similar to the analysis for the case $x(0)\in D_{A,1}$, we can obtain that $D_{c,2}=\{x:\beta=-\beta^*,R=G\}$ is the invariant set with the domain of attraction $D_{A,2}$ by choosing the Lyapunov function as  $\frac{1}{2}\tilde\beta^{\tau}(t)R(t)\tilde\beta(t)$ with $\tilde\beta(t)=\beta(t)+\beta^*$.
	
	For the case of $x(0)\in D_{A,3}$, we have $b_1^*(0)=0$ and $b_2^*(0)=\|\beta^*\|$. Then by (\ref{b2*t}), we can obtain $\frac{db_2^*(t)}{dt}\equiv0$ if $b_1^*(t)=0$ and $b_2^*(t)=\|\beta^*\|$ and thus $b_2^*(t)\equiv\|\beta^*\|$ and $b_1^*(t)\equiv0$ for all $t\geq0$. 
	Furthermore, by (\ref{fbeta}), it follows that
	\begin{equation}\label{hh1}
		\begin{aligned}
			&h_1(\beta(t))=\mathbb{E}\bigg[a_1(t)\big[a_2(t)b_2^*(t)+w\big]\times\\
			&~\tanh\bigg(\frac{a_1(t) b_{1}(t)[a_2(t)b_2^*(t)+w]}{\sigma^2}\bigg)\bigg]-\mathbb{E}[a_1^2(t)b_1(t)]\\
			&=-\mathbb{E}[a_1^2(t)b_1(t)]=-c_0b_1(t),
		\end{aligned}
	\end{equation}
	where the second equality holds by following a similar way as (\ref{h2}). Moreover, by (\ref{h2}), we have $h_2(\beta(t))=0$, and (\ref{dh3}) still holds.	
	Thus by (\ref{R11}), (\ref{db1}), (\ref{h1}) and (\ref{hh1}), we have
	\begin{equation}\label{dv}
		\frac{db_1(t)}{dt}\leq -b_1(t)+p_0c_0^{-1}e^{-t}.
	\end{equation}
	By Lemma \ref{vvv}, it is clear that $\lim\limits_{t\to\infty}b_1(t)=0$ and thus we have that $D_{c,3}=\{x:\beta=0,R=G\}$ is the invariant set with domain of attraction $D_{A,3}$.
	
	Combining all the above analysis, we see that the assertion (\ref{fact}) is true. By Proposition \ref{ljungtheorem}, the remaining proof concerns about the compact set $\bar D$. We give a specific expression of $\bar D$ as follows:
	\begin{equation}\label{bard1}
		\bar D=\{x:\|\beta\|\leq\max\{m_0,\bar b\} ,\|R-G\|\leq\varepsilon_1\},
	\end{equation} 
	where $m_0=\sqrt{2\|\beta^*\|^2+2c_0^{-1}\sigma^2}$, $0<\varepsilon_1<\varepsilon$, $\bar b$ and $\varepsilon$ are defined in (\ref{bound2}) and (\ref{varep}), respectively. 
	From $\|\beta(t)\|=b_1(t)$, (\ref{bound2}) and (\ref{R11}), it is clear that the trajectory of (\ref{ode}) that starts in $\bar D$ remains in $\bar D$ for $t>0$. 
	
	\textbf{Step 4: Convergence of the sequence $\{\beta_k\}$}
	
	For the remaining proof, we need to verify Condition B4) in Proposition \ref{ljungtheorem}.
	By Lemma \ref{dk}, it follows that $\{\phi_k\}$ is bounded i.o. with probability 1.
	We only need to prove that the event $\{x_k\in\bar D, k\geq0\}$ happens i.o. with probability 1.
	
	We now analyze the properties of $R_{k+1}$ and $\beta_k$, respectively.
	
	Firstly, by (\ref{recuralg}) and Assumption \ref{asm4}, we have 
	\begin{equation}\label{RR}
		\lim\limits_{k\to\infty}R_{k+1}=\lim\limits_{k\to\infty}\frac{1}{k}\sum\limits_{t=1}^k\phi_t\phi_t^{\tau}=G,
	\end{equation}
	where $G=c_0I$.	Then for any $\varepsilon>0$, the event $\|R_{k+1}-G\|\leq\varepsilon$ happens i.o. with probability 1.	
	
	Secondly, we show that 
	\begin{equation}\label{bound}
		\{\|\beta_{k+1}\|\leq m_0, k\geq 0\} \ \hbox{happens i.o. with probability 1.} 
	\end{equation}
	Let us denote $\Psi_k=\left[
	\begin{array}{ccc}\phi_0^{\tau} &\cdots &\phi_k^{\tau}\end{array}\right]^{\tau}$ and $Y_{k+1}=\left[\begin{array}{ccc}\bar y_1 &\cdots &\bar y_{k+1}\end{array}\right]^{\tau}$ with $\bar y_{k+1}$ defined in (\ref{bary}). By (\ref{RR}), we have 
	$$\frac{1}{k}\Psi_k^{\tau}\Psi_k=\frac{1}{k}\sum\limits_{t=0}^k\phi_k\phi_k^{\tau}\xrightarrow{k\to\infty} G,$$
	then by Lemma \ref{project}, we have for sufficiently large $k$,
	\begin{equation}\label{psi}
		Y_{k+1}^{\tau}\Psi_k(\Psi_k^{\tau}\Psi_k)^{-1}\Psi_k^{\tau}Y_{k+1}\leq Y_{k+1}^{\tau}Y_{k+1}.
	\end{equation}		
	By Algorithm \ref{alg1}, it is evident that
	$$
	\beta_{k+1}=P_{k+1}\sum\limits_{t=0}^k\phi_t\bar y_{t+1}=(P_0^{-1}+\Psi_k^{\tau}\Psi_k)^{-1}\Psi_k^{\tau}Y_{k+1},
	$$
	thus by (\ref{psi}), it follows that 
	\begin{equation}
		\begin{aligned}
			&\|\beta_{k+1}\|\\
			\leq&\|(P_0^{-1}+\Psi_k^{\tau}\Psi_k)^{-\frac{1}{2}}\|\|(P_0^{-1}+\Psi_k^{\tau}\Psi_k)^{-\frac{1}{2}}\Psi_k^{\tau}Y_{k+1}\|\\
			\leq&\|(\Psi_k^{\tau}\Psi_k)^{-\frac{1}{2}}\|\|Y_{k+1}\|=\|(\frac{1}{k}\Psi_k^{\tau}\Psi_k)^{-\frac{1}{2}}\|\|\frac{1}{\sqrt{k}}Y_{k+1}\|.
		\end{aligned}
	\end{equation}
	By the definition of $\bar y_{k+1}$ in (\ref{bary}), model (\ref{model2}), Assumptions \ref{asm2}-\ref{asm4} and the fact $|\tanh(\cdot)|\leq1$, we have
	$$
	\begin{aligned}		&\frac{1}{k}\|Y_{k+1}\|^2=\frac{1}{k}\sum\limits_{t=0}^k\bar y_{t+1}^2\leq\frac{2}{k}\sum\limits_{t=0}^k[(\beta^{*\tau}\phi_t)^2+w_{t+1}^2]\\
	&~~~~~~~~~~~~~~~~~~~~~~~~~~~~\xrightarrow{k\to\infty} 2c_0\|\beta^*\|^2+2\sigma^2.
	\end{aligned}	
	$$
	Thus we have
	\begin{equation}\label{m0}
		\limsup\limits_{k\to\infty}\|\beta_{k+1}\|\leq m_0, \ \hbox{a.s.,}
	\end{equation}
	and (\ref{bound}) holds.
	Therefore, by (\ref{RR}) and (\ref{bound}), we have $\{x_{k}\in \bar D\}$ happens i.o. with probability 1 and Condition B4) is verified. Then by Proposition \ref{ljungtheorem}, we have $x_k\to D_c$ as $k\to\infty$ almost surely and $D_c$ is defined in (\ref{fact}).
	
	We now prove that $\beta_k$ converges to a limit point $\beta^*$ or $-\beta^*$.
	
	We first show that $\beta_k$ will not converge to the point $0$. 
	By Algorithm \ref{alg1}, we can see that $\beta_{k}$ is a rational function of random variables $\{z_0,\phi_0,w_1, \cdots,z_{k-1},\phi_{k-1}, w_{k}\}$, which are jointly absolutely continuous with respect to Lebesgue measure by Assumptions \ref{asm2}-\ref{asm4}.
	From the results in \cite{meyn1985zero}, it follows that $\beta_{k}$ is also absolutely continuous with respect to Lebesgue measure.
	Then with the initial value of Algorithm \ref{alg1} satisfying that $\beta_0\ne0$, we have for any $k\geq1$,
	\begin{equation}\label{sign}
		P(\beta_{k}^{\tau}\beta^{*}\ne0)=1.
	\end{equation}
	Therefore, we can derive $\beta_k\nrightarrow0$ and thus $\beta_k$ will converge to the set $\{\beta^*,-\beta^*\}$ almost surely (The proof details are provided in Appendix \ref{app1}).
	
	We then prove that $\beta_k$ converges to a limit point.
	Denote $\mathcal{F}_k=\sigma\{\phi_t,z_t,w_t,t\leq k\}$ and 
	\begin{equation}\label{ee}
		e_{k+1}=y_{k+1}\tanh\big(\frac{\beta_{k}^{\tau}\phi_{k}y_{k+1}}{\sigma^2}\big)-\beta_{k}^{\tau}\phi_{k}.
	\end{equation}
	By model (\ref{model2}) and Assumption \ref{asm3}, we have
	\begin{equation}\label{e}
		\mathbb{E}\left[e_{k+1}^2|\mathcal{F}_k\right]\leq3[(\beta^{*\tau}\phi_k)^2+(\beta_k^{\tau}\phi_k)^2]+3\sigma^2.
	\end{equation}
	From (\ref{pk}) and (\ref{RR}), it follows that $\frac{1}{k}P_{k}^{-1}=\frac{1}{k}[\sum\nolimits_{t=1}^{k}\phi_t\phi_t+P_0]\to G>0$, thus we have $\|P_k\|=O(\frac{1}{k})$ and then $\|P_{k+1}-P_k\|=\|P_{k+1}(P_{k+1}^{-1}-P_k^{-1})P_k\|=\|P_{k+1}\phi_k\phi_k^{\tau}P_k\|=O(\frac{\|\phi_k\|^2}{k^2})$.
	Hence, by (\ref{pk}), we have
	$$\begin{aligned}
		&a_k\|P_k\phi_k\|^2=tr(a_kP_k\phi_k\phi_k^{\tau}P_k)\\
		=&tr(P_{k+1}-P_k)=O(\frac{\|\phi_k\|^2}{k^2}).
	\end{aligned}$$
	Moreover, since $\|\phi_k\|^4$ is u.i. by Assumption \ref{asm4}, we have $\sup_{k\geq1}\mathbb{E}\left[\|\phi_k\|^4\right]<\infty$.
	Thus by (\ref{beta}), (\ref{m0}), (\ref{e}), we obtain
	$$
		\begin{aligned}
			&\sum\limits_{k=1}^{\infty}\mathbb{E}[\|a_kP_k\phi_ke_{k+1}\|^2]
			=\sum\limits_{k=1}^{\infty}\mathbb{E}[\mathbb{E}[\|a_kP_k\phi_ke_{k+1}\|^2|\mathcal{F}_k]]\\
			\leq&3\sum\limits_{k=1}^{\infty}\mathbb{E}\left[a_k\|P_k\phi_k\|^2[(\beta^{*\tau}\phi_k)^2+(\beta_k^{\tau}\phi_k)^2+\sigma^2]\right]\\
			=&O(\sum\limits_{k=1}^{\infty}\frac{\mathbb{E}\left[\|\phi_k\|^4\right]}{k^2})<\infty.
		\end{aligned}$$
	Since for any sequence of random variables $Z_k$, $\sum\limits_{k=1}^{\infty}\mathbb{E}\left[|Z_k|\right]<\infty$ implies $\sum\limits_{k=1}^{\infty}Z_k<\infty$ \cite{stout1974almost}, we have that $\sum\limits_{k=1}^{\infty}\|a_kP_k\phi_ke_{k+1}\|^2$ converges a.s. Thus by (\ref{beta}) and (\ref{ee}), we have
	\begin{equation}
		\begin{aligned}
			&\sum\limits_{k=1}^{\infty}\|\beta_{k+1}-\beta_k\|^2=\sum\limits_{k=1}^{\infty}\|a_kP_k\phi_ke_{k+1}\|^2<\infty.\\
		\end{aligned}	
	\end{equation}
	Hence, $\lim\limits_{k\to\infty}\|\beta_{k+1}-\beta_k\|^2=0$, which means that $\beta_k$ cannot jump from a small neighborhood of $\beta^*$ to a small neighborhood of $-\beta^*$ infinite times. Consequently, $\beta_k$ will converge to a limit point which is either $\beta^*$ or $-\beta^*$ almost surely.
	
	We now proceed to show that our algorithm can handle the case $\beta^*=0$.
	At this time, $b_1^*(t)=b_2^*(t)\equiv0$. Then following the analysis method similar to the case $x(0)\in D_{A,3}$, we can prove the assertion (\ref{fact}) for $\beta^*=0$. Moreover, Condition B4) holds for $\bar D$ in (\ref{bard1}) for $\beta^*=0$. Therefore, by Proposition \ref{ljungtheorem}, we have $\beta_k\to0$ as $k\to\infty$ almost surely.
	
	Therefore, we complete the proof of Theorem \ref{odetheorem1}.$\hfill\blacksquare$      
	\subsection{Proof of Theorem 2}
	Firstly, we prove the inequality (\ref{them4.21}).
	Without loss of generality, we assume that $\{\phi_k, y_{k+1}\}$ is generated by the sub-model $y_{k+1}=\beta^{*\tau}\phi_k+w_{k+1}$. 
	We now show that if $\lim\limits_{k\to\infty}\beta_k=\beta^*$, then (\ref{them4.21}) holds.
	Denote the conditional p.d.f of $\phi_k$ given $\beta_k$ as $g_k(\phi_k|\beta_k)$. 
	From the fact that $\phi_k\to\phi$ in distribution, the equi-continuity and the convergence of $g_k(\phi_k|\beta_k)$ at the point $\beta_k=\beta^*$, we have $g_k(\phi_k|\beta_k)\to\bar g(\phi)$.
	Let us denote $M_k(\beta_k,\phi,w)=(1/\sigma)\mathbb{I}_{\{\beta_k^{\tau}\phi(\beta^{*\tau}\phi+ w)\geq0\}}\Phi'(w/\sigma)g_k(\phi|\beta_k)$ and $M^*(\beta^*,\phi,w)=(1/\sigma)\mathbb{I}_{\{\beta^{*\tau}\phi(\beta^{*\tau}\phi+ w)\geq0\}}\Phi'(w/\sigma)\bar g(\phi)$. 
	By Lebesgue dominated convergence theorem, the probability that $\{\phi_k,y_{k+1}\}$ is categorized correctly is calculated as follow:  
	\begin{equation}\label{pp}
		\begin{aligned}
			&\lim\limits_{k\to\infty}P\big((y_{k+1}-\beta_{k}^{\tau}\phi_k)^2\leq(y_{k+1}+\beta_{k}^{\tau}\phi_k)^2|\beta_k\big)\\
			=&\lim\limits_{k\to\infty}P\big(\beta_k^{\tau}\phi_k(\beta^{*\tau}\phi_k+ w_{k+1})\geq0|\beta_k\big)\\
			=&\lim\limits_{k\to\infty}\int_{\mathbb{R}^d}\int_{\mathbb{R}}M_k(\beta_k,\phi_k,w_{k+1})dw_{k+1}d\phi_k\\
			=&\lim\limits_{k\to\infty}\int_{\mathbb{R}^d}\int_{\mathbb{R}}M_k(\beta_k,\phi,w)dwd\phi\\
			=&\int_{\mathbb{R}^d}\int_{\mathbb{R}}M^*(\beta^*,\phi,w)dwd\phi=P\big((\beta^{*\tau}\phi)^2+\beta^{*\tau}\phi w\geq0\big)\\
			=&1-\mathbb{E}\left[\Phi\left(-\frac{|\beta^{*\tau}\phi|}{\sigma}\right)\right]\geq1-\mathbb{E}\left[\exp\left(-\frac{(\beta^{*\tau}\phi)^2}{2\sigma^2}\right)\right].
		\end{aligned}
	\end{equation}
	If $\lim\limits_{k\to\infty}\beta_k=-\beta^*$, by a similar analysis as that of (\ref{pp}), we also have
	$\lim\limits_{k\to\infty}P((y_{k+1}+\beta_{k}^{\tau}\phi_k)^2\leq(y_{k+1}-\beta_{k}^{\tau}\phi_k)^2|\beta_k)
	\geq1-\mathbb{E}\left[\exp\left(-\frac{(\beta^{*\tau}\phi)^2}{2\sigma^2}\right)\right].$
	Hence, the inequality (\ref{them4.21}) is obtained.	
	
	Secondly, we prove the inequality (\ref{them4.22}).		
	We now show that if $\lim\limits_{k\to\infty}\beta_k=\beta^*$, (\ref{them4.22}) holds.
	Denote
	$$
	\begin{aligned}
		&\mathcal{A}_{k,1}=\{\omega: y_{k+1}=\beta^{*\tau}\phi_k+w_{k+1}\},\\
		&\mathcal{A}_{k,2}=\{\omega: y_{k+1}=-\beta^{*\tau}\phi_k+w_{k+1}\},\\
		&\mathcal{A}_{k,3}=\{\omega: (y_{k+1}+\beta_k^{\tau}\phi_k)^2\leq(y_{k+1}-\beta_k^{\tau}\phi_k)^2\}\cap\mathcal{A}_{k,1},\\
		&\mathcal{A}_{k,4}=\{\omega: (y_{k+1}-\beta_k^{\tau}\phi_k)^2\leq(y_{k+1}+\beta_k^{\tau}\phi_k)^2\}\cap\mathcal{A}_{k,2},
	\end{aligned}
	$$
	where $\mathcal{A}_{k,1}$, $\mathcal{A}_{k,2}$ denote the events that the data $\{\phi_k,y_{k+1}\}$ is generated by these two sub-models, $\mathcal{A}_{k,3}$, $\mathcal{A}_{k,4}$ represent the events that the data $\{\phi_k,y_{k+1}\}$ is categorized into the wrong cluster, and thus $\mathcal{A}_{k,1}-\mathcal{A}_{k,3}$, $\mathcal{A}_{k,2}-\mathcal{A}_{k,4}$ are the events that the data is categorized into the correct cluster.
	Then the evaluation index (\ref{wce}) can be written as follows:
	\begin{equation}\label{Jn}
		\begin{aligned}
			J_n=&\frac{1}{n}\sum\limits_{k=1}^n(y_{k+1}-\beta_k^{\tau}\phi_k)^2\left[\mathbb{I}_{\{\mathcal{A}_{k,1}-\mathcal{A}_{k,3}\}}+\mathbb{I}_{\mathcal{A}_{k,4}}\right]\\
			&+\frac{1}{n}\sum\limits_{k=1}^n(y_{k+1}+\beta_k^{\tau}\phi_k)^2\left[\mathbb{I}_{\{\mathcal{A}_{k,2}-\mathcal{A}_{k,4}\}}+\mathbb{I}_{\mathcal{A}_{k,3}}\right]\\
			=&L_{n,1}+L_{n,2}+L_{n,3},
		\end{aligned}
	\end{equation}
where $L_{n,1}=\frac{1}{n}\sum\limits_{k=1}^n\big\{(y_{k+1}-\beta_k^{\tau}\phi_k)^2\mathbb{I}_{\mathcal{A}_{k,1}}+(y_{k+1}+\beta_k^{\tau}\phi_k)^2\mathbb{I}_{\mathcal{A}_{k,2}}\big\}$, $L_{n,2}=\frac{1}{n}\sum\limits_{k=1}^n\big\{(y_{k+1}+\beta_k^{\tau}\phi_k)^2-(y_{k+1}-\beta_k^{\tau}\phi_k)^2\big\}\mathbb{I}_{\mathcal{A}_{k,3}}$ and $L_{n,3}=\frac{1}{n}\sum\limits_{k=1}^n\big\{(y_{k+1}-\beta_k^{\tau}\phi_k)^2-(y_{k+1}+\beta_k^{\tau}\phi_k)^2\big\}\mathbb{I}_{\mathcal{A}_{k,4}}$.
	We now analyze the RHS of (\ref{Jn}) term by term.	
	For the term $L_{n,1}$, we have the following expression:
	$$
	\begin{aligned}
		L_{n,1}=&\frac{1}{n}\sum\limits_{k=1}^n(\tilde\beta_k^{\tau}\phi_k)^2-\frac{2}{n}\sum\limits_{k=1}^n\tilde\beta_k^{\tau}\phi_kw_{k+1}\mathbb{I}_{\mathcal{A}_{k,1}}\\
		&+\frac{2}{n}\sum\limits_{k=1}^n\tilde\beta_k^{\tau}\phi_kw_{k+1}\mathbb{I}_{\mathcal{A}_{k,2}}+\frac{1}{n}\sum\limits_{k=1}^nw_{k+1}^2,
	\end{aligned}			
	$$
	where $\tilde\beta_k=\beta_k-\beta^*$. By Assumption \ref{asm3},
	$\lim\limits_{n\to\infty}\frac{1}{n}\sum\limits_{k=1}^nw_{k+1}^2=\sigma^2.$ 
	By $\lim\limits_{k\to\infty}\tilde\beta_{k}=0$ and the average boundedness of $\|\phi_k\|^2$ and $\|\phi_kw_{k+1}\|$ from Lemma \ref{dk}, we obtain
	\begin{equation}\label{l1}
		\lim\limits_{n\to\infty} L_{n,1}=\sigma^2, \ \hbox{a.s.}
	\end{equation}	
	For the term $L_{n,2}$, let us denote $\mathcal{A}_1=\{(\beta^{*\tau}\phi)^2+\beta^{*\tau}\phi w\leq0\}$ and $\mathcal{A}_2=\{y=\beta^{*\tau}\phi+w\}$, by Assumptions \ref{asm2}-\ref{asm4}, we have $P\left(\mathcal{A}_1\cap\mathcal{A}_2|\phi\right)=P\left(\mathcal{A}_1|\phi\right)P\left(\mathcal{A}_2|\phi\right)=p\Phi\big(-\frac{|\beta^{*\tau}\phi|}{\sigma}\big)$. 
	Besides, from Lemma \ref{dk}, $\left[\beta^{*\tau}\phi_k\phi_k^{\tau}\beta^*+\beta^{*\tau}\phi_kw_{k+1}\right]\mathbb{I}_{\mathcal{A}_1\cap\mathcal{A}_2}$ is asymptotically stationary and ergodic.
	Thus by Assumptions \ref{asm3}-\ref{asm4} and (\ref{pp}), we obtain
	\begin{equation}\label{l2}
		\begin{aligned}
			&\lim\limits_{n\to\infty} L_{n,2}=\lim\limits_{n\to\infty}\frac{4}{n}\sum\limits_{k=1}^n\left[\beta_k^{\tau}\phi_k\phi_k^{\tau}\beta^*+\beta^{*\tau}\phi_kw_{k+1}\right]\mathbb{I}_{\mathcal{A}_{k,3}}\\
			=&4\mathbb{E}\left[(\beta^{*\tau}\phi)^2\mathbb{E}\left[\mathbb{I}_{\mathcal{A}_1\cap\mathcal{A}_2}|\phi\right]\right]+4\mathbb{E}\left[\beta^{*\tau}\phi\mathbb{E}\left[ w\mathbb{I}_{\mathcal{A}_1\cap\mathcal{A}_2}|\phi\right]\right]\\
			=&4\mathbb{E}\big[(\beta^{*\tau}\phi)^2P\left(\mathcal{A}_1\cap\mathcal{A}_2|\phi\right)\big]\\
			&+4p\mathbb{E}\bigg[\frac{|\beta^{*\tau}\phi|}{\sqrt{2\pi}\sigma}\int_{-\infty}^{-|\beta^{*\tau}\phi|}w\exp\left(-\frac{w^2}{2\sigma^2}\right)dw\bigg]\\
			=&4p\mathbb{E}\left[\eta(\phi)\right].
		\end{aligned}
	\end{equation}
	Similarly, for the term $L_{n,3}$, we have
	\begin{equation}\label{l3}
		\lim\limits_{n\to\infty} L_{n,3}=4(1-p)\mathbb{E}\left[\eta(\phi)\right].
	\end{equation}		
	Furthermore, since for any positive constant $a$, $\int_{-\infty}^{-a}a\exp\big(-\frac{x^2}{2\sigma^2}\big)dx\leq\sigma^2\exp\big(-\frac{a^2}{2\sigma^2}\big)$, we have $\eta(\phi)\leq0$. 
	Thus by (\ref{Jn})-(\ref{l3}), (\ref{them4.22}) is obtained.	
	Similarly, if $\lim\limits_{k\to\infty}\beta_k=-\beta^*$, we also have (\ref{them4.22}).
	Therefore, we complete the proof.	 $\hfill\blacksquare$
	
	\subsection{Proofs of Theorem 3 and Theorem 4}	
	
	Denote $x_k=[\theta_{k,1}^{\tau}, \theta_{k,2}^{\tau}, \text{vec}^{\tau}(R_k)]^{\tau}$, similar to (\ref{recf}), it is not difficult to obtain that $x_k$ evolves according to the following dynamical systems:
	\begin{equation}\label{rec}
		x_{k+1}=x_k+\frac{1}{k}Q(x_k,\phi_k,y_{k+1}),
	\end{equation}
	where $Q(x_k,\phi_k,y_{k+1})$ is determined via Algorithm \ref{alg2}.
	In order to analyze (\ref{rec}) using the ODE method, we introduce the corresponding ODEs as follows:
\begin{subequations}\label{ode_2}
	\begin{align}
		\frac{d}{dt}\theta_1(t)&=R^{-1}(t)f_1(\theta_1(t)),\label{f1}\\
		\frac{d}{dt}\theta_2(t)&=R^{-1}(t)f_2(\theta(t)),\label{f2}\\
		\frac{d}{dt}R(t)&=G-R(t)\label{gg},
	\end{align}	
\end{subequations}
	where $f_1(\theta_1(t))=\lim\limits_{k\to\infty}\mathbb{E}\left[\phi_k\left(y_{k+1}-\theta_1^{\tau}(t)\phi_k\right)\right]$, $f_2(\theta(t))=\lim\limits_{k\to\infty}\mathbb{E}\left[\phi_k\left(m_{k+1}\tanh\left(\frac{\theta_2^{\tau}(t)\phi_km_{k+1}}{\sigma^2}\right)-\theta_2^{\tau}(t){\phi}_{k}\right)\right]$, $\theta(t)=(\theta_1(t),\theta_2(t))$ and $G=\lim\limits_{k\to\infty}\mathbb{E}\left[\phi_k\phi_k^{\tau}\right]$.
	
	Before giving the proof of Theorem \ref{odetheorem2}, a related lemma is given as follows:
	\begin{lemma}\label{regular1}
		Under Assumptions \ref{asm2}$'$ and \ref{asm3}-\ref{asm4}, Conditions B1)-B3) in Proposition \ref{ljungtheorem} are all satisfied in the open area $D=\{x:R>0\}$, where $x=[\theta_1^{\tau}, \theta_2^{\tau}, \text{vec}^{\tau}(R)]^{\tau}$.
	\end{lemma}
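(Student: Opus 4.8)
The plan is to mimic the proof of Lemma \ref{regular}, now applied to the three blocks of the map $Q=[Q_1^{\tau},Q_2^{\tau},\text{vec}^{\tau}(Q_3)]^{\tau}$ read off from Algorithm \ref{alg2} and cast into the form (\ref{rec}). Writing $x=[\theta_1^{\tau},\theta_2^{\tau},\text{vec}^{\tau}(R)]^{\tau}$, the blocks are the LS block $Q_1=R^{-1}\phi(y-\theta_1^{\tau}\phi)$ for $\theta_1$, the EM block $Q_2=R^{-1}\phi[m\tanh(\frac{\theta_2^{\tau}\phi m}{\sigma^2})-\theta_2^{\tau}\phi]$ with $m=y-\theta_1^{\tau}\phi$ for $\theta_2$, and the matrix block $Q_3=\phi\phi^{\tau}-R$, consistent with the ODEs (\ref{ode_2}). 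The one structural novelty compared with Lemma \ref{regular} is that $Q_2$ depends on \emph{all three} coordinates: beyond the dependence on $\theta_2$ and $R$ already handled there, it now couples to $\theta_1$ through $m$.

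For Condition B1) I would bound the Lipschitz constant of each block on a neighborhood $\mathcal{U}(x,\rho(x))$ small enough that $\lambda_{min}(\bar R)$ stays positive, exactly as in (\ref{R1}). The LS block $Q_1$ and the matrix block $Q_3$ are treated verbatim as in the standard LS case: $\frac{\partial Q_1}{\partial\theta_1}$ contributes $\|\phi\|^2/\lambda_{min}(R)$, $\frac{\partial Q_1}{\partial R}$ contributes $(\|\phi\||y|+\|\phi\|^2\|\theta_1\|)/\lambda_{min}^2(R)$ via $\frac{\partial R^{-1}}{\partial R}=-R^{-1}\otimes R^{-1}$, and $Q_3$ is Lipschitz with constant $1$. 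The only genuinely new computation is for $Q_2$: using $|\tanh(\cdot)|\le 1$, $|\tanh'(\cdot)|\le 1$ and $\frac{\partial m}{\partial\theta_1}=-\phi$, I would show the three partials $\frac{\partial Q_2}{\partial\theta_2}$, $\frac{\partial Q_2}{\partial\theta_1}$, $\frac{\partial Q_2}{\partial R}$ are all finite and bounded by polynomials in $\|\phi\|$ and $|y|$ divided by powers of $\lambda_{min}(R)$; in particular differentiating through $m$ produces the extra bracket $1+|m||\theta_2^{\tau}\phi|/\sigma^2$, so that $\|\frac{\partial Q_2}{\partial\theta_1}\|\le\frac{\|\phi\|^2}{\lambda_{min}(R)}(1+\frac{|m||\theta_2^{\tau}\phi|}{\sigma^2})$. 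Summing the block constants gives a finite Lipschitz constant $\mathcal{R}=\mathcal{R}_1+\mathcal{R}_2+1$, establishing B1).

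For Condition B2) I would argue, as in Lemma \ref{regular}, that it suffices to check the supremum-over-$\rho$ upper bounds of $\mathcal{R}_1$ and $\mathcal{R}_2$, since these suprema depend only on $\theta_1,\theta_2,R$ and not on $k$. These bounds are finite linear combinations of $\|\phi_k\|^2$, $\|\phi_k\||y_{k+1}|$, $\|\phi_k\|^2|y_{k+1}|$ and $\|\phi_k\|^2y_{k+1}^2$, each of which, by model (\ref{model1}) and Lemma \ref{dk}, is asymptotically stationary and ergodic with a finite limit; here the uniform integrability of $\{\|\phi_k\|^4\}$ in Assumption \ref{asm4} is what guarantees that the fourth-order terms have finite Cesàro averages. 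Condition B3) is then immediate: for each fixed $x\in D$, $\mathbb{E}[Q(x,\phi_k,y_{k+1})]$ converges and the ergodic identity (\ref{ergodic}) holds, again by Lemma \ref{dk} and model (\ref{model1}).

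The main obstacle I anticipate lies entirely in the block $Q_2$: because $m=y-\theta_1^{\tau}\phi$ injects $\theta_1$ into the EM update, the Lipschitz analysis is a genuine three-block differentiation rather than the one-variable computation of Lemma \ref{regular}, and one must keep every resulting data-dependent factor at most fourth order so that Assumption \ref{asm4} and Lemma \ref{dk} still deliver B2). Once $\frac{\partial Q_2}{\partial\theta_1}$ is controlled using boundedness of $\tanh$ and $\tanh'$, however, the remaining steps are routine and parallel to Lemma \ref{regular}, so I expect no essentially new difficulty beyond this bookkeeping.
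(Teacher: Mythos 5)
Your proposal is correct and follows essentially the same route the paper intends: the paper omits the proof of Lemma~\ref{regular1} entirely, stating only that it is obtained similarly to Lemma~\ref{regular}, and your three-block Lipschitz computation with the extra coupling of $Q_2$ to $\theta_1$ through $m=y-\theta_1^{\tau}\phi$ is exactly that adaptation. One minor simplification: since $|\tanh(z)+z\tanh'(z)|\leq 1.2$ (a bound the paper already uses in (\ref{dvf2}) and (\ref{vv3})), the partial $\frac{\partial Q_2}{\partial\theta_1}=-R^{-1}\phi\phi^{\tau}\left[\tanh\left(\frac{\theta_2^{\tau}\phi m}{\sigma^2}\right)+\frac{\theta_2^{\tau}\phi m}{\sigma^2}\tanh'\left(\frac{\theta_2^{\tau}\phi m}{\sigma^2}\right)\right]$ is bounded by $1.2\|\phi\|^2/\lambda_{min}(R)$, which avoids the factor $1+|m||\theta_2^{\tau}\phi|/\sigma^2$ in your bound; your looser version still works, but the list of monomials you check in B2) should then also include the resulting $\|\phi_k\|^{3}|m_{k+1}|$-type terms, which remain at most fourth order in the data and are therefore still covered by Lemma~\ref{dk} and Assumption~\ref{asm4}.
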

	
	\noindent\hspace{1em}{\textbf{\itshape Proof of Theorem \ref{odetheorem2}:}}		
	We will analyze the convergence of $\theta_{k,1}$ and $\theta_{k,2}$ separately by verifying all conditions of Proposition \ref{ljungtheorem}. 
	By Lemma \ref{regular1}, it remains to prove the requirements on the trajectories of ODEs (\ref{ode_2}) and Condition B4).
	
	\textbf{Step 1: Convergence analysis of the sequence $\{\theta_{k,1}\}$.}
	
	It is clear that the trajectory generated by (\ref{gg}) is the same as that of the ODE (\ref{odee}), which evolves according to (\ref{R11}).	
	We now establish the stability result of (\ref{f1}). For this, let us construct the Lyapunov function $V_1(\theta_1(t),R(t))=\frac{1}{2}\tilde\theta_1^{\tau}(t)R(t)\tilde\theta_1(t)$ with $\tilde\theta_{1}(t)=\theta_{1}(t)-\theta_1^*$. Then we have
	\begin{equation}\label{vvv1}
		\begin{aligned}
			\frac{dV_1(\theta_1(t),R(t))}{dt}
			=\tilde\theta_1^{\tau}(t)f_1(\theta_1(t))+\frac{1}{2}\tilde\theta_1^{\tau}(t)[G-R(t)]\tilde\theta_1(t).
		\end{aligned}	
	\end{equation}
	For the first term on the RHS of (\ref{vvv1}), by (\ref{bm}), (\ref{ode_2}), we have
	\begin{equation}
		\begin{aligned} &\tilde\theta_1^{\tau}(t)f_1(\theta_1(t))=\lim\limits_{k\to\infty}\mathbb{E}\big[\tilde\theta_1^{\tau}(t)\phi_k\big(y_{k+1}-\theta_1^{\tau}(t)\phi_k\big)\big]\\ =&\lim\limits_{k\to\infty}\mathbb{E}\left[\tilde\theta_1^{\tau}(t)\phi_k\big(\mathbb{E}\left[y_{k+1}|\phi_k\right]-\theta_1^{\tau}(t)\phi_k\big)\right]\\				=&-\tilde\theta_1^{\tau}(t)\lim\limits_{k\to\infty}\mathbb{E}\left[\phi_k\phi_k^{\tau}\right]\tilde\theta_1(t)=-c_0\|\tilde\theta_1(t)\|^2,
		\end{aligned}	
	\end{equation}
	where the last inequality holds by Assumption \ref{asm4} and $c_0$ is a positive constant defined in (\ref{R11}).
	Then it follows that 
	$$
	\frac{d}{dt}V_1(\theta_1(t),R(t))=-2V_1(\theta_1(t),R(t))+r(t),
	$$	
	where $r(t)=\frac{1}{2}\tilde\theta_1^{\tau}(t)\left(R(t)-G\right)\tilde\theta_1(t)$ tends to 0 by (\ref{R11}).
	Thus by Lemma \ref{vvv}, we have $\lim\limits_{t\to\infty}V(\theta_1(t),R(t))=0$.
	From the positive-definiteness property of $R(t)$, it follows that $\lim\limits_{t\to\infty}\theta_1(t)=\theta_1^*$. Therefore, we obtain that the ODE (\ref{f1}) has the invariant set $D'_{c,1}=\{[\theta_1^{*\tau},\text{vec}^{\tau}(G)]\}$ with the domain of attraction $D'=\{v=[\theta^{\tau},\text{vec}^{\tau}(R)]: R>0\}$.
	
	Denote $\bar D'=\{v=[\theta^{\tau},\text{vec}^{\tau}(R)]:\|\theta\|\leq m_1, \varepsilon_1I\leq R\leq\varepsilon_2I\}$ with $m_1=\sqrt{3c_0(\|\beta_1^*\|^2+\|\beta_2^*\|^2)+3\sigma^2}$ and  $0<\varepsilon_1<\varepsilon_2$. It is clear that the trajectories of (\ref{ode_2}) that starts in $\bar D'$ remains in $\bar D'$. Besides, similar to (\ref{m0}), we have $\limsup\nolimits_{k\to\infty}\|\theta_{k+1,1}\|\leq m_1, \ \hbox{a.s.}$
	Thus Condition B4) is verified. By Proposition \ref{ljungtheorem}, we have
	\begin{equation}\label{theta1}
		\lim\limits_{k\to\infty}\theta_{k,1}=\theta_1^*,\ \hbox{a.s.}
	\end{equation}
	
	\textbf{Step 2:  Convergence analysis of the sequence $\{\theta_{k,2}\}$.}
	
	The proof is similar to that of Theorem \ref{odetheorem1} and we only need to establish the stability results of the ODEs (\ref{f2}). We just provide the analysis for $\theta_2(0)\in D_{A,1}$, and omit the analysis for $\theta_2(0)\in D_{A,2}$ and $\theta_2(0)\in D_{A,3}$. 
	
	For this, consider the Lyapunov function $V_2(\theta_2(t),R(t))=\frac{1}{2}\tilde\theta_2(t)^{\tau}R(t)\tilde\theta_2(t)$ with $\tilde\theta_{2}(t)=\theta_{2}(t)-\theta_2^*$. Then we have
	\begin{equation}\label{v}
		\begin{aligned}
			\frac{dV_2(\theta_2(t),R(t))}{dt}
			=\tilde\theta_2^{\tau}(t)f_2(\theta(t))+\frac{1}{2}\tilde\theta_2^{\tau}(t)[G-R(t)]\tilde\theta_2(t).
		\end{aligned}	
	\end{equation}
	We now analyze the first term on the RHS of (\ref{v}). 
	Denote $m^*_{k+1}=y_{k+1}-\theta_1^{*\tau}\phi_k$, then by (\ref{model1}), we have $m^*_{k+1}=z_k\theta_2^{*\tau}\phi_k+w_{k+1}$. Moreover, by (\ref{ode_2}), we obtain
	\begin{equation}\label{f22}
		f_2(\theta(t))=h_1(\theta(t))+h_2(\theta(t)),
	\end{equation}
	where $h_1(\theta(t))=\lim\limits_{k\to\infty}\mathbb{E}\big[{\phi}_{k}\big(m^*_{k+1}\tanh\big(\frac{\theta_{2}^{\tau}(t)\phi_km^*_{k+1}}{\sigma^2}\big)-\theta_{2}^{\tau}(t){\phi}_{k}\big)\big]$ and $h_2(\theta(t))=f_2(\theta(t))-h_1(\theta(t))$.
	Since the analysis of $\tilde\theta_2^{\tau}(t)h_1(\theta(t))$ is the same as that of $\tilde\beta^{\tau}(t)f(\beta(t))$ in (\ref{rdv}), from (\ref{dvbeta}) in Theorem \ref{odetheorem1}, we have
	\begin{equation}\label{vv2}
		\tilde\theta_2^{\tau}(t)h_1(\theta(t))\leq-C\|\tilde\theta_2(t)\|^2+Cb_2^{*2}(t)+1.2\|\tilde\theta_2(t)\||b_2^*(t)|,
	\end{equation}
	where $C$ is a positive constant defined in (\ref{dvf4}) and the term $b_2^*(t)$ will tend to zero.	
	Let $\tilde\theta_{k,1}=\theta_{k,1}-\theta_1^*$. By mean-value theorem, Schwarz inequality and the fact that $|\tanh(z)+z\tanh'(z)|\leq1.2$, we obtain that
	\begin{equation}\label{vv3}
		\begin{aligned}
			&\tilde\theta_2^{\tau}(t)h_2(\theta(t))=\tilde\theta_2^{\tau}(t)\left(f_2(\theta(t))-h_1(\theta(t))\right)\\
			=&\lim\limits_{k\to\infty}\mathbb{E}\big[\tilde\theta_2^{\tau}(t){\phi}_{k}\phi_k^{\tau}\tilde\theta_{k,1}\big[\tanh(\bar l_k(t))+\bar l_k(t)\tanh'(\bar l_k(t))\big]\big]\\
			\leq& c_2\|\tilde\theta_2(t)\|\lim\limits_{k\to\infty}\big(\mathbb{E}\|\tilde\theta_{k,1}\|^2\big)^{1/2},
		\end{aligned}
	\end{equation}
	where $\bar l_k(t)=\frac{\theta_{2}^{\tau}(t)\phi_k(y_{k+1}-\zeta_k(t))}{\sigma^2}$, $\zeta_k(t)$ is between $\theta_1^{*\tau}\phi_k$ and $\theta_1^{\tau}(t)\phi_k$, and $c_2=1.2\sqrt{\mathbb{E}\|\phi\|^4}$.		
	Then by (\ref{v})-(\ref{vv3}) and $G=c_0I$, we obtain
	\begin{equation}
	\frac{dV_2(\theta_2(t),R(t))}{dt}\leq-Cc_0^{-1}V_2(\theta_2(t),R(t))+r(t),	
	\end{equation}
	where $r(t) =Cb_2^{*2}(t)+1.2\|\tilde\theta_2(t)\||b_2^*(t)|+c_2\|\tilde\theta_2(t)\|\lim\limits_{k\to\infty}(\mathbb{E}\|\tilde\theta_{k,1}\|^2)^{1/2}+(\frac{1}{2}-Cc_0^{-1})\tilde\theta_2^{\tau}(t)(G-R(t))\tilde\theta_2(t)$. By (\ref{R11}), (\ref{theta1}) and the fact $b_2^*(t)\to0$, we have $\lim\limits_{t\to\infty}r(t)=0$.
	Thus by Lemma \ref{vvv}, it follows that $\lim\limits_{t\to\infty}V_2(\theta_2(t),R(t))=0$, and by the positiveness-definite property of $R(t)$ from (\ref{R11}), we have $\lim\limits_{t\to\infty}\theta_2(t)=\theta_2^*$, and the assertion (\ref{fact}) (replacing $\beta^*$ with $\theta_2^*$) holds.
	
	Then by Proposition \ref{ljungtheorem}, we have
	\begin{equation}\label{theta2}
		\lim\limits_{k\to\infty}\theta_{k,2}=\theta_2^*,\ \hbox{or}\ \lim\limits_{k\to\infty}\theta_{k,2}=-\theta_2^*,\ \hbox{a.s.}
	\end{equation}		
	Thus, the results of Theorem \ref{odetheorem2} can be obtained.
	$\hfill\blacksquare$
	
	\noindent\hspace{1em}{\textbf{\itshape Proof of Theorem \ref{odetheorem4}:}}
	The proof is similar to the way used in Theorem \ref{dcp}, which is omitted.
	$\hfill\blacksquare$
	\section{Simulation Results}\label{simu}
	
	In this section, we conduct simulations for the general asymmetric MLR problem to verify the effectiveness of our algorithm.
	Consider the data $\{\phi_k,y_{k+1}\}_{k=1}^{\infty}$ are generated by the following dynamical model:
	$$\begin{aligned}
		y_{k+1}&=z_{k}\beta_1^{*\tau}\phi_k+(1-z_{k})\beta_2^{*\tau}\phi_k+w_{k+1},\\
		\phi_{k+1}&=0.5\phi_k+e_{k+1},		
	\end{aligned}
	$$
	where $\phi_k\in\mathbb{R}^{3}$, $\beta_1^*=[1~15~13]^{\tau}$, $\beta_2^*=[-10~-11~-12]^{\tau}$, $z_k\stackrel{\text{i.i.d}}{\sim}P(z_k=0)=P(z_k=1)=0.5$,  $e_{k+1}\stackrel{\text{i.i.d}}{\sim} \mathcal{N}(0,I_3)$ and $w_{k+1}\stackrel{\text{i.i.d}}{\sim}\mathcal{N}(0,1)$.
	It is clear that the regressor $\{\phi_k\}$ is dependent, and all assumptions in Theorem \ref{odetheorem2} are satisfied.
	
	Firstly, we conduct Algorithm \ref{alg2} to estimate the unknown parameters $\beta_1^{*}$ and $\beta_2^{*}$, where the initial values are chosen as $\theta_{0,1}=[15~20~100]^{\tau}$, $\theta_{0,2}=[-42~-35~-30]^{\tau}$ and $P_0=I_3$.	
	The estimation error is defined by $\tilde\beta_{k,i}=\beta_{k,i}-\beta_{\mathcal{\bar I}_{k,i}}^{*}, (i=1,2)$ with $\mathcal{\bar I}_{k,i}=\arg\min_{j=1,2}\{\|\beta_{k,i}-\beta_{j}^{*}\|\}$.  
	From Fig.\ref{fig1}, one can see that estimation errors $\tilde\beta_{k,i}(i=1,2)$ tend to zero along the time $k$, and the within-cluster error (\ref{wce1}) also tends to a small value along the time $k$.
	Moreover, we note that $\mathcal{\bar I}_{k,i}, (i=1,2)$ are convergent, i.e., $(\beta_{k,1},\beta_{k,2})$ will converge to a limit point belonging to the set $\{(\beta_1^*,\beta_2^*),(\beta_2^*,\beta_1^*)\}$, which demonstrates the effectiveness of our algorithm.
	\begin{figure}[H]
		\centering
		\subfigure{\includegraphics[width=0.49\hsize]{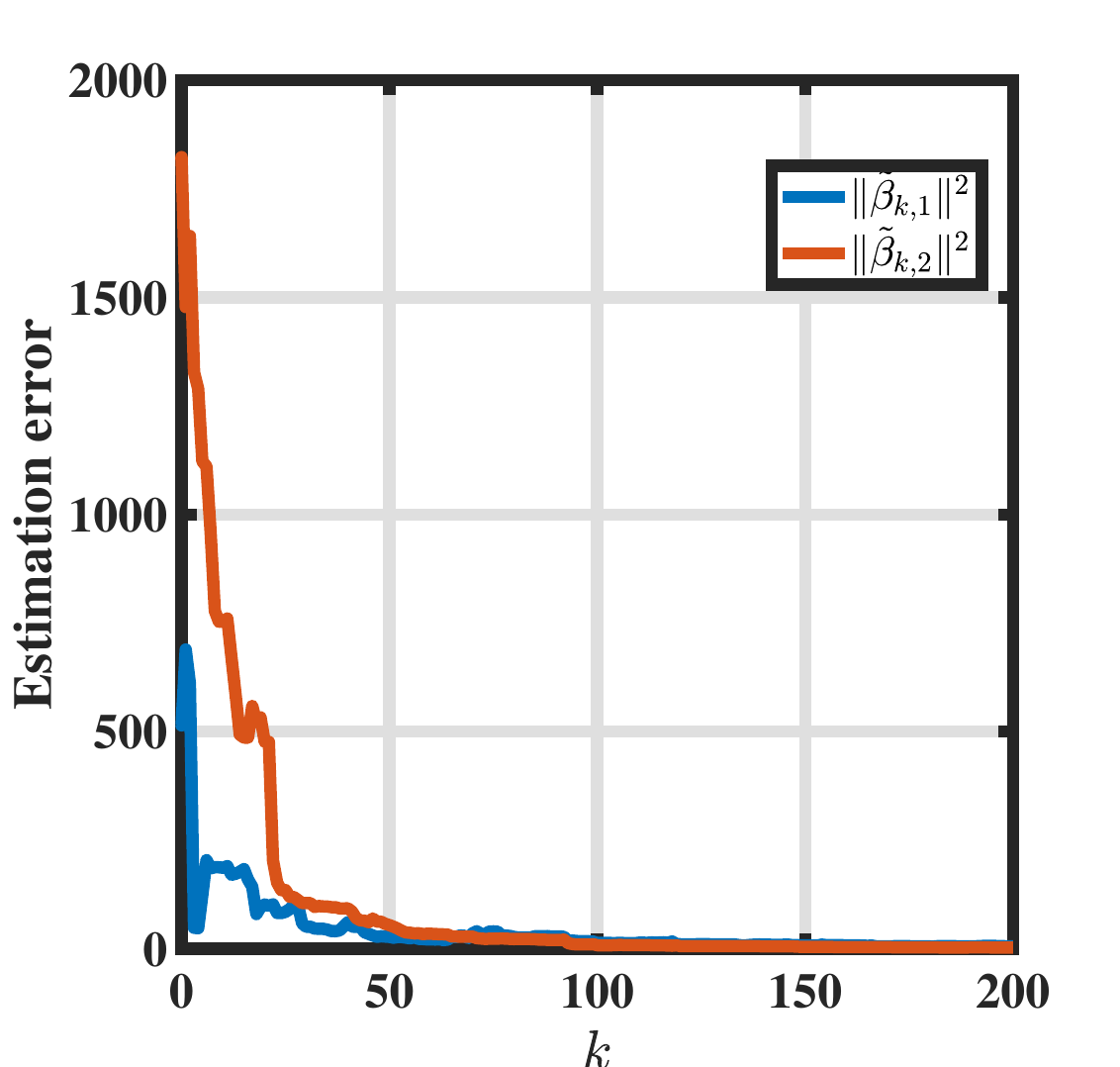}}
		\subfigure{\includegraphics[width=0.49\hsize]{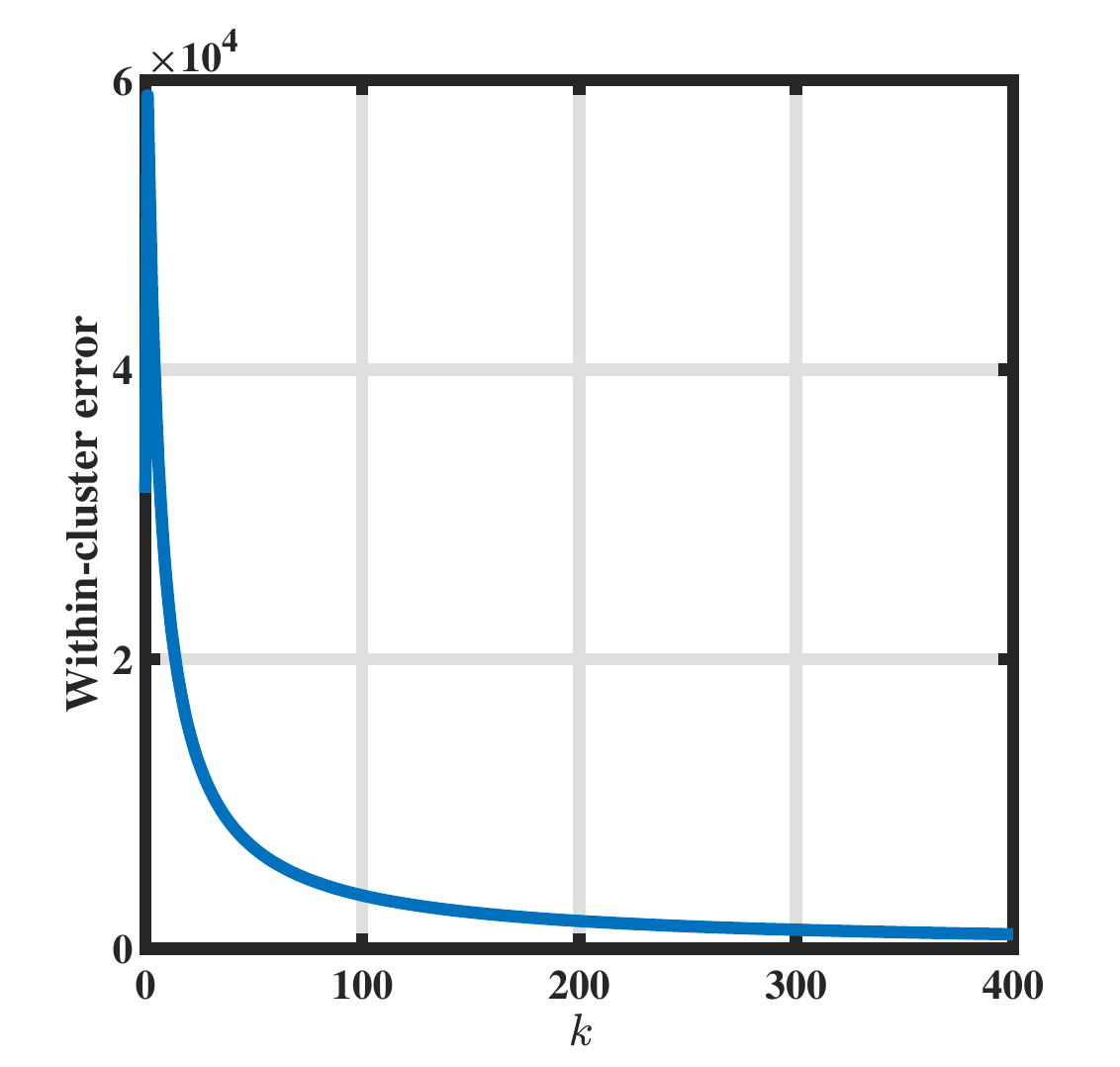}}
		\caption{Estimation error and clustering performance under Algorithm \ref{alg2}.}
		\label{fig1}
	\end{figure}
	Secondly, we compare the performance of Algorithm \ref{alg2} with the population EM algorithm, which is employed in most investigations for the MLR problem. The population EM with the finite number of samples \cite{balakrishnan2017statistical} is executed as follows:
	$$
	\begin{aligned}
		&\text{E-step:}\\
		&~~\alpha_{k,t}^i=\frac{\exp\left(-\frac{(y_{k+1}-\beta_{t,i}^{\tau}\phi_k)^2}{2\sigma^2}\right)}{\exp\left(-\frac{(y_{k+1}-\beta_{t,1}^{\tau}\phi_k)^2}{2\sigma^2}\right)+\exp\left(-\frac{(y_{k+1}-\beta_{t,2}^{\tau}\phi_k)^2}{2\sigma^2}\right)},\\
		&\text{M-step:}\\
		&~~\beta_{t+1,i}=\bigg(\frac{1}{n}\sum\limits_{k=1}^n\alpha_{k,t}^i\phi_k\phi_k^{\tau}\bigg)^{-1}\bigg(\frac{1}{n}\sum\limits_{k=1}^n\alpha_{k,t}^i\phi_k y_{k+1}\bigg),\\
	\end{aligned}
	$$  
	where $i=1,2$, $\beta_{t,i}$ is the estimate of parameter $\beta_i^*$ at the iteration step $t$, $\alpha_{k,t}^i$ is the conditional probability of $\{\phi_k,y_{k+1}\}$ belongs to $i$-th sub-model based the current estimate $\beta_{t,i}$, $\{\phi_k, y_{k+1}\}_{k=1}^n$ are the collection of $n$ samples, and $n$ is often chosen sufficiently large to approximate the population EM.
	
	In our simulation of the population EM algorithm, we choose the number of samples $n$ to be $5000$ and the total iteration step $T$ to be $20$.
	Both Algorithm \ref{alg2} and the population EM algorithm are initialized with the same values.
	Specifically, for $i=1,2$ and $j=1,2,3$, $\beta_{0,i}^{j}$ is sampled from a uniform distribution $U(\beta_{i}^{*j}-\kappa,\beta_{i}^{*j}+\kappa)$, where $\beta_{0,i}^{j}$ and $\beta_{i}^{*j}$ are the $j$-th element of $\beta_{0,i}$ and $\beta_{i}^{*}$, receptively.
	It can be seen that the parameter $\kappa$ measures the region of the initial values.	
	For each simulation with a given $\kappa$ in $[0,20]$, we run 500 independent realizations and plot the convergence probability of the algorithms, i.e., the proportion of 500 simulations that converge to true parameters, about the parameter $\kappa$ in Fig.\ref{fig2}.	
	From the simulation results, we see that the convergence probability of our algorithm does not depend on the parameter $\kappa$, while the convergence probability of the population EM algorithm will decrease to zero as $\kappa$ increases.
	These results show that the estimates generated by Algorithm \ref{alg2} will converge to the set $\left\{\left(\beta_1^*,\beta_2^*\right),\left(\beta_2^*,\beta_1^*\right)\right\}$ for any non-zero initial values, while the population EM algorithm does not. 
	\begin{figure}[H]
		\centering
		\includegraphics[width=0.9
		\hsize]{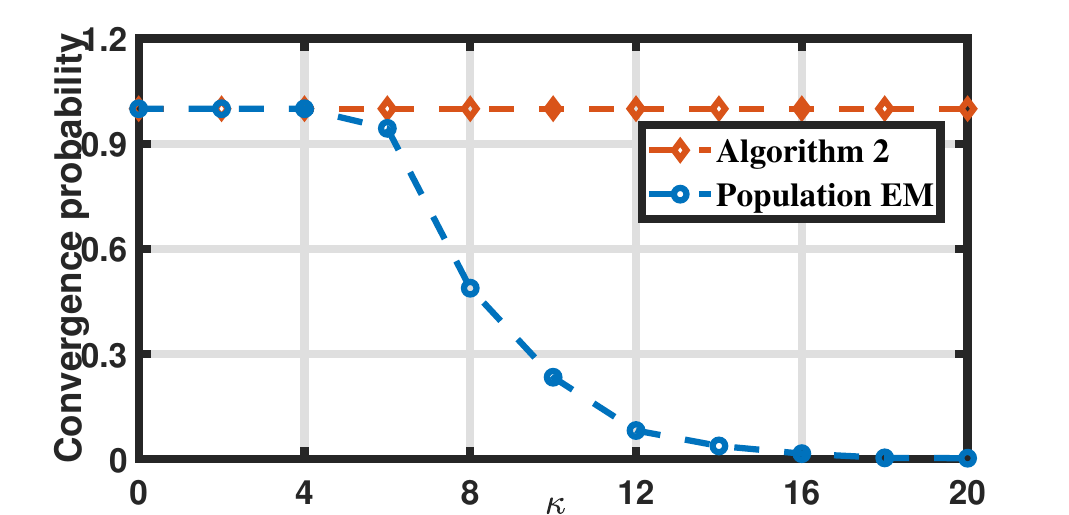}\label{fig}
		\caption{Comparison of convergence probabilities of Algorithm \ref{alg2} and the population EM algorithm.}
		\label{fig2}
	\end{figure}
	
\section{Conclusion}\label{conclusion}	
In this paper, we have investigated the online identification and data clustering problems of two classes of MLRs. 
For the symmetric MLR problem, we have proposed an online identification algorithm based on the EM principle, and established the global convergence result of the algorithm without imposing i.i.d data assumption for the first time. 
For the general asymmetric MLR problem, we have provided a two-step online identification algorithm by separately estimating two parts of the model, and also obtain the corresponding global convergence result. 
To the best of our knowledge, there is no theoretical result for the global convergence of the general asymmetric MLR model, even under i.i.d Gaussian assumptions on the regressors.
Based on the estimates of unknown parameters, we have shown that the performance of data clustering is asymptotically the same as the case where the true parameters are known. 
For further investigation, many interesting problems need to be studied, e.g., how to relax the asymptotically stationary and ergodic assumptions on the regressors, how to establish global convergence of unbalanced general asymmetric MLR model, and how to analyze mixed nonlinear regressions.

\appendices
\section{}\label{app1}
\noindent\hspace{1em}{\itshape Proof of (\ref{them4.23}):}
By Assumption \ref{asm4} and the property of Gaussian distribution, we have that $\beta^{*\tau}\phi\sim\mathcal{N}(0,\beta^{*\tau}\Sigma\beta^*)$. By simple calculations, we have
$
\mathbb{E}\big[\exp\big(-\frac{(\beta^{*\tau}\phi)^2}{2\sigma^2}\big)\big]=\frac{\sigma}{\sqrt{\sigma^2+\beta^{*\tau}\Sigma\beta^*}}.
$
Thus (\ref{them4.23}) holds. 	
$\hfill\blacksquare$

\noindent\hspace{1em}{\itshape Proof of Lemma \ref{dk}:}
For the first statement, by Assumptions \ref{asm2}-\ref{asm4}, from the stationary and mutual independent properties of $z_k$, $\phi_k$ and $w_{k+1}$, it follows that $\{d_{k}\}$ approaches a stationary process asymptotically with bounded fourth moment.
In addition, $\{d_k\}$ is ergodic since each element is ergodic.	
For the second statement of this lemma, it is not difficult to obtain that there exists a sequence of shift transformations $\{T_k\}$ such that $\lim\limits_{k\to\infty}T_k=T$ and $T$ is measure-preserving.
Then following the proof-line (accompanied with a measure-preserving transformation) for the result that any measurable function of a stationary ergodic stochastic process is stationary and ergodic \cite{stout1974almost}, we obtain the desired result.
$\hfill\blacksquare$

\noindent\hspace{1em}{\itshape Proof of Lemma \ref{tanh1}:}
From the facts that $z\tanh(z)$ is an even function in $z$ and $\mathbb{E}_{y\sim\frac{1}{2}\mathcal{N}(a,\sigma^2)+\frac{1}{2}\mathcal{N}(-a,\sigma^2)}\left[y\tanh\left(\frac{ay}{\sigma^2}\right)\right]=a$, we can easily obtain the desired results.
$\hfill\blacksquare$

\noindent\hspace{1em}{\itshape Proof of Lemma \ref{le22}:}
From the facts that $\tanh(z)$ and $z\tanh'(z)$ are bounded for $z\in\mathbb{R}$, it follows that $\frac{\partial f(c,x,w)}{\partial x}$ is bounded. Then we have that
$$\begin{aligned}
	&\frac{dF(c,x)}{dx}=\int_{-\infty}^{\infty}\frac{\partial f(c,x,w)}{\partial x}\exp\left(-\frac{w^2}{2\sigma^2}\right)dw\\
	=&\int_{-\infty}^{\infty}\Big[\tanh\left(\frac{c(w+x)}{\sigma^2}\right)-\tanh\left(\frac{c(w-x)}{\sigma^2}\right)\Big]\\
	&\exp\left(-\frac{w^2}{2\sigma^2}\right)dw+\int_{-\infty}^{\infty}\bigg[\frac{c(w+x)}{\sigma^2}\tanh'\left(\frac{c(w+x)}{\sigma^2}\right)\\
	&~~~~~-\frac{c(w-x)}{\sigma^2}\tanh'\left(\frac{c(w-x)}{\sigma^2}\right)\bigg]\exp\left(-\frac{w^2}{2\sigma^2}\right)dw\\
	\buildrel \Delta \over=&L_1(c,x)+L_2(c,x).			
\end{aligned}$$
Since $\tanh(z)$ is odd and increasing, we have for $x>0$, $L_1(c,x)=2\int_{0}^{\infty}\big[\tanh\big(\frac{c(w+x)}{\sigma^2}\big)-\tanh\big(\frac{c(w-x)}{\sigma^2}\big)\big]\exp\big(-\frac{w^2}{2\sigma^2}\big)dw>0.$
Moreover, for $x>0$ and $w>0$, we have $\exp\big(-\frac{(w-x)^2}{2\sigma^2}\big)>\exp\big(-\frac{(w+x)^2}{2\sigma^2}\big).$ By this inequality and the fact $z\tanh'(z)$ is an odd function in $z$, it follows that
$	L_2(c,x)=2\int_{0}^{\infty}\frac{cw}{\sigma^2}\tanh'\big(\frac{cw}{\sigma^2}\big)\big[\exp\big(-\frac{(w-x)^2}{2\sigma^2}\big)-\exp\big(-\frac{(w+x)^2}{2\sigma^2}\big)\big]dw>0	
$.
Lemma \ref{le22} thus be proven. $\hfill\blacksquare$

\noindent\hspace{1em}{\itshape Proof of (\ref{db2}):}
We just provide the proof for the first part of (\ref{db2}), i.e., $\frac{db_2^*(t)}{dt}\leq 0$ if $0\leq b_2^*(t)<\|\beta^*\|$, and the second part can be obtained by following a similar way. 
By (\ref{R11}), (\ref{h22}) and (\ref{varep}), we have $\|v_2(t)(R^{-1}(t)-c_0^{-1}I)f(\beta(t))\|\leq e^{-t}\frac{1}{3}c_0^{-1}(p_1+c_0\bar b)\buildrel \Delta \over=\bar c_2e^{-t}$.
Denote $\bar S(\beta(t))=c_0^{-1}h_2(\beta(t))-\bar c_2 e^{-t}$.
By (\ref{b2*t}), we have $\frac{db_2^{*}(t)}{dt}\leq-\frac{b_1^{*}(t)}{b_1(t)}\bar S(\beta(t)).$
To prove $\frac{db_2^{*}(t)}{dt}\leq0$, it suffices to show that $\bar S(\beta(t))\geq0$ for $t\geq0$.
From Lemma \ref{le22} and its proof, it is clear that there exists a positive constant $\bar m_1$ such that for any $x>0$, $c>0$, we have $0<\frac{dF(c,x)}{dx}\leq\bar m_1$. 
Then by (\ref{h2}), we have 
\begin{equation}\label{hh}
	\begin{aligned}
		&\frac{dh_2(\beta(t))}{db_2^*(t)}\leq\frac{2}{\sqrt{2\pi}\sigma}\int_{a_1(t)>0}\int_{a_2(t)>0}a_2^2(t)\\
		&\sup_{x(t)}\bigg|\frac{dF\big(c(t),x(t)\big)}{dx(t)}\bigg| \tilde g(a_1(t),a_2(t))da_2(t)da_1(t)\leq\bar c_1,
	\end{aligned}	
\end{equation}
where $\bar c_1=\frac{c_0\bar m_1}{2\sqrt{2\pi}\sigma}$.
Besides, we have $|\frac{dh_2(\beta(t))}{dt}|=|\frac{dh_2(\beta(t))}{db_2^*(t)}\times\frac{db_2^*(t)}{dt}|\leq\bar c_1\frac{\|\beta^*\|}{\underline b}|\bar S(\beta(t))|$.
Denote $\bar c_3=c_0^{-1}\bar c_1\frac{\|\beta^*\|}{\underline b}$, we have
$$
\frac{d\bar S(\beta(t))}{dt}=c_0^{-1}\frac{dh_2(\beta(t))}{dt}+\bar c_2 e^{-t}\geq-\bar c_3|\bar S(\beta(t))|+\bar c_2 e^{-t}.$$
Thus we obtain 
\begin{equation}\label{www}
	\frac{d\bar S(\beta(t))}{dt}\big|_{\bar S(\beta(t))=0}\geq\bar c_2 e^{-t}\geq0.
\end{equation}
By (\ref{varep}), we have $\bar S(\beta(0))>0$. Then by (\ref{www}), it is clear that $\bar S(\beta(t))\geq0$ for all $t\geq 0$. $\hfill\blacksquare$

\noindent\hspace{1em}{\itshape Proof of Lemma \ref{regular1}:}
The lemma can be obtained similarly to Lemma \ref{regular}, and proof details are omitted here. $\hfill\blacksquare$

\noindent\hspace{1em}{\itshape Proof of $\beta_k\nrightarrow0$:}
We prove $\beta_k\nrightarrow0$ by contradiction.
Firstly, for $\bar x=[\bar\beta^{\tau}~\text{vec}^{\tau}(\bar R)]^{\tau}\in D_{A}$, from (\ref{p1}) and (\ref{db1}), we have that if $0<\|\bar \beta\|< b_l$ and $\|\bar R-c_0I\|\leq\frac{1}{4}c_0$, then there exists a positive constant $\alpha$ such that $\bar\beta^{\tau}\bar R^{-1}f(\bar \beta)>\|\bar \beta\|\alpha>0.$ 
For any integer $n>0$ and any $\Delta>0$, we define $m(n,\Delta)=\max\{m:\sum\nolimits_{i=n}^{m}\frac{1}{i}\leq\Delta\}$.
If $\beta_k\to0$, then by (\ref{sign}), for sufficiently large $n$, there exist positive constants $\delta_1$, $\delta_2$ and $\delta_3$ such that $\delta_1<\|\beta_n\|<\delta_2<b_l$ and $\|\beta_m-\beta_n\|<\delta_3$ for $m\in[n,m(n,\Delta)]$. It is clear that $\delta_3$ can be sufficiently small when $\Delta_t$ is sufficiently small. 
Besides, by (\ref{RR}), we also have $\|R_n-c_0I\|\leq\frac{1}{4}c_0$ for large $n$. 
Thus we have $\beta_n^{\tau}R_n^{-1}f(\beta_n)>\delta_1\alpha$ for large $n$.
Secondly, from (\ref{recuralg}) and (\ref{ode}), we have $\beta_{m(n,\Delta)+1}=\beta_n+\sum\nolimits_{i=n}^{m(n,\Delta)}\frac{1}{i}Q_1(x_i,\phi_i,y_{i+1})=\beta_n+\sum\nolimits_{i=n}^{m(n,\Delta)}\frac{1}{i}R_n^{-1}f(\beta_n)+L_1(n,\Delta,x_n)+L_2(n,\Delta,x_n)$,	
where $L_1(n,\Delta,x_n)=\sum\nolimits_{i=n}^{m(n,\Delta)}\frac{1}{i}[Q_1(x_n,\phi_i,y_{i+1})- R_n^{-1}f(\beta_n)]$ and $L_2(n,\Delta,x_n)=\sum\nolimits_{i=n}^{m(n,\Delta)}\frac{1}{i}[Q_1(x_i,\phi_i,y_{i+1})-Q_1(x_n,\phi_i,y_{i+1})]$.
By Lemma \ref{regular} and the boundedness of $x_n$ in (\ref{RR}) and (\ref{bound}), we have $L_1(n,\Delta,x_n)\to0$ when $n\to\infty$ and $\Delta$ is small, and $|L_2(n,\Delta,x_n)|\leq\mathcal{R}_1\Delta\max_{m\in[n,m(n,\Delta_t)]}\{|x_m-x_n|\}\leq\mathcal{R}_1\delta_3\Delta$. Thus it is evident that $\|\beta_{m(n,\Delta)+1}\|\geq\|\beta_n\|+\frac{1}{2}\delta_1\alpha\Delta-o(1)-\mathcal{R}_1\delta_3\Delta>\|\beta_n\|>\delta_1$ for sufficiently small $\Delta$ and large $n$. 
Thus there exists a subsequence $\|\beta_{n_k}\|$ with a positive lower bound $\frac{\delta_1}{2}$, which contradicts with $\beta_k\to0$. This completes the proof.$\hfill\blacksquare$

\bibliographystyle{IEEEtran}
\bibliography{references}

\end{document}